\theoremstyle{thmstyleone}%
\newtheorem{theorem}{Theorem}
\theoremstyle{thmstyletwo}%
\newtheorem{example}{Example}%
\theoremstyle{thmstylethree}%
\newtheorem{definition}{Definition}%
\begin{document}
\newtheorem{apdthm}{Theorem}
\newtheorem{lemma}{Lemma}

\newcommand{\viatraStyle}[1]{\lstinline[style=viatrabig]@#1@}



\newcommand{\dslStyle}[1]{\texttt{\color{emphColor}{#1}}}
\newcommand{\dslVar}[1]{\mathit{#1}}
\newcommand{\dslInstanceOf}[2]{\dslStyle{#1}(#2)}
\newcommand{\dslInReference}[3]{\dslStyle{#1}(#2,#3)}

\newcommand{\term}{}
\newcommand{\termConstantInd}[2]{{#1}_{#2}}
\newcommand{\termConstant}[1]{{#1}}

\newcommand{\termLiteralInd}[3]{{#1}_{#3}(\termConstant{#2})}
\newcommand{\termLiteral}[2]{\termLiteralInd{#1}{#2}{}}

\newcommand{\symbolClass}[2]{\dslStyle{#1}_{#2}}
\newcommand{\termClassInd}[3]{\symbolClass{#1}{#3}(\termConstant{#2})}
\newcommand{\termClass}[2]{\termClassInd{#1}{#2}{}}

\newcommand{\symbolRef}[2]{\dslStyle{#1}_{#2}}
\newcommand{\termReferenceInd}[4]{\symbolRef{#1}{#4}(\termConstant{#2},\termConstant{#3})}
\newcommand{\termReference}[3]{\termReferenceInd{#1}{#2}{#3}{}}

\newcommand{\symbolTCRef}[2]{\dslStyle{#1}_{#2}^+}
\newcommand{\termTCReferenceInd}[4]{\symbolTCRef{#1}{#4}(\termConstant{#2},\termConstant{#3})}
\newcommand{\termTCReference}[3]{\termTCReferenceInd{#1}{#2}{#3}{}}

\newcommand{\symbolHyper}[2]{\dslStyle{#1}_{#2}}
\newcommand{\termHyperedgeInd}[4]{\symbolHyper{#1}{#4}(\termConstant{#2}, \ldots, \termConstant{#3})}
\newcommand{\termHyperedge}[3]{\termHyperedgeInd{#1}{#2}{#3}{}}

\newcommand{\dslExistSymbol}[0]{{\color{emphColor}{\varepsilon}}}
\newcommand{\dslExist}[1]{\dslExistSymbol(#1)}
\newcommand{\termExist}[1]{\dslExist{#1}}

\newcommand{\dslEqualSymbol}[0]{{\color{emphColor}{\sim}}}
\newcommand{\dslEquals}[2]{#1\mathrel{\dslEqualSymbol}#2}
\newcommand{\termEquals}[2]{\dslEquals{#1}{#2}}

\newcommand{\termDistinct}[1]{\mathit{dist}(#1)}


\newcommand{\dslStyleP}[1]{\texttt{#1}}
\newcommand{\dslInPattern}[2]{\dslStyleP{#1}(#2)}
\newcommand{\patternDef}[0]{:=}


\newcommand{\logicModelTwo}[2]{\langle {#1}, {#2} \rangle}
\newcommand{\logicModel}[3]{\langle {#1}, {#2}, {#3} \rangle}
\newcommand{\arity}[1]{\alpha(#1)}

\newcommand{\dslObjectSet}[0]{\mathcal{O}}
\newcommand{\dslObjectSetI}[1]{\dslObjectSet_{#1}}
\newcommand{\dslIdSet}[0]{\mathit{Id}}
\newcommand{\dslTheory}[0]{\mathcal{T}}

\newcommand{\logicStyle}[1]{\texttt{\color{logsemColor}{#1}}}

\newcommand{\kw}[1]{\textup{\texttt{\color{magenta}#1}}}
\newcommand{\true}[0]{\kw{true}}
\newcommand{\false}[0]{\kw{false}}
\newcommand{\unk}[0]{\kw{unknown}}
\newcommand{\err}[0]{\kw{error}}
\newcommand*{\Unk}{\kw{u}}
\newcommand*{\True}{\kw{t}}
\newcommand*{\False}{\kw{f}}
\newcommand*{\Err}{\kw{e}}

\newcommand{\numStyle}[1]{\texttt{\color{numsemColor}{#1}}}
\newcommand{\unbound}{\textup{\numStyle{?}}}

\newcommand{\refinel}[2]{#1 \mathrel{\logsemStyle{\sqsubseteq_L}} #2}
\newcommand{\refinen}[2]{#1 \mathrel{\numsemStyle{\sqsubseteq_N}} #2}
\newcommand{\refine}[2]{#1 \sqsubseteq #2}

\newcommand{\logsemStyle}[1]{{\color{logsemColor}{#1}}}

\newcommand{\semantics}[3]{{\llbracket #1 \rrbracket}^{#2}_{#3}}
\newcommand{\interpretationFun}[1]{\mathcal{I}_{#1}}
\newcommand{\interpretationDef}[4]{\interpretationFun{#1}(#2): {#3} \rightarrow {#4}}
\newcommand{\interpretationApp}[3]{\mathcal{I}_{#1}(#2)(#3)}
\newcommand{\interpretationConst}[2]{\mathcal{I}_{#1}(#2)}

\newcommand{\numsemStyle}[1]{{\color{numsemColor}{#1}}}

\newcommand{\numsems}[3]{{{\color{numsemColor}{\llparenthesis}} #1 {\color{numsemColor}{\rrparenthesis}}}^{#2}_{#3}}
\newcommand{\interpretationNumFun}[1]{\mathcal{V}_{#1}}
\newcommand{\interpretationNumDef}[4]{\interpretationNumFun{#1}(#2): {#3} \rightarrow {#4}}
\newcommand{\interpretationNumApp}[3]{\mathcal{V}_{#1}(#2)(#3)}
\newcommand{\interpretationNumConst}[2]{\mathcal{V}_{#1}(#2)}


\newcommand{\Left}[2]{\mathbf{Left}(#1, #2)}
\newcommand{\Right}[2]{\mathbf{Right}(#1, #2)}
\newcommand{\Behind}[2]{\mathbf{Behind}(#1, #2)}
\newcommand{\Front}[2]{\mathbf{Front}(#1, #2)}
\newcommand{\Above}[2]{\mathbf{Above}(#1, #2)}
\newcommand{\Below}[2]{\mathbf{Below}(#1, #2)}
\newcommand{\Shape}[2]{\mathbf{Shape}(#1, #2)}
\newcommand{\smallSize}[1]{\mathbf{Size}(#1, \textsf{"Small"})}
\newcommand{\largeSize}[1]{\mathbf{Size}(#1, \textsf{"Large"})}
\newcommand{\cubeShape}[1]{\mathbf{Shape}(#1, \textsf{"Cube"})}
\newcommand{\bottom}[1]{\mathbf{Bottom}(#1)}
\newcommand{\Color}[2]{\mathbf{Color}(#1, #2)}
\newcommand{\ColorOf}[1]{\mathbf{Color}(#1)}
\newcommand{\Constraint}[1]{\textbf{\textit{C#1}}}
\newcommand{\literal}[1]{\textsf{"#1"}}
\newcommand{\ConstraintSet}{\mathbf{\Phi}}
\newcommand{\MaterialOf}[1]{\mathbf{Material}(#1)}
\newcommand{\ShapeOf}[1]{\mathbf{Shape}(#1)}
\newcommand{\SizeOf}[1]{\mathbf{Size}(#1)}

\newcommand{\tripleSet}[1]{\mathbf{T}_{#1}}
\newcommand{\sceneSet}[1]{\mathbf{S}_{#1}}

\newcommand{\AttrValSet}{\mathbf{V}_a}
\newcommand{\AttrSet}{\mathbf{A}}
\newcommand{\RelSet}{\mathbf{R}}
\newcommand{\NodeSet}{\mathbf{N}}
\newcommand{\EdgeSet}{\mathbf{E}}
\newcommand{\AttrMap}{\textit{attr}}

\newcommand{\edgeFOL}[3]{\mathbf{#3}(#1, #2)}
\newcommand{\attrFOL}[3]{\mathbf{#3}(#1, #2)}
\newcommand{\attrFunc}[2]{\mathbf{#1}(#2)}

\newcommand{\iso}[2]{#1 \cong #2}
\newcommand{\edge}[3]{\overrightarrow{#1 #2}^{#3}}
\newcommand{\attr}[3]{\overrightarrow{#1 #2}^{#3}}

\newcommand{\argmax}[1]{argmax(#1)}
\newcommand{\perturb}[3]{\mathcal{P}^{#1, #2}_{#3}}
\newcommand{\perturba}[1]{\mathcal{P}_{#1}}
\newcommand{\nodeFeature}[3]{\mathbb{#1}^{(#2, #3)}}

\newcommand{\RQ}[1]{\textbf{RQ#1}}
\newcommand{\ranswer}[2]{%
\medskip
\noindent
\begin{tabularx}{\linewidth}{|X|}\hline
\rquestion{#1}: \emph{#2}\\\hline
\end{tabularx}\medskip}

\newcommand{\absint}[0]{AbsInt}
\newcommand{\localp}[0]{{p_{l}}}
\newcommand{\globalp}[0]{{p_{g}}}
\newcommand{\nodeR}[1]{\mathrm{R}_{\classifier{}}({\graph{}, \perturba{\nodeF{}}}, #1)}
\newcommand{\graphR}[0]{\mathrm{R}_{\classifier{}}(\graph{}, \perturba{\nodeF{}})}
\newcommand{\graphRLower}[0]{\graphR{}_{\le}}
\newcommand{\graphRUp}[0]{\graphR{}_{\ge}}
\newcommand{\graphRobustLower}[2]{\mathrm{R}_{\classifier{}}(\graph{}, \perturb{#1}{#2}{\nodeF{}})_{\le}}
\newcommand{\graphRobustUp}[2]{\mathrm{R}_{\classifier{}}(\graph{}, \perturb{#1}{#2}{\nodeF{}})_{\ge}}

\newcommand{\nodeRC}[1]{\mathrm{R}^{\certifier{}}_{\classifier{}}({\graph{}, \perturba{\nodeF{}}}, #1)}
\newcommand{\nodeRCV}[2]{\mathrm{R}^{#2}_{\classifier{}}({\graph{}, \perturba{\nodeF{}}}, #1)}
\newcommand{\nodeRCP}[3]{\mathrm{R}^{\certifier{}}_{\classifier{}}(\graph{}, \perturb{#2}{#3}{\nodeF{}}, #1)}
\newcommand{\graphRC}[0]{\mathrm{R}^{\certifier{}}_{\classifier{}}(\graph{}, \perturba{\nodeF{}})}
\newcommand{\certifier}[0]{\mathscr{V}}
\newcommand{\nodeVar}[0]{\mathcal{X}}
\newcommand{\operation}[0]{\mathrm{T}}
\newcommand{\op}[1]{\mathrm{#1}}
\newcommand{\poly}[0]{}

\newcommand{\abstractFunc}[0]{\alpha}
\newcommand{\concreteFunc}[0]{\gamma}
\newcommand{\concreteElement}[0]{\set{S}}
\newcommand{\numVariables}[0]{x}
\newcommand{\map}[0]{\set{V}}

\newcommand{\polyLowExp}[1]{Q^{#1}_{\le}}
\newcommand{\polyLowConst}[1]{d^{#1}_{\le}}
\newcommand{\polyUpExp}[1]{Q^{#1}_{\ge}}
\newcommand{\polyUpConst}[1]{d^{#1}_{\ge}}
\newcommand{\diffExp}[1]{D^{#1}_{\lab{}, \lab{}'}}
\newcommand{\diffConst}[1]{h^{#1}_{\lab{}, \lab{}'}}
\newcommand{\diff}[0]{d}

\newcommand{\low}[0]{lo}
\newcommand{\high}[0]{up}
\newcommand{\variable}[0]{\texttt{x}}

\newcommand{\adj}[0]{A}
\newcommand{\nodeF}[0]{X}
\newcommand{\numNodes}[0]{n}
\newcommand{\numFeatures}[0]{m}
\newcommand{\latentFeatures}[0]{H}
\newcommand{\node}[0]{i}
\newcommand{\feature}[0]{j}
\newcommand{\neighbors}[2]{\mathcal{N}^{#1}_{#2}}

\newcommand{\weight}[0]{W}
\newcommand{\bias}[0]{b}
\newcommand{\classifier}[0]{\mathit{cl}}
\newcommand{\lab}[0]{c}
\newcommand{\linear}[1]{\mathrm{Lin}_{#1}}
\newcommand{\gc}[0]{\mathrm{GC}_{\adj}}
\newcommand{\sub}[0]{\mathrm{Sub}}
\newcommand{\relu}[0]{\mathrm{ReLU}}
\newcommand{\numLayers}[0]{z}
\newcommand{\layer}[0]{l}

\newcommand{\identity}[0]{I}

\newcommand{\domain}[1]{\mathbb{#1}}
\newcommand{\binaryDomain}[0]{\domain{B}}
\newcommand{\realDomain}[0]{\domain{R}}
\newcommand{\concreteDomain}[0]{\domain{C}}
\newcommand{\abstractDomain}[0]{\domain{A}}
\newcommand{\intDomain}[0]{\domain{N}}
\newcommand{\expDomain}[0]{\domain{E}}
\newcommand{\mapDomain}[0]{\domain{M}}

\newcommand{\set}[1]{\mathcal{#1}}
\newcommand{\labelSet}[0]{\set{C}}
\newcommand{\operationSet}[0]{\set{T}}

\newcommand{\tuple}[1]{\mathbf{#1}}
\newcommand{\graph}[0]{\tuple{G}}
\newcommand{\absele}[0]{\tuple{a}}
\newcommand{\coef}[0]{\tuple{q}}

\newcommand{\lowBound}[0]{L}
\newcommand{\upBound}[0]{U}

\newcommand{\Dual}[0]{Dual}
\newcommand{\DualF}[0]{Dual-F}
\newcommand{\DualO}[0]{Dual-O}
\newcommand{\Interval}[0]{Interval}
\newcommand{\Ours}[0]{Poly}
\newcommand{\OursM}[0]{Poly-Max}
\newcommand{\OursT}[0]{Poly-TopK}

\newcommand{\bce}[0]{BCE}
\newcommand{\hinge}[0]{RH}

\newcommand{\RMFIX}[0]{\textbf{RM}+\OP}
\newcommand{\PMFIX}[0]{\textbf{PM}+\OP}

\newcommand{\SA}[0]{\textbf{SA}\xspace}
\newcommand{\Con}[0]{\textbf{Con}\xspace}


\newcommand{\fss}[0]{FSS}
\newcommand{\moopt}[0]{MOO}
\newcommand{\mom}[0]{MOM}

\newcommand{\qa}[1]{{\footnotesize\textit{{\textsf{#1}}}}}
\newcommand{\qaTernaryPred}[4]{\qa{#1}(#2, #3, #4)}
\newcommand{\qaBinaryPred}[3]{\qa{#1}(#2, #3)}
\newcommand{\qaUnaryPred}[2]{\qa{#1}(#2)}
\newcommand{\qaPos}[0]{\qa{positional}}
\newcommand{\qaPosPred}[2]{\qaPos(#1, #2)}
\newcommand{\qaLeft}[0]{\qa{left}}
\newcommand{\qaLeftPred}[2]{\qaLeft(#1, #2)}
\newcommand{\qaRight}[0]{\qa{right}}
\newcommand{\qaRightPred}[2]{\qaRight(#1, #2)}
\newcommand{\qaFront}[0]{\qa{ahead}}
\newcommand{\qaFrontPred}[2]{\qaFront(#1, #2)}
\newcommand{\qaBehind}[0]{\qa{behind}}
\newcommand{\qaBehindPred}[2]{\qaBehind(#1, #2)}

\newcommand{\qaDist}[0]{\qa{distance}}
\newcommand{\qaDistPred}[2]{\qaDist(#1, #2)}
\newcommand{\qaClose}[0]{\qa{close}}
\newcommand{\qaClosePred}[2]{\qaClose(#1, #2)}
\newcommand{\qaMed}[0]{\qa{medDist}}
\newcommand{\qaMedPred}[2]{\qaMed(#1, #2)}
\newcommand{\qaFar}[0]{\qa{far}}
\newcommand{\qaFarPred}[2]{\qaFar(#1, #2)}

\newcommand{\qaCanSee}[0]{\qa{canSee}}
\newcommand{\qaCanSeePred}[2]{\qaCanSee(#1, #2)}

\newcommand{\qaNoColl}[0]{\qa{noColl}}
\newcommand{\qaNoCollPred}[2]{\qaNoColl(#1, #2)}

\newcommand{\qaOnRoad}[0]{\qa{onRoad}}
\newcommand{\qaOnRoadPred}[1]{\qaOnRoad(#1, #1)}

\newcommand{\qaCar}[0]{\qa{Car}}
\newcommand{\qaCarPred}[1]{\qaCar(#1)}
\newcommand{\qaPedestrian}[0]{\qa{Pedestrian}}
\newcommand{\qaPedestrianPred}[1]{\qaPedestrian(#1)}



\newcommand{\scenic}{\textsc{Scenic}}
\newcommand{\conType}[1]{\textbf{#1}}

\newcommand{\actorNM}[1]{\vec{\texttt{a}}_#1}
\newcommand{\actor}[1]{$\actorNM{#1}$}

\newcommand{\actorG}[0]{\textcolor{car1}{\actor{G}}}
\newcommand{\actorR}[0]{\textcolor{car2}{\actor{R}}}
\newcommand{\actorBl}[0]{\textcolor{car3}{\actor{B}}}

\newcommand{\actorA}[0]{\textcolor{car2}{\actor{A}}}
\newcommand{\actorB}[0]{\textcolor{gray}{\actor{B}}}

\newcommand{\actorSampleNM}[0]{\actorNM{i}}
\newcommand{\actorSample}[0]{\actor{i}}

\newcommand{\justParam}[1]{\texttt{#1}}
\newcommand{\justParamB}[1]{\textcolor{gray}{\justParam{#1}}}
\newcommand{\param}[2]{\actorNM{#2}.\justParam{#1}}
\newcommand{\paramG}[1]{\textcolor{car1}{\param{#1}{G}}}
\newcommand{\paramR}[1]{\textcolor{car2}{\param{#1}{R}}}
\newcommand{\paramBl}[1]{\textcolor{car3}{\param{#1}{B}}}

\newcommand{\paramA}[1]{\textcolor{car2}{\param{#1}{A}}}
\newcommand{\paramB}[1]{\textcolor{gray}{\param{#1}{B}}}
\newcommand{\paramSample}[1]{\param{#1}{i}}

\newcommand{\actorFunNoMath}[1]{\texttt{o}_#1}
\newcommand{\actorFunNMG}[0]{\textcolor{car1}{\actorFunNoMath{G}}}
\newcommand{\actorFunNMR}[0]{\textcolor{car2}{\actorFunNoMath{R}}}
\newcommand{\actorFunNMBl}[0]{\textcolor{car3}{\actorFunNoMath{B}}}
\newcommand{\actorFunNMA}[0]{\textcolor{car2}{\actorFunNoMath{A}}}
\newcommand{\actorFunNMB}[0]{\textcolor{gray}{\actorFunNoMath{B}}}

\newcommand{\actorFun}[1]{$\actorFunNoMath{#1}$}
\newcommand{\actorFunG}[0]{\textcolor{car1}{\actorFun{G}}}
\newcommand{\actorFunR}[0]{\textcolor{car2}{\actorFun{R}}}
\newcommand{\actorFunBl}[0]{\textcolor{car3}{\actorFun{B}}}

\newcommand{\actorFunA}[0]{\textcolor{car2}{\actorFun{A}}}
\newcommand{\actorFunB}[0]{\textcolor{gray}{\actorFun{B}}}
\newcommand{\actorFunSample}[0]{\actorFun{i}}


\newcommand{\funtolog}[0]{\oldstylenums{f2l}}
\newcommand{\logtofun}[0]{\oldstylenums{l2f}}


\newcommand{\scenTerm}[1]{\emph{#1}}


\newcommand{\experiment}[1]{\underline{\textsc{Exp {#1}}}}
\newcommand{\rquestion}[1]{\textbf{\textsc{RQ}#1}}
\newcommand{\contribution}[1]{\textbf{\textsc{C}#1}}
\newcommand{\contribox}[2]{\boxedVal{\contribution{#1 }}{#2}}

\newcommand{\boxedVal}[2]{%
\noindent
\addvbuffer[0.5\baselineskip]{
\begin{tabularx}{\linewidth}{|X|}
\hline
#1 
#2\\\hline
\end{tabularx}}}

\newcommand{\cmt}[1]{\todo[color=yellow]{#1}}
\newcommand{\tododv}[1]{\todo[color=yellow]{#1}}
\newcommand{\tododvab}[1]{\todo[color=green]{#1}}
\newcommand{\todoSig}[1]{\todo[inline, color=red]{#1}}
\newcommand{\newContent}[1]{\todo[color=yellow]{New: #1}}
\newcommand{\figscale}[0]{.4}


\newcommand{\ite}[3]{#1\mathrel{\textsf{?}}#2 \mathrel{\textsf{\textbf{:}}} #3}

\newcommand{\fctName}[1]{\textbf{#1}}
\newcommand{\trueText}{\textit{true}}

\newcommand{\checkStyle}[1]{\textbf{\texttt{#1}}}
\newcommand{\emf}[1]{\textit{#1}}

\newcommand{\objVal}[1]{\textsf{#1}}
\newcommand{\objSMT}[1]{\textsc{#1}}

\newcommand{\fofaf}{FAF}
\newcommand{\objectPred}[1]{object(\termConstant{#1})}
\newcommand{\object}{object}

\newcommand{\tptpStyle}[1]{\texttt{#1}}
\newcommand{\solver}[1]{{\small\textsf{#1}}}

\newcommand{\phaseName}{\underline{\textsc{Phase}}}
\newcommand{\phase}[1]{\underline{\textsc{Phase {#1}}}}

\newcommand{\allModels}[1]{\mathcal{M}_\mathit{#1}}

\newcommand{\fwd}[0]{\mathit{ref}}
\newcommand{\refinement}[2]{#1\sqsubseteq#2}

\newcommand{\dslInterpretationDefinition}[1]{\mathcal{I}_{#1}}
\newcommand{\dslInstanceOfI}[3]{\interpretation{\dslInstanceOf{#1}{#2}}{#3}}
\newcommand{\dslInReferenceI}[4]{\interpretation{\dslStyle{#1}(#2,#3)}{#4}}

\newcommand{\nonTerminal}[1]{\langle {#1} \rangle}







\newcommand{\yes}{\newmoon}
\newcommand{\maybe}{\LEFTcircle}
\newcommand{\no}{\fullmoon}
\newcommand{\ok}{$\checkmark$\xspace}


\newcommand{\McGillCrest}[2]{\resizebox{#1}{#2}{\includegraphics{McGillcrest}}}

\newcommand{\ie}{{\em i.e.,\ }}
\newcommand{\eg}{{\em e.g.,\ }}
\newcommand{\etc}{{\em ,\ etc.\ }}
\newcommand{\etal}{et al.\xspace}
\newcommand{\wrt}{w.r.t\xspace}
\newcommand*{\phd}{Ph.D.\xspace}

\newcommand{\infigure}[1]{{\textsf{#1}}}

\newcounter{questioncounter}

\newcounter{conCounter}
\newcounter{resultcounter2}
\newcounter{resultcounter3}

\newcommand{\dslDomainSet}[0]{\mathcal{D}}   
\newcommand{\dslAttributeSet}[0]{\mathcal{V}}

\newcommand{\dslDomainSetI}[1]{\dslDomainSet_{#1}}
\newcommand{\dslAttributeSetI}[1]{\dslAttributeSet_{#1}}
\newcommand{\rigorous}{strict}

\newcommand{\vg}{\textbf{VG}}

\newcommand{\exampleautorefname}{Example}
\renewcommand{\algorithmautorefname}{Algorithm}
\newcommand{\definitionautorefname}{Definition}
\newcommand{\lemmaautorefname}{Lemma}
\newcommand{\apdthmautorefname}{Theorem}
\renewcommand{\subsubsectionautorefname}{Section}

\def\sectionautorefname{Section}
\def\subsectionautorefname{Section}









 

\newcommand{\chunk}[2]{%
	\fcolorbox{black}{yellow}{\bfseries\sffamily\scriptsize#1}%
   {$\blacktriangleright$#2$\blacktriangleleft$}%
}

\newcommand{\percy}[1]{\chunk{Percy}{{\textcolor{purple}{\textsl{#1}}}}}

\newcommand{\gunter}[1]{\chunk{Gunter}{{\textcolor{purple}{\textsl{#1}}}}}

\newcommand{\daniel}[1]{\chunk{Daniel}{{\textcolor{purple}{\textsl{#1}}}}}

\title[Certifying Robustness of Graph Convolutional Networks for Node Perturbation]{Certifying Robustness of Graph Convolutional Networks for Node Perturbation with \\Polyhedra Abstract Interpretation}


\author*[1]{\fnm{Boqi} \sur{Chen}}\email{boqi.chen@mail.mcgill.ca}

\author[2]{\fnm{Kristóf} \sur{Marussy}}\email{marussy@mit.bme.hu}

\author[2]{\fnm{Oszkár} \sur{Semeráth}}\email{semerath@mit.bme.hu}

\author[1]{\fnm{Gunter} \sur{Mussbacher}}\email{gunter.mussbacher@mcgill.ca}

\author[1,2,3]{\fnm{Dániel} \sur{Varró}}\email{daniel.varro@liu.se}

\affil[1]{\orgname{McGill University}, \orgaddress{\city{Montreal}, \country{Canada}}}

\affil[2]{\orgname{Budapest University of Technology and Economics}, \orgaddress{\city{Budapest}, \country{Hundary}}}

\affil[3]{\orgname{Linköping Universit}, \orgaddress{\city{Linköping}, \country{Sweden}}}




\abstract{Graph convolutional neural networks (GCNs) are powerful tools for learning graph-based knowledge representations from training data. 
However, they are vulnerable to small perturbations in the input graph, which makes them susceptible to input faults or adversarial attacks. 
This poses a significant problem for GCNs intended to be used in critical applications, which need to provide certifiably robust services even in the presence of adversarial perturbations. 
We propose an improved GCN robustness certification technique for node classification in the presence of node feature perturbations.
We introduce a novel polyhedra-based abstract interpretation approach to tackle specific challenges of graph data and provide tight upper and lower bounds for the robustness of the GCN.
Experiments show that our approach simultaneously improves the tightness of robustness bounds as well as the runtime performance of certification. Moreover, 
our method can be used during training to further improve the robustness of GCNs.}

\keywords{graph neural networks, robustness certification, graph representation learning, node classification}



\maketitle

\section{Introduction}

\noindent\textbf{Context.} Recent advances in graph neural networks, particularly graph convolutional neural networks (GCNs) have significantly expanded their deployment in knowledge-intensive applications \cite{Yu-KBS2020,sun-icus2020,dettmers2018convolutional} where structured semantic data is represented as knowledge graphs. GCNs leverage existing data (e.g., samples of nodes and edges) to infer further insights from the underlying knowledge base.  


A popular graph inference task is \emph{node classification} which aims to classify nodes within graph-based representations. For example, a GCN can be trained to identify predatory conferences within a publication database based on the attributes (e.g., dates and page length) and the connections of nodes (e.g., authorship, papers, and citations).

However, GCNs are susceptible to \textit{adversarial examples} \cite{sun2022adversarial,GNNBook-ch8-gunnemann}. These are carefully selected small perturbations that can change a GCN's output for a specific node. In the context of publication graph analysis, an author could manipulate the classification of their own paper by carefully choosing a publication date, leading the underlying GCN model to fail.

Consequently, ensuring \emph{the robustness of a GCN} is important \cite{rgcn-intro2021}, especially in critical applications \cite{sun-icus2020}. For that purpose, a GCN needs to be ensured to tolerate some (intentionally or accidentally) mislabelled attributes both in training sets or runtime inputs. For example, a GCN for detecting predatory conferences should be robust, so a few changed publication dates should not change its output.

\noindent\textbf{Problem statement.}
\emph{Certifying the robustness of a GCN} \cite{zugner2019certifiable} aims to ensure that no adversarial examples exist (within a designated input range) for a GCN. Without such guarantees, small perturbations could mislead the GCN. In some cases, certification may even refute the GCN robustness by providing an appropriate counterexample.

Unfortunately, computing the exact robustness of even the simplest neural network poses a significant challenge \cite{salzer2021reachability}. As such, existing work largely focuses on estimating the lower or upper limits of GCN robustness \cite{liu2020abstract,zugner2019certifiable}. Yet, these methods can be imprecise or have poor runtime performance, limiting their practical use. 

Abstract interpretation \cite{cousot1977abstract} is a mathematical framework originally developed for program analysis. It has recently been adapted to the certification of regular neural networks with non-relational continuous inputs \cite{singh2019abstract,urban2020perfectly}, leading to enhanced precision and runtime performance in the certification tasks. This technique has also been applied to certify the robustness of GCNs \cite{liu2020abstract}. However, their certification approach solely relies only on the interval abstract domain, which has been shown to be less precise compared to other, more expressive abstract domains \cite{singh2018fast}. Moreover, it fails to capture the relations between node features across different nodes, a critical aspect of the message-passing mechanism in GCNs.

\noindent\textbf{Objectives.}
The main objective of this paper is to improve robustness certification for graph convolutional neural networks. 
For this purpose, we either provide a certificate that formally guarantees the robustness of the GCN or derive a counterexample that demonstrates the GCN's vulnerability within the desired perturbation limit.

\noindent\textbf{Contributions.}
We present a novel framework based on polyhedra abstract interpretation, a more expressive abstract domain, and tailored to certify the robustness of GCN classifiers specified for node classification tasks on a graph. The framework can leverage GPU accelerations to efficiently provide a precise robustness certificate or a potential adversarial example. Moreover, the certification operations are reversible and differentiable, facilitating robust training to improve the robustness of the GCNs. In particular:

\begin{itemize}[noitemsep,topsep=0pt]
    \item We propose a novel certification approach by defining a polyhedra abstract interpretation for graphs to provide an \textit{over-approximation} for the output of a GCN within a given feature perturbation limit (space). 
    \item With the certification approach, we propose a \textit{pair of} novel certifiers that can provide \textit{both lower and upper bound} for GCN robustness by estimating the impact of a node feature on the output. Besides node-level robustness, we also demonstrate how our approach can derive the \textit{collective robustness} of the entire graph.
    \item We propose the \emph{uncertainty region as a new metric} to evaluate the tightness of a pair of GCN certifiers in a \textit{holistic} way using a perturbation space range.
    \item We carry out \emph{experimental evaluation} on three popular node classification datasets to compare our approach with existing certification approaches on certification tightness, runtime performance, and robust training. 
\end{itemize}


\noindent\textbf{Added value.}
Though abstract interpretation is actively used for certifying other neural networks, the unique attributes of graph data in GCNs pose distinct challenges. These challenges arise from the discrete input space and iterative aggregation of data across neighboring nodes, preventing the direct adaptation of existing techniques to GCNs.

Compared to current GCN certification approaches, our approach distinctly improves both the tightness and runtime efficiency of the certification within a given node feature perturbation limit. 
Our polyhedra abstract interpretation approach determines both upper and lower robustness bounds for GCN and uses reversible and differentiable matrix operations to improve scalability. This leads to faster certification for given GCN classifiers and enables GCN certification for larger graphs.  Furthermore, it can also be used for robust training to improve the robustness of GCNs.


\section{Background}
\label{sec:background}

\subsection{Graph neural networks}
\textbf{Node classification} assigns a label (from a set $\labelSet{}$ of predefined labels) to each node in a graph.
Let $\graph{} = (A, X)$  be an attributed graph with $\numNodes{}$ nodes defined by an adjacency matrix $A \in \mathbb{\binaryDomain{}}^{(\numNodes{}, \numNodes){}}$ ($\mathbb{\binaryDomain{}}=\{0, 1\}$) and a feature matrix with $\numFeatures$ features (attributes) for each node $X \in \mathbb{\realDomain{}}^{(\numNodes, \numFeatures)}$. Without loss of generality, we assume that nodes are ordered from 1 to n and use the index to represent a node. Given the correct labels for the first $k$ nodes ($\lab_{i} \in \labelSet{}, i \le k$) as ground truth, 
\emph{node classification} aims to assign appropriate labels  to the remaining nodes $k+1, \dots, \numNodes$ in the graph
 ($\lab_{j} \in \labelSet{}, k < j \le \numNodes$).
 

\noindent
\textbf{Graph neural networks (GNN)}
support learning on graphs, which have strong input data dependency caused by edges. This paper focuses on the semi-supervised node classification problem using graph convolutional neural networks (GCNs) \cite{kipf2016semi}. 
\chadded{Following prior work \cite{zugner2019certifiable,liu2020abstract,schuchardt2023collective}, this paper focuses on homogeneous graphs, where there is only a single type of nodes and a single type of edges.}

Given a graph $\graph=(\adj, X)$,
GCNs operate on a \textit{normalized adjacency matrix} \(\Tilde{\adj}=D^{-\frac{1}{2}}(\adj + \identity_\numNodes)D^{-\frac{1}{2}}\) where $\identity_\numNodes$ is an identity matrix of size $\numNodes$ and \(D=\operatorname{diag}\bigl(\sum_j{(\adj + \identity_\numNodes)_{:,j}}\bigr)\) is the diagonal degree matrix for each node on the graph. A \textit{GCN layer} consists of three phases. (1) The graph convolution \(\gc(H) := \Tilde{\adj} H\) to aggregate features of neighboring nodes. (2) The linear layer \(\linear{\weight_l, b_l}(H):= H \weight_l + b_l\) where $\weight_l$ and $b_l$ are the layer-specific trainable weight matrix and bias vector, respectively. (3) Finally $\relu(H):=max(0, H)$ is the \emph{non-linear activation function}. Given $\Tilde{A}$, a set of labels $\labelSet$, and the input feature matrix $H_l$, a GCN with $z$ layers can be defined recursively as  $\latentFeatures_{l+1} = \relu(\linear{\weight_l, b_l}(\gc(H_l))), 0 \le l < z$ such that $\latentFeatures_0=\nodeF$ is the input matrix and $\latentFeatures_\numLayers\in \realDomain^{(\numNodes, |\labelSet|)}$ is the output score matrix. We refer to the output of a GCN classifier \(\classifier\) as 
$H_\numLayers = \classifier(A, X)$. The final label for each node is obtained by selecting the label with the highest score per node: $argmax(\latentFeatures_z)$.

While our experiments focus on binary node features $(\nodeF{} \in \binaryDomain{}^{(\numNodes{}, \numFeatures{})})$ with ReLU activations, the approach naturally extends to continuous node features and other non-linear activations through linear relaxation \cite{singh2019abstract}.

\subsection{Robustness of GNNs}
\noindent
\textbf{Perturbation model:}
We investigate the robustness of a classifier by making assumptions about potential perturbations in the graph. Such perturbations may be caused by either uncertain measurements or adversarial changes of an attacker. 
Following previous work \cite{zugner2019certifiable,liu2020abstract}, we define node feature perturbations as flips of some (binary) node features limited by two types of change constraints. 
\emph{Local limit} $\localp$ constrains the total number of feature changes for a node $i$, and \emph{global limit} $\globalp$ states the maximum number of feature changes for all nodes. Then, the \textit{perturbation space} can be defined as all feature matrices $\nodeF{}'$ within the two limits from the original feature matrix $\nodeF$: $\perturb{\localp}{\globalp}{\nodeF{}}=\{\nodeF{}' \mid (\forall_i \sum_j(|\nodeF{}'-\nodeF{}|)_{i,j} \le \localp) \land \sum_{i,j}(|\nodeF{}'-\nodeF{}|)_{i,j}\le\globalp\}$. When the context is clear, we omit $\localp, \globalp$ in the notation for simplicity and use $\perturba{\nodeF{}}$.



\noindent
\textbf{Attack model:}
Given a perturbation budget, this paper is centered around an attack model where the main objective of the perturbation is to modify the classification outcomes of selected target nodes. Let $R_{\classifier}(\graph, \nodeF{}', i)$ be an indicator function on whether the prediction of a node is changed for some feature matrix $\nodeF{}'$ compared to the original feature matrix $\nodeF{}$. It gives $1$ if the prediction remains consistent for the perturbed feature matrix $\nodeF{}'$ and $0$ otherwise. \chadded{Formally, let $c$ be the original label prediction for the feature matrix $X$}, $R_{\classifier}(\graph, \nodeF{}', i)$ is defined as:
\begin{equation}
\text{\(\begin{aligned}
   R_{\classifier}(\graph, \nodeF{}', i) :=&\; \argmax{\classifier(\adj{}, \nodeF{})_\node{}} = \argmax{\classifier(\adj{}, \nodeF{}')_\node{}} \\
       \; \equiv & \forall c' \ne c, \bigl[ \delta_{i, c, c'} \bigr] > 0\text{,}\\
\end{aligned}\)}\label{equ:node_robustness_min}
\end{equation}
where $\delta_{i, c, c'} := \classifier{}(\adj{}, \nodeF{}')_{\node{},\lab{}} - \classifier{}(\adj{}, \nodeF{}')_{\node{}, {\lab{}'}}$ and $\classifier{}(\cdot, \cdot)_{\node{}, \lab{}}$ denotes the output score of label $\lab{}$ at node $\node{}$. The goal of perturbation is to identify an adversarial matrix $\nodeF{}' \in \perturba{\nodeF{}}$ where the prediction of the node is changed: $R_{\classifier}(\graph, \nodeF{}', i) = 0$. 

This attack model can be further categorized into two distinct attack types. The \textbf{single-node} attack model assumes the attacker can individually perturb the graph for each target node, i.e., $\nodeF{}'$ may vary for each node. In this scenario, the perturbation space is uniquely defined for each target node, ensuring that perturbations to one target node do not affect another target node. However, in many practical applications, perturbations for individual nodes may not be independent, given that all nodes are frequently considered collectively as a graph \cite{schuchardt2023collective}. The \textbf{collective} attack model revises this perspective by dropping the assumption on perturbation independence. It operates under the assumption that the graph can only be perturbed \textit{once} for all target nodes, thus the result $\nodeF{}'$ stays the same for all nodes. 

\noindent
\textbf{Single-node robustness:}
In single-node robustness, one assumes each node has a perturbation space independent of other nodes. The output of a classifier $\classifier$ for node $i$ is \textit{robust} against a perturbation space $\perturba{\nodeF{}}$ if no perturbed feature matrix $\nodeF{}'$ in $\perturba{\nodeF{}}$ can change  classification of $\node{}$ by $\classifier{}$. We call such node a \textit{robust node} for $\classifier$. 
\emph{Node-level robustness} $\nodeR{\node{}} \in \binaryDomain{}$ can be defined as:
\begin{equation}
\text{\(\begin{aligned}
    \nodeR{\node{}} :=&\; \forall \nodeF{}' \in \perturba{\nodeF{}}\colon  R_{\classifier}(\graph, \nodeF{}', i) > 0 \;
    \equiv \bigl[ min_{\nodeF{}' \in \perturba{\nodeF{}}, \lab{}' \ne \lab{}} \; \delta_{i, c, c'} \bigr] > 0\text{,}\\
\end{aligned}\)}
\end{equation}
 
For a robust node, the output scores for all other labels are always lower than the score of the original GCN label for any feature changes inside the perturbation space.
The (graph-level) \textit{robustness} $\graphR{} \in [0, 1]$ of a classifier $\classifier$ on a graph $\graph{}$ with $n$ nodes against a perturbation space $\perturba{\nodeF{}}$ is defined as the proportion of robust nodes: 

\begin{equation}
\graphR{}=\frac{\sum_{\node{}=1}^{\numNodes}{\nodeR{i}}}{\numNodes{}}
\end{equation}



A perturbed matrix $\nodeF{}' \in \perturba{\nodeF{}}$ is a \textit{counterexample} (or adversarial example) for the robustness of a GCN classifier $\classifier{}$ on node $\node$ if the classification of node $\node{}$ in $\graph' = (\adj, \nodeF{}')$ differs from the classification of the same node in $\graph = (\adj, \nodeF{})$: $\argmax{\classifier(\adj{}, \nodeF{})_\node{}} \ne \argmax{\classifier{}(\adj{}, \nodeF{}')_\node{}}$.

\noindent
\textbf{Collective robustness:}
In collective robustness, a single shared perturbation space is used for all nodes on the graph. Consequently, only \textit{graph-level} robustness is defined for this attack model. This graph-level robustness is categorized by the minimum ratio of nodes whose label stays unchanged by any feature matrix within the perturbation space:
\begin{equation}
\text{\(\begin{aligned}
    \graphR{}_{\text{collect}} := min_{\nodeF{}' \in \perturba{\nodeF{}'}} \frac{\sum_{i=1}^{n}{R_{\classifier}(\graph, \nodeF{}', i)}}{n}\\
\end{aligned}\)}
\end{equation}

Throughout this paper, we dominantly focus on the \textit{single-node} robustness certification, unless otherwise specified. Nevertheless, our method on single-node robustness certification can be used to determine the \textit{collective robustness} through a subsequent post-processing approach, as shown in \autoref{sec:collective-certification}.

\subsection{Robustness certification}
Robustness certification aims to provide formal proof that a GCN is not sensitive to small perturbations of the input graph. 
Given a GCN classifier $\classifier$ with label set $\labelSet{}$ for graph $\graph{}=(\adj{}, \nodeF{})$ with $\numNodes{}$ nodes, and a perturbation space $\perturba{\nodeF{}}$,  \textit{GCN robustness certification} provides a \textit{certifier} $\certifier{}: (\adj{}, \perturba{\nodeF{}}, \classifier{}) \to \realDomain{}^\numNodes{}$. The output of the certifier is a \emph{judgment} for each node that estimates the minimum difference (in \autoref{equ:node_robustness_min}) between the original label and other labels for all inputs within the perturbation space. 
We define the \textit{node-level certified 
 robustness as} $\nodeRC{\node{}}=\certifier(\adj{}, \perturba{\nodeF{}}, \classifier{})_\node{} > 0$. Similarly, the \emph{graph-level certified robustness} of a $\classifier$ for $\graph{}$ using $\certifier{}$ is the ratio of certified nodes  defined as:
 
\begin{equation}
\graphRC{}=\frac{\sum_{\node{}=1}^{\numNodes}{\nodeRC{i}}}{\numNodes{}}
\end{equation}

However, it is intractable to enumerate all possible perturbations $\nodeF{}'$ in $\perturba{\nodeF{}}$ to analytically calculate \emph{exact} robustness $\nodeR{\node{}}$  for each node $i$ \cite{zugner2019certifiable}. 
Hence, the adoption of some approximation methods is necessary. Techniques such as randomized smoothing are designed to estimate this value, providing a probabilistic approximation on a \textit{smoothed} version of the GCN \cite{wang2021certified}. Nevertheless, in practice, it is usually desirable to estimate the robustness of the \textit{original} GCN with some \textit{absolute} attributes.
Consequently, this paper aims to create a certifier $\certifier{}$ that provides an exact (tight) lower and upper bound of single-node robustness. Since collective certification is defined similarly for collective robustness \cite{schuchardt2023collective}, we omit the details for conciseness. We define the concept of sound and complete certifiers similarly to others \cite{liu2021algorithms}. 

\begin{definition}[Sound and complete certifiers]
\label{def:soundness}

A certifier $\certifier$ is \emph{sound} if whenever it flags a node $i$ as a robust node, the node will indeed 
 be robust against the perturbation space $\perturba{\nodeF{}}$: $\nodeRC{\node{}} \implies \nodeR{\node{}}$. A certifier $\certifier$ is \emph{complete} if all robust nodes are flagged for a given perturbation space $\perturba{\nodeF{}}$: $\nodeR{\node{}} \implies \nodeRC{\node{}}$.
\end{definition}

 
A sound certifier provides a lower bound for graph-level robustness, i.e., $\graphRC{} \le \graphR{}$.
A complete certifier  gives an upper bound for robustness, i.e., $\graphRC{} \ge \graphR{}$.
A \textit{sound and complete certifier} judges each node correctly, i.e., the lower and upper bounds are the same. A certification approach may provide a pair of certifiers: a sound certifier for a lower bound $\graphRLower{}$ and a complete certifier for an upper bound $\graphRUp{}$.



\section{Approach}
\label{sec:approach}

\paragraph{Problem description}
Given a GCN node classifier $\classifier$ that assigns a label to a graph node, an input graph $\graph{}$ and perturbation limit $(\localp, \globalp)$,  \textit{GCN robustness certification} provides \textit{a pair of certifiers} that estimate the lower bound robustness $\graphRLower{}$ and upper bound robustness $\graphRUp{}$ of $\classifier$, respectively. \chadded{The objective of robust certification is to tighten the bounds by minimizing the gap between the upper and lower robustness estimates, $\graphRUp{} - \graphRLower{}$.}

\begin{figure}
    \centering
    \includegraphics[width=\linewidth]{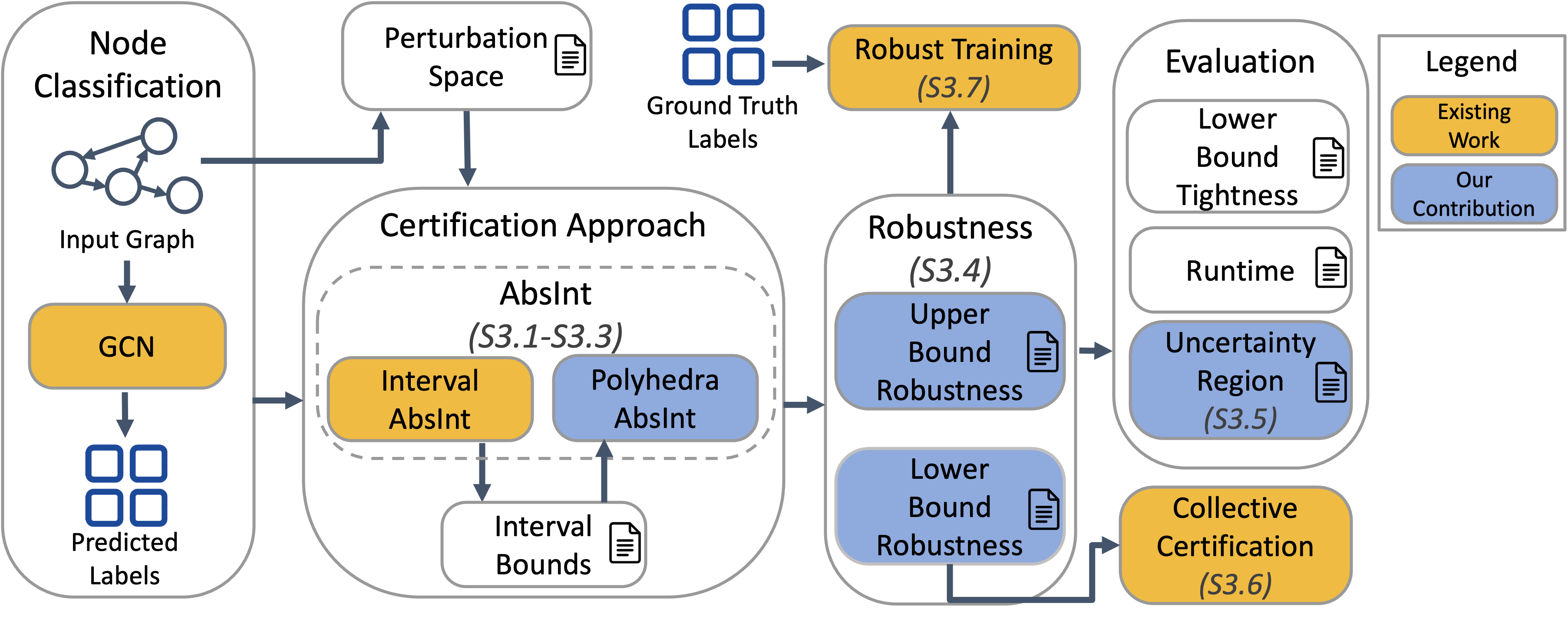}
    \caption{\chadded{Overview of the approach components and the corresponding subsections}}
    \label{fig:overview}
\end{figure}

\paragraph{Approach overview}
\autoref{fig:overview} shows an overview of our approach, which combines two abstract interpretation (\absint{}) techniques into a  unified framework (\autoref{sec:abs-int-gcn}). In the figure, each rounded rectangle symbolizes a process in the framework, and those combined with a document icon signify the input and/or output data associated with a process. Input graphs are characterized by circles and arrows, and labels are denoted with square icons. Work from previous studies are distinguished by an orange label, while our contributions are marked in blue.
\textit{Interval \absint{}} creates \textit{interval bounds} for each latent feature value of the GCN on the graph. These bounds are used in \textit{polyhedra \absint{}} (Sections~\ref{sec:polyhedra-abs-int}-\ref{sec:abst-oper}) to derive tighter \emph{symbolic bounds} of the values. The resulting bounds can be used to calculate the \textit{lower} (sound) and \textit{upper} (complete) \textit{bounds} of the node and graph-level GCN \textit{robustness} (\autoref{sec:rbst-cert}). We also demonstrate using our approach how to derive \textit{collective certification} (\autoref{sec:collective-certification}).  Moreover, the certification process can be used to improve the robustness of the GCN by \textit{robust training} (\autoref{sec:robust-training}) by providing the \textit{ground truth labels}. 

In \autoref{sec:evaluation}, we evaluate the approach in terms of \textit{runtime} performance and \textit{tightness} of certification with the help of a new metric \textit{uncertainty region} (defined in \autoref{sec:uncertainty_region}). We include all proofs of theorems in the appendix.  \autoref{tab:notions} summarizes the most important notations used in the following sections.

\begin{table}[]
    \centering
    \makebox[\linewidth][c]{
    \begin{tabular}{|c|c|}
        \hline
        $\wp(S)$ & The power set of a set $S$ \\
        \hline
        $\concreteDomain$ & Concrete domain \\
         \hline 
        $\abstractDomain$ & Abstract domain  \\
         \hline
        $\overline{\operation}\colon \concreteDomain \to \concreteDomain$ & The set-valued operation of a function $T$\\
        \hline
        $\operation^\#: \abstractDomain \to \abstractDomain$ & Abstract operation of a function $\operation$ operates on the abstract domain \\
        \hline
        $\texttt{x}_{\node{}, \feature{}}$& Symbolic variable in the polyhedra domain for the input feature $X_{\node,\feature}$ \\
        \hline
        $\mathcal{X}, \map_i$ & The set of all symbolic variables and the set of symbolic variables for node $i$ \\
        \hline
        \scalebox{0.85}{$\polyLowExp{\node{}}, \polyLowConst{\node{}}$}& The coefficient matrix and constant vector for the lower bound of symbolic variables of node $i$ \\ 
        \hline
        \scalebox{0.85}{$\polyUpExp{\node{}}, \polyUpConst{\node{}}$}& The coefficient matrix and constant vector for the upper bound of symbolic variables of node $i$ \\
        \hline
        $\coef{}_\node{}$ & Set of inequalities bonding variables for node $i$ \\
        \hline
        $\absele$ & An abstract element\\
        \hline
    \end{tabular}
    }
    \caption{Summary of notions in the approach}
    \label{tab:notions}
\end{table}

\subsection{\absint{} for GCN certification}
\label{sec:abs-int-gcn}
The core technique behind our approach is abstract interpretation (\absint{})  \cite{cousot1977abstract}, which is a mathematical framework initially designed to reason about the behavior of computer programs. Recently, it has also been adapted to neural network certification of non-relational continuous inputs \cite{urban2020perfectly,singh2019abstract}. Here, we first define a \emph{general abstract interpretation framework for certifying GCN node classifiers}.

Abstract interpretation relies on an \emph{abstract domain} $\abstractDomain$~\cite{cousot1977abstract}, which can be any numerical or relational domain that provides a sound abstraction of elements in the \emph{concrete domain} $\concreteDomain$. Moreover, \absint{} defines sound abstractions of operations in the concrete domain.

For \emph{\absint{} of GCN certification}, the concrete domain captures the perturbations space $\perturba{\nodeF{}}$ and the set of possible latent features $\latentFeatures_l$ produced by the layers in the GCN over the perturbations space. Then \absint{} aims to produce an approximation of the concrete domain (a set of feature matrices) with an abstract domain by sound abstraction of GCN operations (e.g., graph convolution, ReLU). 

\subsubsection{Concrete domain and operations}
Formally, the concrete domain of a GCN on a graph with $\numNodes$ nodes can be defined as sets of binary and real-valued matrices  
\(\concreteDomain{} = \{\wp(\mathbb{B}^{(n, m_0)})\}\cup \{ \wp(\mathbb{R}^{(n, m_l)}) \mid 0<l\le \numLayers\}\) where 
$\numLayers$ is the number of layers in the GCN, $m_l$ is the number of node features at layer $l$ and
$\wp(\set{S})$ represents the power set of set $\set{S}$. 
For any operation \(\operation\) defined over such matrices, the corresponding \emph{set valued} operation \(\overline{\operation}\colon \concreteDomain{} \to \concreteDomain{}\) computes all possible resulting matrices from the input set $\set{S}$: \(\overline{\operation}(\concreteElement) = \{ \operation(H) \mid H \in \concreteElement \}\).
The set of concrete operations~\(\operationSet{}\) for a GCN includes \(\overline{\gc}\) for \emph{graph convolution} \(\gc\), \(\overline{\relu}\) for the \emph{non-linear} \(\relu\), and \(\overline{\linear{\weight_l, b_l}}\) for the \emph{fully connected} operation \(\linear{\weight_l, b_l}\).

\subsubsection{Abstract interpretation (domain and operations)} 
The \emph{abstract interpretation} over a GCN is a tuple $(\abstractDomain{}, \abstractFunc{}, \concreteFunc{}, \operationSet{}^\#)$ with \emph{abstraction function}~\(\abstractFunc\colon \concreteDomain \to \abstractDomain\), \emph{concretization function} \(\concreteFunc\colon \abstractDomain \to \concreteDomain\),
and a set of \emph{abstract operations} \(\operationSet{}^\#\).
Functions \(\abstractFunc\) and \(\concreteFunc\) transform elements between the concrete domain and the abstract domain. Since \(\concreteFunc\) and  
\(\abstractFunc\) mutually determine each other \cite{cousot1977abstract}, we define \emph{input abstraction} and then use \(\concreteFunc\) in the rest of the paper.
For each concrete operation \(\overline{\operation} \in \operationSet\), we use $\operation^\#$ to denote the corresponding abstract operation that \textit{over-approximates} \(\overline{\operation}\)~(i.e., \(\overline{\operation}(\concreteFunc(\absele)) \subseteq \concreteFunc(\operation^\# (\absele))\) for all \(\absele \in \abstractDomain\)).
For any GCN $\classifier{}$, we use $\overline{\classifier{}}$ to represent the GCN with concrete operations from $\operationSet{}$, and we use $\classifier{}^\#$ to represent the abstract GCN with the abstract operations from $\operationSet{}^\#$.

\begin{theorem}[Over-approximation of \absint{}]
\label{thm:ai_soundness}
Any abstract interpretation $(\abstractDomain{}, \concreteFunc{}, \operationSet{}^\#)$ defined for GCN as above over-approximates the behavior of the GCN classifier, i.e., \(\overline{\classifier}(A, \perturba{X}) = \{ \classifier(A, X') \mid X' \in \perturba{X} \} \subseteq \concreteFunc(\classifier^\# (\absele))\) for all abstract elements \(\absele \in \abstractDomain\) with \(\perturba{X} \subseteq \concreteFunc(\absele)\).
\end{theorem}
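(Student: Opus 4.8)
The plan is to run the standard soundness-by-composition argument for abstract interpretation, specialized to the layered structure of a GCN. The first step is to unfold $\overline{\classifier{}}$ as a finite composition of the set-valued operations attached to its $\numLayers{}$ layers: writing $N = 3\numLayers{}$, we have $\overline{\classifier{}}(\adj{}, \cdot) = \overline{\operation}_N \circ \cdots \circ \overline{\operation}_1$, where each $\overline{\operation}_k \in \operationSet{}$ is one of $\overline{\gc{}}$, $\overline{\linear{\weight_l, b_l}}$, or $\overline{\relu{}}$ (the fixed adjacency $\adj{}$ enters only as a parameter of $\overline{\gc{}}$), and dually $\classifier{}^\# = \operation^\#_N \circ \cdots \circ \operation^\#_1$ with each $\operation^\#_k \in \operationSet{}^\#$ the abstract operation over-approximating $\overline{\operation}_k$. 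Before the induction I would record the bookkeeping identity $\overline{\classifier{}}(\adj{}, \perturba{\nodeF{}}) = (\overline{\operation}_N \circ \cdots \circ \overline{\operation}_1)(\perturba{\nodeF{}}) = \{\classifier{}(\adj{}, \nodeF{}') \mid \nodeF{}' \in \perturba{\nodeF{}}\}$, which is immediate from $\overline{\operation}(\set{S}) = \{\operation{}(H) \mid H \in \set{S}\}$ together with the fact that images commute with composition; this already identifies the left-hand side appearing in the theorem.

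The only auxiliary fact the argument needs is that every set-valued operation is \emph{monotone} with respect to $\subseteq$, i.e., $\set{S} \subseteq \set{S}'$ implies $\overline{\operation}(\set{S}) \subseteq \overline{\operation}(\set{S}')$; this is trivial, since $\overline{\operation}(\set{S})$ is just the image of $\set{S}$ under $\operation{}$.

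The core is then an induction on $k = 0, 1, \dots, N$ with invariant
\[
(\overline{\operation}_k \circ \cdots \circ \overline{\operation}_1)(\perturba{\nodeF{}}) \;\subseteq\; \concreteFunc{}\bigl((\operation^\#_k \circ \cdots \circ \operation^\#_1)(\absele)\bigr).
\]
The base case $k = 0$ (empty composition) is exactly the hypothesis $\perturba{\nodeF{}} \subseteq \concreteFunc{}(\absele)$. For the step, writing $\set{S}_{k-1}$ and $\absele_{k-1}$ for the two stage-$(k-1)$ objects, the inductive hypothesis gives $\set{S}_{k-1} \subseteq \concreteFunc{}(\absele_{k-1})$; monotonicity then yields $\overline{\operation}_k(\set{S}_{k-1}) \subseteq \overline{\operation}_k(\concreteFunc{}(\absele_{k-1}))$, and the defining over-approximation property of $\operation^\#_k$ gives $\overline{\operation}_k(\concreteFunc{}(\absele_{k-1})) \subseteq \concreteFunc{}(\operation^\#_k(\absele_{k-1}))$; chaining the two inclusions closes the induction. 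Taking $k = N$ and combining with the bookkeeping identity yields $\{\classifier{}(\adj{}, \nodeF{}') \mid \nodeF{}' \in \perturba{\nodeF{}}\} = \overline{\classifier{}}(\adj{}, \perturba{\nodeF{}}) \subseteq \concreteFunc{}(\classifier{}^\#(\absele))$, which is the claim.

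I do not expect a genuine obstacle here: the proof is purely structural, resting only on monotonicity of set images and transitivity of $\subseteq$. The points that need care are bookkeeping ones — setting up the correspondence between the $N$ set-valued operations and the $N$ abstract operations precisely enough that ``$\operation^\#_k$ over-approximates $\overline{\operation}_k$'' is literally an instance of the stated property of $\operationSet{}^\#$, and keeping the adjacency matrix $\adj{}$ as a fixed parameter throughout rather than folding it into the abstracted state.
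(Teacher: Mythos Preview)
Your proposal is correct and follows essentially the same approach as the paper: both prove monotonicity of the set-valued operations and then run an induction over the sequence of operations, chaining monotonicity with the over-approximation property $\overline{\operation}(\concreteFunc{}(\absele)) \subseteq \concreteFunc{}(\operation^\#(\absele))$ at each step. Your version is slightly more careful in counting $N = 3\numLayers$ operations and in isolating $\adj{}$ as a fixed parameter, but the structure of the argument is the same.
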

\chadded{The proof for \autoref{thm:ai_soundness} is presented in Appendix \ref{append:proof_ai_soundness}}


One instantiation of the general \absint{} framework for GCN certification is interval \absint{} \cite{liu2020abstract}, which 
(1) bounds each latent feature with an interval, and (2) uses interval arithmetic as abstract operations. 
However, interval \absint{} can be imprecise as it does not capture the relations between node features. 
A novel contribution of our paper is to define polyhedra-based \absint{} for GCN certification (in \autoref{sec:polyhedra-abs-int}-\ref{sec:abst-oper}), which incorporates such relations.

\subsection{Efficient polyhedra \absint{}}
\label{sec:polyhedra-abs-int}
\subsubsection{Challenges}
The \textit{polyhedra abstract domain} \cite{cousot1978automatic} bounds variables by linear inequalities to offer a precise domain to capture relations between node features. Specifically, we tackle the following challenges when designing the approach:

\textbf{Scalability:} The \emph{number of inequalities may grow exponentially with each layer of the GCN}. We mitigate this issue similarly to other neural networks \cite{singh2019abstract} by keeping only one upper-bound inequality and one lower-bound inequality to improve certification efficiency. 

\textbf{Tightness:}
While existing work uses linear bounds to approximate non-linear layers (like $\relu$) without increasing the number of variables, we \emph{provide an analytical solution for the tightest approximation}. 

\textbf{Discrete data:} While our approach is inspired by abstract interpretation-based certification methods for neural networks on non-relational and continuous data, \emph{a challenging and novel aspect of our work is to handle an interdependent and discrete input space}. Due to the relational nature of graphs, the GCN's output of one node depends on other nodes. However, naively incorporating all neighboring nodes in the certification will significantly increase the runtime due to unnecessary computations. 

\textbf{Efficiency:} Most abstract interpretation methods are performed on CPUs, which makes them hard to scale up for GCNs of large real-world graphs. To improve the scalability, we combine the polyhedra and interval \absint{} approach to balance speed and precision. At the same time, \emph{our abstract GCN operators in the polyhedra domain are defined with reversible and differentiable matrix operations}, enabling efficient GPU-accelerated implementation when used on large-scale graphs. Since our abstract operators are reversible, we design a backward process such that a neighbor's feature is only added to the computation when needed. Moreover, our abstract operators are defined for each node independently while taking the neighbors into consideration. This enables efficient subgraph batching based on neighbors during the certification process, which further improves the certification speed. Finally, the same abstraction can also be used to provide counterexample candidates.

\subsubsection{Overview}
 Building on the elements outlined for an abstract interpretation approach (see \autoref{sec:abs-int-gcn}), our method begins with an \textit{abstract domain}, which involves explaining how polyhedra can be used to overapproximate a set of features. Subsequently, we describe the \textit{input abstraction} process, which transforms a set of input features into their overapproximation within the abstract domain. Similarly, the \textit{concretization} process describes how the abstract element can be mapped into a set of elements in the concrete domain. The next stage includes a series of \textit{abstract operations} designed to overapproximate the neural network's layers. Specifically, for GCNs, these operations include the fully connected layer ($\linear{}{}$), graph convolution ($\gc{}$), and the ReLU activation function ($ReLU$). 

\subsubsection{Polyhedra \absint{}}
Let us assign a symbolic variable $\texttt{x}_{\node{}, \feature{}}$ to an input feature $X_{\node,\feature}$ of node $\node$. Further, we define $\mathcal{X}$ to be the set of all symbolic variables from the graph and $\map_i$ as the set of variables for node $\node$.
For a concrete domain of feature vectors $\concreteElement{}\in \concreteDomain{}$, the abstract domain for a GCN can be defined as a collection of convex polyhedra, each bounding the feature values of a graph node. We represent a polyhedron by a system of inequalities \cite{cousot1978automatic}.
Since the neighbors of a node contribute to a prediction by the $\gc{}$ operator, the variables \(\map_i \subseteq \nodeVar\) used for feature bounds of node $\node{}$ at layer $\layer{}$ can include both features of node $\node{}$ and its $\layer{}$-hop neighbors.
 

\noindent\textbf{Abstract domain:} We define the abstract domain for graph features as a set of tuples \(\absele = (\coef{}_\node{}, \map{}_\node)_{i = 1}^n  \in \abstractDomain\), where $\coef{}_\node{}$ is the set of inequalities used to bound latent variables for node $\node$: 
\( \coef{}_\node{} = (\polyLowExp{\node{}}, \polyLowConst{\node{}},\) \(\polyUpExp{\node{}}, \polyUpConst{\node{}})\) where \(\polyLowExp{\node{}}\)  and \(\polyUpExp{\node{}}\) are matrices of coefficients of variables \(\map_\node\), and \(\polyLowConst{\node{}}\) and \(\polyUpConst{\node{}}\) are vectors of constants in coefficients \(\coef_i\). Let \(\texttt{h}_{i,j}\) be a symbolic variable associated with the latent feature \(H_{i,j}\) of each node \(i\), we retain the two inequalities:

\noindent{\small\setlength{\abovedisplayskip}{0pt}\setlength{\belowdisplayskip}{0pt}\begin{align*}
    \textstyle\sum_{\texttt{x}_k \in \map{}_i}\! \bigl((\polyLowExp{\node{}})_{j,k} \cdot \texttt{x}_k\bigr) + (\polyLowConst{\node{}})_j &\le \texttt{h}_{i,j}\text,\\
    \textstyle\sum_{\texttt{x}_k \in \map{}_i}\! \bigl((\polyUpExp{\node{}})_{j,k} \cdot \texttt{x}_k\bigr) + (\polyUpConst{\node{}})_j &\ge \texttt{h}_{i,j}\text,
\end{align*}}%
An abstract element \(\absele\) uniformly refers to a set of linear inequalities and the tuples of matrices that define them.


\noindent\textbf{Input abstraction:}
In the input layer, each latent feature variable \(\texttt{h}_{i,j}\) corresponds to a node feature variable \(\texttt{x}_{i,j}\).
To create an initial abstraction for node features \(X\),  each feature variable \(\texttt{x}_{i,j}\) is used as its own lower and upper bound, thus the retained inequalities are \(1 \cdot \texttt{x}_{i,j} + 0 \le \texttt{x}_{i,j}\) and \(1 \cdot \texttt{x}_{i,j} + 0 \ge \texttt{x}_{i,j}\).
This corresponds to the abstract element \(\absele_0 = \bigl((I^{\numFeatures{}_0}, \mathbf{0}^{\numFeatures{}_0}, I^{\numFeatures{}_0}, \mathbf{0}^{\numFeatures{}_0}), \{\texttt{x}_{\node{}, \feature{}}|j<m_0\}\bigr)_{i = 1}^n \in \abstractDomain\), where \chadded{$m_0$ is the number of input features,} $I^{\numFeatures{}_0}$ is an \(m_0 \times m_0\) identity matrix, and $\mathbf{0}^{\numFeatures{}_0}$ is a vector of zeros.

\noindent\textbf{Concretization:}
To compute concretization \(\concreteFunc(\absele)\), we consider both the perturbation \(\perturba{\nodeF{}}\) of node features \(\nodeVar\) and the linear inequalities retained in \(\absele\). Formally, we have \(\concreteFunc(\absele) = \{ H \in \mathbb{R}^{(n,m)} \mid H \vDash \absele \cup \perturba{X}^{\localp,\globalp} \}\) to obtain a set of matrices that simultaneously satisfy ($\vDash$) inequalities in \(\absele\) as well as  the local and global  limits \(\localp, \globalp\).

\noindent\textbf{Abstract operations:}
We define three abstract operations to capture the relations between features from layer $\layer{}$ and the input features. $\linear{\weight{}, \bias}^{\#_P}$  and $\gc{}^{\#_P}$ are linear operations, hence such relations are captured exactly using the polyhedra domain. $\relu{}^{\#_P}$ is a non-linear function hence approximations are required. 
For illustration, we separate the abstraction and transformation for the coefficients ($\coef_i$) and set of variables ($\map_\node$) and only the node-level abstract operations are presented in detail in \autoref{sec:abst-oper}. The abstract operations for the whole abstract element are the aggregation of results for all nodes in the graph, i.e., \(\absele' = (\coef'_i, \map'_i)_{i = 1}^n\). Next, we present each abstract operation in detail.


\subsection{Abstract operations}
\label{sec:abst-oper}

\noindent\textbf{Fully connected:}
$\linear{\weight{}, \bias}^{\#_P}$ keeps all linear relations of input node features and output node features. 
Intuitively, the lower bound of its output can be calculated by multiplying the positive weights by the input lower bound and the negative weights by the input upper bound. The upper bound can be calculated similarly.  
For an input element of node $\node{}$ with coefficients $\coef_i = (\polyLowExp{\node{}}, \polyLowConst{\node{}}, \polyUpExp{\node{}}, \polyUpConst{\node{}})$ and variables $\map{}_\node{}$, the $\linear{\weight{}, \bias}^{\#_P}$  operation can be formally defined as:

\noindent{\setlength{\abovedisplayskip}{0pt}\setlength{\belowdisplayskip}{0pt}\allowdisplaybreaks[4]\small
\begin{align*}
&\linear{\weight{}, \bias}^{\#_P}(\polyLowExp{\node{}}, \polyLowConst{\node{}}, \polyUpExp{\node{}}, \polyUpConst{\node{}}) = (\polyLowExp{\prime\node{}}, \polyLowConst{\prime\node{}}, \polyUpExp{\prime\node{}}, \polyUpConst{\prime\node{}})\text{, where}\\
&\polyLowExp{\prime\node{}} = max(\weight{}^T, 0) \cdot\polyLowExp{\node{}} + min(\weight{}^T, 0)\cdot \polyUpExp{\node{}},\\
&\polyUpExp{\prime\node{}} = max(\weight{}^T, 0)\cdot \polyUpExp{\node{}} + min(\weight{}^T, 0)\cdot \polyLowExp{\node{}},\\
&\polyLowConst{\prime\node{}} = max(\weight{}^T, 0) \cdot\polyLowConst{\node{}} + min(\weight{}^T, 0)\cdot \polyUpConst{\node{}} + \bias,\\
&\polyUpConst{\prime\node{}} = max(\weight{}^T, 0)\cdot \polyUpConst{\node{}} + min(\weight{}^T, 0)\cdot \polyLowConst{\node{}} +\bias,
\end{align*}}%
\chadded{$W^T$ represents the transpose of the weight matrix $W$.} $max(\weight{}^T, 0)$ and $min(\weight{}^T, 0)$ keep positive and negative values in matrix $\weight{}^T$, respectively. Notice that this operation does not introduce any new variables. Thus, the set of node feature variables stays unchanged $\map{}'_\node{} = \map{}_\node{}$.

\noindent\textbf{Graph convolution:}
The graph convolution $\gc{}^{\#_P}$ operation collects variables from node $\node{}$ and its neighbors and combines them into a matrix. Since all entries in the normalized adjacency matrix $\Tilde{\adj}$ are non-negative, we can obtain the upper bound and lower bound by calculating a normalized sum using bounds from neighbors. Since the summation step may introduce new variables from neighbors, we can incorporate the new variables using matrix concatenation. Let $\neighbors{\node{}}{1}$ be the immediate neighbors of node $\node{}$ (including itself);
we define $\gc{}^{\#_P}$ for node $\node{}$ as 


\noindent{\setlength{\abovedisplayskip}{0pt}\setlength{\belowdisplayskip}{0pt}\allowdisplaybreaks[4]\small
\begin{align*}
&\gc{}^{\#_P}(\polyLowExp{\node{}}, \polyLowConst{\node{}}, \polyUpExp{\node{}}, \polyUpConst{\node{}}) = (\polyLowExp{\prime\node{}}, \polyLowConst{\prime\node{}}, \polyUpExp{\prime\node{}}, \polyUpConst{\prime\node{}})\text{, where} \\
&\polyLowExp{\prime\node{}} = [\Tilde{\adj}_{\node{},k}\cdot\polyLowExp{k}]_{k\in \neighbors{\node{}}{1}}\quad \polyUpExp{\prime\node{}} = [\Tilde{\adj}_{\node{},k}\cdot\polyUpExp{k}]_{k\in \neighbors{\node{}}{1}}\\
&\polyLowConst{\prime\node{}} = \textstyle\sum_{k \in \neighbors{\node{}}{1}}(\Tilde{\adj}_{\node{},k}\cdot\polyLowConst{k}) \quad \polyUpConst{\prime\node{}} = \textstyle\sum_{k \in \neighbors{\node{}}{1}}{(\Tilde{\adj}_{\node{},k}\cdot\polyUpConst{k})}
\end{align*}}

\noindent
where $[H_s]_{s \in \set{S}}$ concatenates all matrices $H_s$ defined by $s \in \set{S}$ horizontally. 
In this operation, the set of variables for the new element is the union of all variables from the neighbors: $\map{}'_\node{} = \bigcup_{k\in \neighbors{\node{}}{1}}\map{}_k$. Note that there may be overlapping between the set of variables. In that case, one sums the corresponding columns of the coefficient matrix, and the number of new variables will be reduced. We ignore such cases for conciseness. 

\begin{figure}
    \centering
    \includegraphics[width=\linewidth]{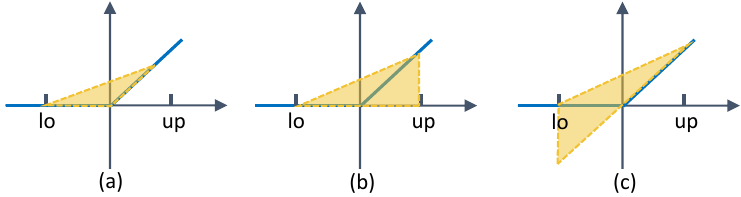}
    \caption{Three possible boundings of the ReLU function: (a) bounding with the minimum area; (b), (c) two possible boundings maintaining one lower bound}
    \label{fig:relu}
\end{figure}

\noindent\textbf{ReLU:}
As the $\relu{}$ function is non-linear, relaxation is needed with  \textit{one} upper bound and lower bound, respectively. Such relaxation of the $\relu{}$ function requires \emph{numerical} upper and lower bounds, but the polyhedra \absint{} only produces \emph{symbolic} upper and lower bounds so far. 
To get the numerical bounds efficiently for each variable, we use the interval bounds \(\low{}\) and \(\high{}\) of the variable calculated by \textit{interval \absint{}} \cite{liu2020abstract}.
The polyhedron bounding of $\texttt{x}' = \relu(\texttt{x})$ function is divided into three cases according to the interval bounds.

(1) If $\low{} \ge 0$, the $\relu{}$ function is equivalent to an identity function; thus we can set $\variable{}'=\variable{}$. (2) If $\high{} < 0$, the $\relu{}$ function sets all negative values to 0, so we have $\variable{}'=0$. (3) If $\low{} < 0 < \high{}$, the $\relu{}$ function is non-linear and we need a convex relaxation. 

\autoref{fig:relu} shows three possible bounds of the $\relu{}$ function (shaded areas represent uncertainties in the bounding). Bounding (a) provides the minimum area using a convex polyhedron, but it violates the requirement of the abstract domain since it has two lower bounds ($\variable{}'=\variable{}$ and $\variable{}'=0$). Boundings (b) and (c) are slightly larger than bounding (a) but only require one lower bound. Hence, we use bounding (b) or (c) for $\relu{}$. In general, the lower bound is $\variable{}'=\lambda \variable{}$ where $\lambda\in[0, 1]$ where $\lambda$ is chosen to  minimize the resulting bounding area. While similar boundings have been presented for certifying other neural networks \cite{singh2019abstract}, our paper also provides an analytical solution for bounds \emph{with the minimum area}.

\begin{theorem}
    \label{thm:relu_bounding}
    Let $\variable{}$ be a variable in interval $[\low{}, \high{}]$ such that $\low{} < 0 < \high{}$. The \textbf{minimum area bounding} of $\relu{}(\variable{})$ with one upper and one lower bound can be defined as:\\
    (1) upper bound: $\variable{}'=\frac{\high{}}{\high{}-\low{}}\variable{} - \frac{\high{} \cdot \low{}}{\high{} - \low{}}$\\
    (2) lower bound: $\variable{}'= \begin{cases}
            \variable{} & \text{if } |\high{}| \ge |\low{}| \\
            0 & \text{otherwise.}
        \end{cases}$
\end{theorem}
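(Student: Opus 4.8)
The plan is to recast the statement as a continuous optimization over the pair of affine functions that sandwich $\relu$ on $[\low{},\high{}]$. Write the candidate bounds as $\ell(\variable{}) = \lambda\variable{} + \mu$ (lower) and $u(\variable{}) = \tau\variable{} + \rho$ (upper), required to satisfy $\ell(\variable{}) \le \relu(\variable{}) \le u(\variable{})$ for every $\variable{} \in [\low{},\high{}]$. The area of the induced bounding region $\{(\variable{},\variable{}') : \low{}\le\variable{}\le\high{},\ \ell(\variable{})\le\variable{}'\le u(\variable{})\}$ is exactly $\int_{\low{}}^{\high{}}\bigl(u(\variable{}) - \ell(\variable{})\bigr)\,d\variable{}$. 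The key first observation is that the sandwich constraint decouples into "$\ell\le\relu$'' and "$\relu\le u$'', which involve disjoint parameter sets, so minimizing the area amounts to \emph{separately} minimizing $\int_{\low{}}^{\high{}}u$ and maximizing $\int_{\low{}}^{\high{}}\ell$. I would also record the elementary fact that $\int_{\low{}}^{\high{}}(\alpha\variable{}+\beta)\,d\variable{}$ equals the interval length $\high{}-\low{}$ times the average of the two endpoint values $(\alpha\low{}+\beta)$ and $(\alpha\high{}+\beta)$, which lets me reason purely with endpoint values.

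For the upper bound, any affine $u$ with $u \ge \relu$ on $[\low{},\high{}]$ must in particular satisfy $u(\low{}) \ge \relu(\low{}) = 0$ and $u(\high{}) \ge \relu(\high{}) = \high{}$; by the averaging identity this forces $\int_{\low{}}^{\high{}}u \ge (\high{}-\low{})\cdot\tfrac{\high{}}{2}$. Equality holds for the secant line through $(\low{},0)$ and $(\high{},\high{})$, and that line really is an upper bound on the \emph{whole} interval because $\relu$ is convex and hence lies below each of its chords. Computing that secant gives $\variable{}' = \tfrac{\high{}}{\high{}-\low{}}\variable{} - \tfrac{\high{}\cdot\low{}}{\high{}-\low{}}$, which is part (1).

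For the lower bound I would first reduce "$\ell(\variable{}) = \lambda\variable{} + \mu \le \relu(\variable{})$ on $[\low{},\high{}]$'' to a finite linear system: on $[\low{},0]$ the inequality reads $\lambda\variable{}+\mu\le 0$ and on $[0,\high{}]$ it reads $\lambda\variable{}+\mu\le\variable{}$; since each side is affine, both hold on the respective subinterval iff they hold at the breakpoints $\variable{}\in\{\low{},0,\high{}\}$, giving $\mu\le 0$, $\lambda\low{}+\mu\le 0$, $\lambda\high{}+\mu\le\high{}$. The objective $\int_{\low{}}^{\high{}}\ell = (\high{}-\low{})\bigl(\tfrac{\lambda}{2}(\high{}+\low{})+\mu\bigr)$ is then maximized over this polyhedron. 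I would split into the three regimes $\lambda\le 0$, $0\le\lambda\le 1$, $\lambda\ge 1$: in each, the optimal $\mu$ saturates exactly one of the three constraints and the resulting objective becomes monotone in $\lambda$, so the optimum lies at $(\lambda,\mu)=(1,0)$ or $(\lambda,\mu)=(0,0)$, with objective $\tfrac{\high{}+\low{}}{2}$ and $0$ respectively. Hence the minimum-area lower bound is $\variable{}'=\variable{}$ when $\high{}+\low{}\ge 0$, i.e.\ $|\high{}|\ge|\low{}|$, and $\variable{}'=0$ otherwise, which is part (2).

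The only real obstacle is bookkeeping rather than insight: the feasible polyhedron for $(\lambda,\mu)$ is unbounded, so one must check that the linear objective does not escape to $+\infty$ (the case split on $\lambda$ handles this, since for large $|\lambda|$ a saturated constraint drives the objective back down), and one must note that at $|\high{}|=|\low{}|$ the two candidate lower bounds tie, so the stated tie-breaking is harmless. I would also make explicit that "one upper and one lower bound'' means a single affine function on each side valid across the entire interval --- the abstract-domain requirement that rules out bounding~(a) of Figure~\ref{fig:relu}, whose lower boundary is the two-piece $\relu$ itself --- so that the precise hypothesis under which the theorem claims optimality is pinned down.
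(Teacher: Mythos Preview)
Your proof is correct and is in fact more complete than the paper's. The paper's argument fixes the upper bound to be the secant line from the outset (without justifying why this is optimal among all affine upper bounds) and restricts the lower bound to the one-parameter family $\variable{}' = \lambda\variable{}$ with $\lambda\in[0,1]$ (i.e., lines through the origin), then computes the trapezoid area as a linear function of $\lambda$ and reads off the minimizer. Your argument instead treats the full two-parameter families $\ell(\variable{})=\lambda\variable{}+\mu$ and $u(\variable{})=\tau\variable{}+\rho$, uses the decoupling of the area integral to optimize the two sides independently, and settles the upper bound via convexity of $\relu$ and the lower bound via an explicit LP case split on $\lambda$.

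What each buys: the paper's one-parameter calculation is shorter and suffices if one is content to optimize only within the family actually used by the abstract transformer (which is how the surrounding text motivates the choice). Your version proves the stronger claim that the theorem literally states---optimality over \emph{all} single affine upper/lower bounds---so it closes the gap between what is asserted and what is shown. Your remark that the optimizer need not be unique at $|\high{}|=|\low{}|$ is also a point the paper does not address.
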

\chadded{The proof can be found in Appendix \ref{append:proof_relu_bounding}}. \autoref{thm:relu_bounding} ensures that the area of the bounding is minimized by $\lambda \in \{0, 1\}$ for values $\low{}$ and $\high{}$. 
Let $lo_{\node{}\feature{}}$, $up_{\node{}\feature{}}$ be the minimum and maximum values of  interval Abs\-Int boundings for node $\node{}$ and feature $\feature{}$. 
The $\relu{}^{\#_P}$ operation is defined for each node $\node{}$ and  feature $\feature{}$ in four cases: (i) $lo_{\node{} \feature{}} \ge 0$, (ii) $up_{\node{} \feature{}} < 0$, (iii) $lo_{\node{} \feature{}} < 0 < up_{\node{} \feature{}} \land |up_{\node{} \feature{}}| \ge |lo_{\node{} \feature{}}|$ and (iv) $lo_{\node{} \feature{}} < 0 < up_{\node{} \feature{}} \land |up_{\node{} \feature{}}| < |lo_{\node{} \feature{}}|$:

\begin{align*}
&\relu{}^{\#_P}(\polyLowExp{\node{}}, \polyLowConst{\node{}}, \polyUpExp{\node{}}, \polyUpConst{\node{}}) = (\polyLowExp{\prime\node{}}, \polyLowConst{\prime\node{}}, \polyUpExp{\prime\node{}}, \polyUpConst{\prime\node{}}) \text{, where} \\
&(\polyLowExp{\prime\node{}})_{j,k}, (\polyLowConst{\prime\node{}})_{j} = \begin{cases}
    (\polyLowExp{\node{}})_{j,k}, (\polyLowConst{\node{}})_j & \text{in case (i), (iii)} \\
    0, 0 & \text{in case (ii), (iv)}\\
\end{cases} \\
&(\polyUpExp{\prime\node{}})_{j, k}, (\polyUpConst{\prime\node{}})_{j}=\begin{cases}
    (\polyUpExp{\node{}})_{j,k}, (\polyUpConst{\node{}})_j & \text{in case (i)} \\
    0, 0 & \text{in case (ii)}\\
    s\cdot (\polyUpExp{\node{}})_{j,k}, s \cdot (\polyUpConst{\node{}})_j + t & \text{in case (iii), (iv)}
\end{cases}\\
&\text{where } s = \frac{up_{\node{}\feature{}}}{up_{\node{}\feature{}}-lo_{\node{}\feature{}}} \text{ and } t=-\frac{up_{\node{}\feature{}} \cdot lo_{\node{}\feature{}}}{up_{\node{}\feature{}} - lo_{\node{}\feature{}}} 
\end{align*}

\noindent The $\relu{}^{\#_P}$ operation introduces no new variables; thus $\map{}'_\node{} = \map{}_\node{}$.


\subsection{Robustness certification for GCNs}
\label{sec:rbst-cert}

Certifying node robustness  requires to compute the difference \(\delta_{i, c, c'}\) between the scores \(\classifier(A, X)_{\node, \lab}\) and \(\classifier(A, X)_{\node, \lab'}\) of the original label $\lab$ and any other labels $\lab'$ for the node \(i\) being certified within the perturbation space (see \autoref{equ:node_robustness_min}). To (1)~approximate such difference, we use polyhedra \absint{} for the abstract domain to (2)~derive the lower bound for  \(\delta_{i, c, c'}\), which, in turn, will (3)~determine the GCN robustness for the node. 

\subsubsection{Robustness certification}
\emph{Step 1:}
To get bounds for \(\delta_{i, c, c'}\) in the abstract domain, we apply the \emph{fully connected} operation $\linear{\Delta^{\lab, \lab'}, 0}^{\#_P}$ to the output of the classifier \(\classifier\). Here $\Delta^{\lab,\lab'}$ is a $1\times|\labelSet|$ matrix where column $\lab$ is set to 1, column $\lab'$ is set to $-1$, and  all other entries are set to $0$. This corresponds to subtracting the score \(\classifier(A, X)_{\node, \lab'}\) from the score \(\classifier(A, X)_{\node, \lab}\) for each node of the graph. 

\noindent
\emph{Step 2:}
Certifying node level robustness requires to find the minimum value of \(\delta_{i, c, c'}\) allowed by the abstraction.
Formally, given an abstract element with coefficients  $\coef^{\prime}_{\node, c'}= \linear{\Delta^{\lab, \lab'},0}^{\#_P}(\coef_{\node})$ (for the original label $c$ and some other label $c' \ne c$) and the set of variables $\map'_\node$,  we 
define a \emph{minimization problem}:

\begin{equation}
    \min_{\map'_i} \delta_{i, c, c'} \textit{ subject to } \coef^{\prime}_{\node, c'} \cup \perturba{X}^{\localp, \globalp}
\end{equation}

Since inequalities in \(\coef^{\prime}_{\node, c'}\) are linear, the solution $\delta^*_{i, c, c'}$ of the minimization problem can be calculated by greedily flipping the feature variables with maximum negative changes as follows. 



\begin{enumerate}[label=\roman*)]
    \item For a $\layer$-layered GCN, let $\map{}'_\node{}$ denote the variables representing features from $\layer{}$-hop neighbors $\neighbors{\node{}}{\layer{}}$ for node $\node{}$: $\map'_\node = \bigcup_{k\in \neighbors{\node{}}{\layer{}}}\{\texttt{x}_{k,j}|j<m_0\}$. 
    Let $\polyLowExp{* i}$ be the reshaped matrix of
    $\polyLowExp{\prime i}$ from $\coef^{\prime}_{\node, c'}$ such that $(\polyLowExp{* i})_{k, j}$ corresponds to the coefficient of variable $\variable_{k, j} \in \map'_\node$ in the lower bound for each feature $j$ of node $k$.
    \item Using the input feature matrix $X$, the lower bound of score difference between label $c$ and $c'$ can be calculated as $\delta^{X}_{i, c, c'} = \sum_{\variable_{k,j}\in\map'_{i}}\bigl((\polyLowExp{*i})_{k, j} \cdot \nodeF_{k, j}\bigr) + \polyLowConst{i}$.
    \item Starting from $\delta^{X}_{i, c, c'}$, the minimum  $\delta^{*}_{i, c, c'}$ can be found by flipping the values of feature variable $\variable_{k, j}$ that reduce $\delta^{X}_{i, c, c'}$ the most within a local and global perturbation limit. 
    \item Let perturbation matrix $P$ be: $P_{k, \feature} = 1$ if $X_{k, \feature}=0$ and $P_{k, \feature} = -1$ if $X_{k, \feature}=1$. 
    Then $\theta_{k, j} = (\polyLowExp{*i})_{k, j} \cdot P_{k, \feature}$ represents the change of $\delta^{X}_{i, c, c'}$ if the value of variable $\variable_{k, j}$ is flipped. 
    \item Given a local perturbation limit $\localp$, we identify $\localp$ number of feature variables with the most negative changes for each node $k \in \neighbors{\node{}}{\layer{}}$. 
    Formally, these negative changes can be calculated as: $\hat{\theta} = \bigl[min\bigl(\op{MinK}_{\localp}([\theta_{k, j}]_{\variable_{k,j} \in \nodeVar_k}), 0\bigr)\bigr]_{k \in \neighbors{\node}{\layer}}$ where $\op{MinK_{k}}$ gets the minimum $k$ elements from a matrix.
    \item Given a global perturbation limit $\globalp$, we also identify the $\globalp$ number of most negative changes in $\hat{\delta}$. The sum of the $\globalp$ changes is the amount to deduct from $\delta^{X}_{i, c, c'}$ to derive the solution: $\delta^*_{i, c, c'} = \sum{\op{MinK}_{\globalp}(\hat{\theta}) + \delta^{X}_{i, c, c'}}.$
\end{enumerate}

\emph{Step 3:} If the lower bound of the minimum difference between the original label and other labels is larger than zero, then we can certify that node.
This \textit{lower bound} of the \textit{minimum difference} between the original label $c_i$ and any other label $c' \ne c_i$ of node $\node$ can be defined as  $r^*_i = min([\delta^*_{\node, \lab_i, \lab'}]_{\lab' \ne \lab_i})$. Using polyhedra \absint{}, the \emph{certification judgement for a node} $i$ is defined as $\certifier_\le(\graph, \perturba{\nodeF}, \classifier)_i = r^*_i$, and a node is  certified if $r^*_i>0$.
The GCN certifier is composed of node-level judgements $\certifier_\le(\graph, \perturba{\nodeF}, \classifier) = [r^*_i]_{i \le n}$. 

\begin{theorem} (Soundness of robustness certification)\\
\label{thm:poly_sound}
Certifier $\certifier{}_{\le}: (\adj{}, \perturba{\nodeF{}}, \classifier{}) \to \realDomain{}^\numNodes{}$ is sound; thus it produces a lower bound for the robustness of $\classifier$ over $\graph$.
\end{theorem}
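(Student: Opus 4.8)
The plan is to establish soundness in three stages. First, that the composed polyhedra abstract GCN $\classifier^{\#_P}$ over-approximates the concrete behaviour of $\classifier$ on the perturbation space. Second, that appending the fully connected operation $\linear{\Delta^{\lab,\lab'},0}^{\#_P}$ of Step~1 yields, for each node $i$, a \emph{symbolic} lower bound on the score difference $\delta_{i,\lab,\lab'}$. Third, that the greedy flipping procedure of Step~2 computes exactly the minimum of that symbolic lower bound over $\perturba{\nodeF}$, which in turn under-estimates the true minimum of $\delta_{i,\lab,\lab'}$. Chaining the three stages gives $r^*_i \le \min_{\nodeF' \in \perturba{\nodeF},\, \lab' \ne \lab_i} \delta_{i,\lab_i,\lab'}$, so $r^*_i > 0$ forces the right-hand side to be positive and hence node $i$ to be robust, i.e.\ $\nodeRC{i} \implies \nodeR{i}$; summing the robustness indicators over all nodes then yields $\graphRC{} \le \graphR{}$, i.e.\ $\certifier_{\le}$ produces a lower bound for the robustness of $\classifier$.

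For the first stage, by \autoref{thm:ai_soundness} it suffices to check that each polyhedra abstract operation over-approximates its set-valued concrete counterpart. For $\linear{\weight,\bias}^{\#_P}$ and $\gc{}^{\#_P}$ I would argue by induction on the layer index: assuming the retained inequalities of the current abstract element soundly bound the latent features (every latent matrix in its concretization satisfies them), the sign split $\weight^T = \max(\weight^T,0) + \min(\weight^T,0)$ together with non-negativity of $\Tilde{\adj}$ shows that combining the lower and upper coefficient matrices in the prescribed way preserves the bounding direction, while the horizontal concatenation inside $\gc{}^{\#_P}$ only enlarges the variable set $\map_i$ with $\layer$-hop neighbour variables without dropping any concrete point. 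For $\relu{}^{\#_P}$, the numerical interval bounds $lo_{i\feature},up_{i\feature}$ drawn from interval \absint{} are themselves sound, so the four-way case split is justified, and in the non-linear cases (iii)--(iv) \autoref{thm:relu_bounding} guarantees that the selected lines lie below (resp.\ above) the graph of $\relu$ on $[lo_{i\feature},up_{i\feature}]$, so the new inequalities still bound $\relu(\texttt{h}_{i,\feature})$. No operation loses soundness; since $\linear{\Delta^{\lab,\lab'},0}^{\#_P}$ is itself such a fully connected operation, \autoref{thm:ai_soundness} applies to the whole composition (its hypothesis $\perturba{\nodeF} \subseteq \concreteFunc(\absele_0)$ holds because $\concreteFunc$ incorporates the perturbation limits), and gives that every concrete value $\delta_{i,\lab,\lab'}(\nodeF')$ with $\nodeF' \in \perturba{\nodeF}$ lies in the concretization of the resulting abstract element.

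For the second stage, the operation $\linear{\Delta^{\lab,\lab'},0}^{\#_P}$ applied to the output abstraction leaves, for each node $i$, a single retained lower-bound inequality $\sum_{\variable_{k,\feature}\in\map'_i}(\polyLowExp{*i})_{k,\feature}\,\variable_{k,\feature}+\polyLowConst{i}\le\delta_{i,\lab,\lab'}$ whose variables are the input features of the $\layer$-hop neighbours of $i$. Soundness of the composition then means that for every $\nodeF'\in\perturba{\nodeF}$, substituting the entries of $\nodeF'$ for the variables gives $\delta^{\nodeF'}_{i,\lab,\lab'}:=\sum_{\variable_{k,\feature}\in\map'_i}(\polyLowExp{*i})_{k,\feature}\,\nodeF'_{k,\feature}+\polyLowConst{i}\le\delta_{i,\lab,\lab'}(\nodeF')$. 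For the third stage, $\delta^{\nodeF'}_{i,\lab,\lab'}$ is affine in $\nodeF'$ and, relative to the original $\nodeF$, flipping coordinate $(k,\feature)$ changes it by exactly $\theta_{k,\feature}=(\polyLowExp{*i})_{k,\feature}\cdot P_{k,\feature}$; hence $\min_{\nodeF'\in\perturba{\nodeF}}\delta^{\nodeF'}_{i,\lab,\lab'}$ equals $\delta^{\nodeF}_{i,\lab,\lab'}$ plus the minimum of $\sum_{(k,\feature)\in F}\theta_{k,\feature}$ over flip sets $F$ containing at most $\localp$ coordinates per node and at most $\globalp$ coordinates in total. An exchange argument shows the optimum may be taken inside the pool consisting, for each node, of its $\localp$ most negative $\theta_{k,\feature}$: replacing a flipped coordinate of a node by an unflipped, more negative one of the same node never increases the objective and preserves feasibility; within this pool every subset of size $\le\globalp$ is feasible, so the optimum is the sum of its $\globalp$ most negative entries, which is exactly what the $\op{MinK}_{\localp}$-then-$\op{MinK}_{\globalp}$ computation returns, with the $\min(\cdot,0)$ truncation encoding that only beneficial flips are used. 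Therefore $\delta^*_{i,\lab,\lab'}=\min_{\nodeF'\in\perturba{\nodeF}}\delta^{\nodeF'}_{i,\lab,\lab'}\le\min_{\nodeF'\in\perturba{\nodeF}}\delta_{i,\lab,\lab'}(\nodeF')$, and minimizing over $\lab'\ne\lab_i$ gives the claimed inequality for $r^*_i$.

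The step I expect to be the main obstacle is the exchange argument of the third stage: it must be phrased carefully enough to show that restricting to the per-node top-$\localp$ pool is without loss of optimality even when the global budget $\globalp$ is the binding constraint, and that the two nested $\op{MinK}$ selections together realise a genuine optimum of the $0/1$ program with simultaneous local and global limits. A secondary point requiring care is the bookkeeping in the induction of the first stage, namely tracking which variable set $\map_i$ accompanies each abstract element as $\gc{}^{\#_P}$ grows the neighbourhood from $\layer$ to $\layer+1$, so that the substitution in the second stage ranges over exactly the variables that were retained.
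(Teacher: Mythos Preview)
Your proposal is correct and follows essentially the same route as the paper: first verify that the input abstraction and each polyhedra operator $\linear{\weight,\bias}^{\#_P}$, $\gc^{\#_P}$, $\relu^{\#_P}$ over-approximate their concrete counterparts, invoke \autoref{thm:ai_soundness} for the composition (including the final $\linear{\Delta^{\lab,\lab'},0}^{\#_P}$), and conclude that $\delta^*_{i,\lab,\lab'}$ lower-bounds the true minimum score difference. Your exchange argument for the nested $\op{MinK}_{\localp}/\op{MinK}_{\globalp}$ step is actually more explicit than the paper, which simply asserts that $\delta^*_{i,\lab,\lab'}=\min(\concreteFunc(\absele'))$ as ``the solution to the minimization problem'' without spelling out optimality of the greedy selection.
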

\chadded{Proof for \autoref{thm:poly_sound} is demonstrated in Appendix \ref{append:proof_poly_sound}.}



\subsubsection{Counterexamples of robustness}
If a node $i$ is not certified, 
the steps above also enable the generation of counterexamples that provide an upper bound for the GCN's robustness. 

Let $\set{X}^{min}_{i,c'}$ be the set of variables selected for calculating $\delta^*_{i, c, c'}$, and $\nodeF^{\node c'}$ be the perturbed input feature matrix by flipping values of the entries corresponding to variables in $\set{X}^{min}_{i,c'}$ such that each entry $\nodeF^{\node c'}_{k, j}$ ($ \variable_{k, j} \in \nodeVar$) is defined as 
\noindent{\small\setlength{\abovedisplayskip}{0pt}\setlength{\belowdisplayskip}{0pt}
\begin{equation*}
    \nodeF^{\node c'}_{k, j} = \begin{cases}
        1 - \nodeF_{k, j} &\text{if } \variable_{k, j} \in \set{X}^{min}_{i,c'}\\
        \nodeF_{k, j} &\text{otherwise.}
    \end{cases}
\end{equation*}
}
$\nodeF^{\node\lab'}$ is a \emph{counterexample for robustness} of GCN $\classifier$ for node $\node$ if $argmax(\classifier(\adj,\nodeF)_\node) \ne argmax(\classifier(\adj,\nodeF^{\node\lab'})_\node)$. Let $\certifier_{\ge}$ be a certifier using polyhedra \absint{} with $\certifier_{\ge}(\graph, \perturba{\nodeF}, \classifier):=r^{*\prime}$ such that $r^{*\prime}_i = 0$ if a counterexample is found for node $i$ and $r^{*\prime}_i=1$ otherwise.
\begin{theorem} (Completeness of counterexamples) \\
\label{thm:poly_compelte}
Certifier $\certifier{}_{\ge}: (\adj{}, \perturba{\nodeF{}}, \classifier{}) \to \realDomain{}^\numNodes{}$ is complete and provides an upper bound for graph-level robustness.
\end{theorem}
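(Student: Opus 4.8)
The plan is to establish completeness in the sense of \autoref{def:soundness}, that is $\nodeR{\node{}}\implies\nodeRCV{\node{}}{\certifier_{\ge}}$ for every node $\node$, and then to read off the graph-level bound $\graphR{}\le\graphRUp{}$ from the remark recorded right after \autoref{def:soundness}. I would prove the node-level statement by contraposition: whenever $\certifier_{\ge}$ does \emph{not} flag node $\node$ as robust, node $\node$ is genuinely non-robust. Since $\certifier_{\ge}$ returns $r^{*\prime}_\node{}\in\{0,1\}$ for node $\node$, ``not flagged'' is the same as $r^{*\prime}_\node{}=0$, which by construction occurs exactly when the procedure of the preceding subsection has produced a candidate $\nodeF^{\node c'}$, for some label $c'\neq c$, that changes the prediction at node $\node$.

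The heart of the argument is to confirm that this candidate $\nodeF^{\node c'}$ is a genuine counterexample: it must lie in $\perturb{\localp}{\globalp}{\nodeF{}}$ and flip the label at node $\node$. The label flip is immediate from the condition under which $r^{*\prime}_\node{}=0$ is returned, namely $\argmax{\classifier(\adj,\nodeF)_\node}\neq\argmax{\classifier(\adj,\nodeF^{\node c'})_\node}$, which is precisely $R_{\classifier}(\graph,\nodeF^{\node c'},\node)=0$. Membership in the perturbation space follows by tracing Step 2: the selected variable set $\set{X}^{min}_{\node,c'}$ is built by keeping, for each node $k$ in the relevant neighborhood, at most $\localp$ feature variables via $\op{MinK}_{\localp}$, and then retaining at most $\globalp$ of those in total via $\op{MinK}_{\globalp}$. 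Because $\nodeF^{\node c'}$ differs from $\nodeF{}$ only by flipping the binary entries indexed by $\set{X}^{min}_{\node,c'}$, and each such flip changes $|\nodeF{}'-\nodeF{}|$ by exactly one in both the per-node sum and the overall sum, we obtain $\sum_j|\nodeF^{\node c'}-\nodeF{}|_{k,j}\le\localp$ for every $k$ and $\sum_{k,j}|\nodeF^{\node c'}-\nodeF{}|_{k,j}\le\globalp$, so $\nodeF^{\node c'}\in\perturb{\localp}{\globalp}{\nodeF{}}$. Hence there is some $\nodeF{}'\in\perturba{\nodeF{}}$ with $R_{\classifier}(\graph,\nodeF{}',\node)=0$, and by the definition of $\nodeR{\node{}}$ (which requires $R_{\classifier}(\graph,\nodeF{}',\node)>0$ for \emph{all} $\nodeF{}'$ in the space) node $\node$ is not robust. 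Taking the contrapositive then gives $\nodeR{\node{}}\implies r^{*\prime}_\node{}=1\implies\nodeRCV{\node{}}{\certifier_{\ge}}$, which is exactly completeness.

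For the upper-bound claim I would read completeness as the pointwise inequality of $0/1$ indicators $\nodeR{\node{}}\le\nodeRCV{\node{}}{\certifier_{\ge}}$, then sum over $\node=1,\dots,\numNodes{}$ and divide by $\numNodes{}$ to obtain $\graphR{}\le\graphRC{}$ instantiated at $\certifier=\certifier_{\ge}$; by definition the right-hand side is $\graphRUp{}$, so $\graphRUp{}$ over-estimates graph-level robustness, as claimed.

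I do not expect a deep obstacle: the statement is essentially bookkeeping about how $\certifier_{\ge}$ is defined, since it reports non-robustness only when it holds an explicit, budget-respecting perturbation that flips the label. The single delicate point --- and the only place where a careless mistake could slip in --- is checking that $\nodeF^{\node c'}$ respects the \emph{local} and \emph{global} limits \emph{simultaneously}; this rests on the two nested $\op{MinK}$ selections of Step 2 together with the fact that flipping one binary feature contributes exactly one unit to each of the two constraint sums. It is also worth stating explicitly that, unlike soundness, completeness imposes no condition on the nodes $\certifier_{\ge}$ leaves flagged as robust, so only this one implication must be proved.
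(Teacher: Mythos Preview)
Your proposal is correct and follows essentially the same contrapositive/contradiction argument as the paper: if $r^{*\prime}_\node=0$ then a label-flipping perturbation $\nodeF^{\node c'}$ has been found, contradicting robustness of node $\node$. You are in fact more careful than the paper's own proof, which takes for granted that $\nodeF^{\node c'}\in\perturb{\localp}{\globalp}{\nodeF{}}$, whereas you explicitly verify the local and global budget constraints via the nested $\op{MinK}_{\localp}$ and $\op{MinK}_{\globalp}$ selections; this extra bookkeeping is a welcome addition and does not change the underlying approach.
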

\chadded{Proof for \autoref{thm:poly_compelte} is shown in Appendix \ref{append:proof_poly_complete}.}


Thanks to the over-approximation property of polyhedra \absint{}, non-robust certification only needs to be performed on nodes with a negative lower-bound difference: $r^*_\node < 0$, since a node cannot be certifiably robust and non-robust at the same time.


\begin{figure}
    \centering
    \includegraphics[width=\linewidth]{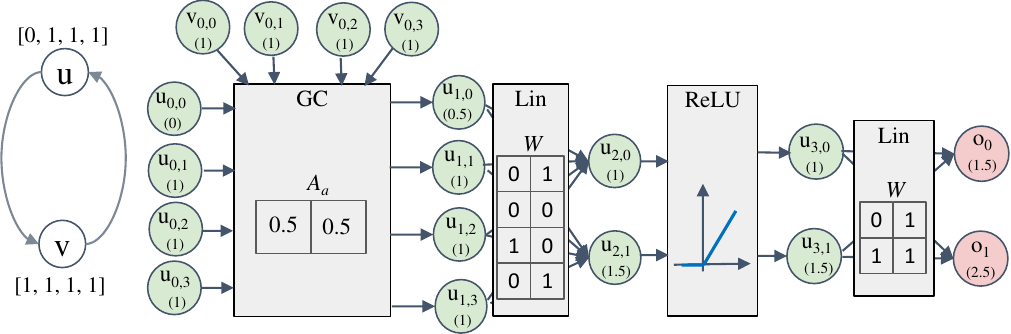}
    \caption{A sample graph and a GCN node classifier. Numbers in each component represent the weights matrix. Given a graph with two nodes $u$, $v$ and input features next to it, the GCN computes a score of each label for node $u$. Intermediate values (represented as green circles) at each layer $l$ and for dimension $k$ are denoted as $u_{\text{l}, \text{k}}$, while output values for a label $c$ are represented by $o_{\text{c}}$. The GCN predicts label $1$ for node $u$ since $o_1 - o_0> 0 $. If one can perturb exactly one node value, interval \absint{} is not able to certify the robustness of this example due to imprecision, while polyhedra \absint{} can successfully certify the robustness.}
    \label{fig:example}
\end{figure}

\subsubsection{Example certification}
\label{example:poly}
Figure \ref{fig:example} shows a simple graph with two nodes and two edges forming a loop and a simple GCN with one graph convolution layer and two linear layers classifying the nodes into one of the two labels (0 and 1). \chadded{The numbers in the graph convolution and linear components correspond to the normalized adjacency and weight matrices, respectively.} For simplicity, only \chadded{parts of the calculation} focusing on node $u$ is shown. 

\chadded{Using the input graph structure and node features, the GCN computes $o_0 = 1.5$ and $o_1 = 2.5$ for node $u$.} Thus, the node $u$ will be classified to label $1$ on the original graph \chadded{since $o_1 - o_0 > 0$}. Next, we aim to certify if the GCN is robust for node $u$ such that the GCN will always classify $u$ to label $1$ when we can flip exactly one node feature ($\localp=\globalp=1$).

\noindent
\textbf{Interval \absint{}.}
\chadded{
Interval \absint{} \cite{liu2020abstract} first estimates the interval bounds \emph{after the first GCN layer} (graph convolution and linear) using symbolic bounds, and then propagates them through interval arithmetic. Further details of the interval \absint{} approach are provided in Appendix \ref{append:interval}.
}

\chadded{
In this example, the first two layers are computed symbolically. After the $\gc$ layer, the latent variables can be expressed as $u_{1,j} = 0.5(u_{0,j} + v_{0,j})$ for $j \in [0,3]$, while the linear layer yields $u_{2,0} = u_{1,2}$ and $u_{2,1} = u_{1,0} + u_{1,3}$. Substituting $u_{1,j}$ with the expression from the $\gc$ layer gives $u_{2,0} = 0.5(u_{0,2} + v_{0,2})$ and $u_{2,1} = 0.5(u_{0,0} + v_{0,0} + u_{0,3} + v_{0,3})$. For $u_{2,0}$, since only one feature can be perturbed and $u_{0,2} = v_{0,2} = 1$ in the original node features, the lower bound is $0.5$ (by flipping one feature to $0$) and the upper bound is $1$ (no perturbation). Similarly, we obtain $u_{2,1} \in [1,2]$.
}

\chadded{
The remaining calculations are carried out using linear arithmetic. Since the lower bounds of $u_{2,0}$ and $u_{2,1}$ are both positive, the $\relu{}$ function acts as the identity function, yielding $u_{3,0} \in [0.5,1]$ and $u_{3,1} \in [1,2]$. The output bounds are then given by
\begin{align*}
&o_0 = 0 \times u_{3,0} + 1 \times u_{3,1} = 0 \times [0.5,1] + 1 \times [1,2] = [1,2], \\
&o_1 = 1 \times u_{3,0} + 1 \times u_{3,1} = 1 \times [0.5,1] + 1 \times [1,2] = [1.5,3], \\
&o_1 - o_0 = [1.5,3] - [1,2] = [-0.5,2].
\end{align*}
Since the lower bound of the difference between two labels is $-0.5 < 0$, the robustness of node $u$ cannot be certified.
}

\noindent
\textbf{Polyhedra \absint{}.}
Now, we perform the same certification using the polyhedra \absint{} proposed in this paper.
From \autoref{sec:abst-oper}, one can express each latent feature as linear inequalities of input terms. For simplicity, we ignore the bias term in the abstract element and use equal sign when the upper and lower bounds are equal. 

The calculation for the initial two layers are similar to the interval \absint{}.
\chadded{Since the first $\gc$ layer is linear, the bounds can be calculated precisely}, $u_{1,j} = 0.5 (u_{0,j} + v_{0,j}), j=[0, 3]$. The linear layer also gives exact bounds $u_{2,0}=0.5(u_{0,2} + v_{0,2})$ and 
$u_{2,1}=0.5(u_{0,0} + v_{0,0} + u_{0,3} + v_{0,3})$. 

\chadded{Different from the interval approach, polyhedra \absint{} continue calculates the symbolic bounds using the interval bounds.}
From the interval \absint{} described previously, $u_{2,0}\in [0.5, 1]$ and $u_{2,1} \in [1, 2]$, since both lower bounds are positive, $\relu$ becomes an identity function: $u_{3,0} = u_{2, 0}$ and $u_{3, 1} = u_{2, 1}$. Finally, the output bounds can be expresed as:
\begin{align*}
    &o_0 = 0 \times u_{3, 0} + 1\times u_{3, 0} =0.5(u_{0,0} + v_{0,0} + u_{0,3} + v_{0,3}) \\
    &o_1 = 1 \times u_{3, 0} + 1\times u_{3, 0} =0.5(u_{0,0} + v_{0,0} +  u_{0,2} + v_{0,2} + u_{0,3} + v_{0,3}) \\
    &o_1-o_0=0.5(u_{0,2} + v_{0,2})
\end{align*}
In the original graph, both $u_{0,2}$ and $v_{0,2}$ are $1$. Since we can only perturb one node feature, the lower bound of $o_1 - o_0 = 0.5 > 0$. Thus, we successfully certified the robustness of node $u$ for the GCN. \chadded{Notably, since the $\relu{}$ function acts as the identity function in this case, polyhedra \absint{} captures the GCN’s behavior \emph{exactly}, i.e., the lower and upper bounds are the same.}

\subsubsection{Implementation:}
The abstract operations are \textit{reversible}. One can start from the last layer and continuously back-substitute the latent feature variables until the input feature variables are obtained. This approach can further reduce memory and computation needs for certification by reducing unnecessary neighboring node computations, making the certification more efficient. Our implementation uses differentiable matrix operations to exploit the benefits of GPUs. Further details for implementation and experiments can be found in our reproducibility package.\footnote{\url{https://github.com/20001LastOrder/abstract_interpretation_for_gcn}}

\subsection{Measuring tightness of a certification approach}
\label{sec:uncertainty_region}



To provide a \textit{holistic} view of the certification approach's performance, we propose a new evaluation metric for a pair of robustness certifiers that provide lower and upper bounds by generalizing existing metrics \cite{zugner2019certifiable,singh2019abstract}. The \emph{uncertainty region} is the area between the upper and lower bounds of a perturbation limit range. For a specific perturbation limit, the difference between the upper and lower bound gives the percentage of nodes where the approach is unsure about its robustness. Then, the uncertainty region represents the total uncertainty of the certifier for a limit range. 

Given a pair of certifiers 
that produces a lower and upper bound for robustness with local limit $\localp$ and global limit $\globalp$, let $[\localp_\le, \localp_\ge]$ be the local perturbation range and $[\globalp_\le, \globalp_\ge]$ be the global perturbation range, we define the uncertainty region of the pair of certifier in perturbation range as
\begin{equation}
\text{\small$
    \label{equ:uncertainty_region}
    \int^{\globalp_\ge}_{\globalp_\le}\!\int^{\localp_\ge}_{\localp_\le}\!{\graphRobustUp{\localp}{\globalp} - \graphRobustLower{\localp}{\globalp}}\:d\localp\:d\globalp$
}
\end{equation}

 \begin{figure}[t]
     \centering \includegraphics[width=0.75\linewidth]{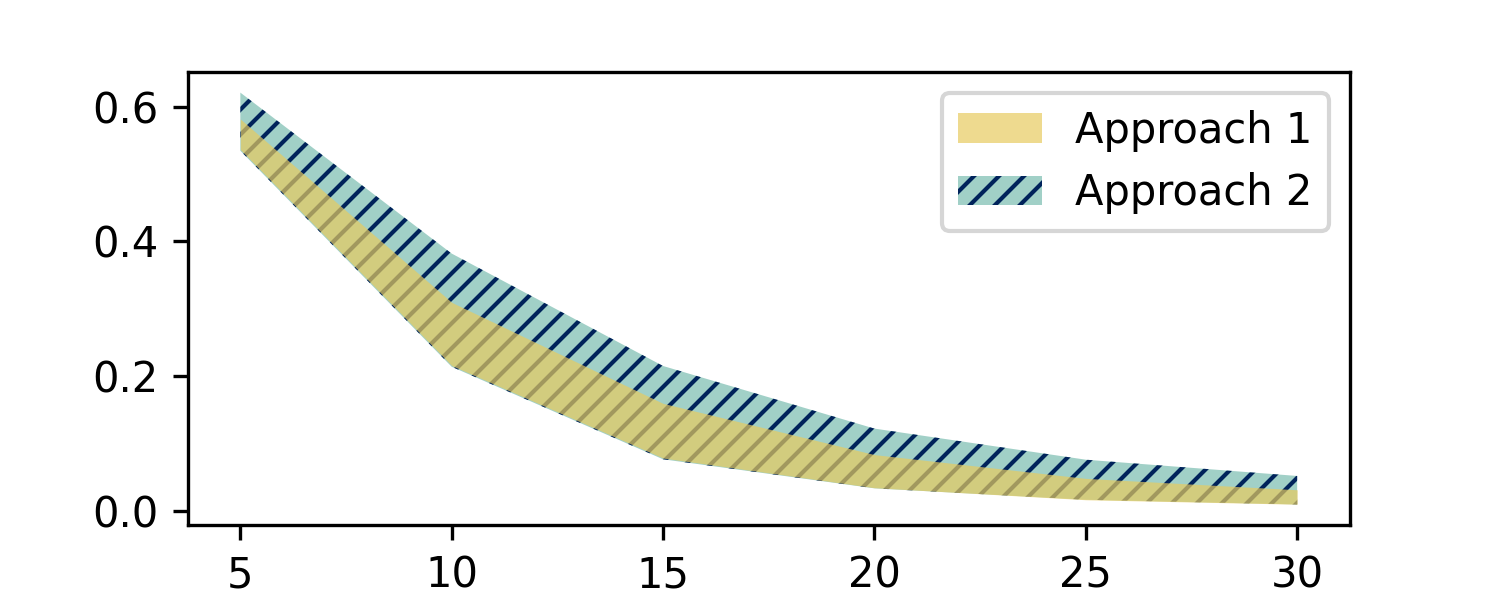}
     \caption{Uncertainty regions (colored areas)}
     \label{fig:region_example}
 \end{figure}

 \begin{example}
 \autoref{fig:region_example} exemplifies the uncertainty region of two certification approaches. In the example, the local perturbation is fixed, and the global perturbation ranges from 5 to 30. Borders of the shapes represent the upper and lower bound given by the two approaches; the colored areas are the uncertainty regions. In this example, approach 1 (yellow area) is more precise than approach 2 (blue shaded area) as it has a smaller uncertainty region. 
 \end{example}

 \subsection{Collective certification}
 \label{sec:collective-certification}
 Our method can also be used to derive the input for a collective certifier, enabling collective certification for the entire graph. The collective certifier evaluates the graph as a unified input for all nodes and derives the \textit{lower bound} of the collective robustness. To calculate the collective certification from single-node certificates, the certifier uses the highest (global) perturbation limit for each node while its label remains unchanged, i.e., it is still a robust node in the context of single-node certificate. We designate this value as the \textit{maximum robust limit} for each node. Formally, the maximum robust limit $\hat{p}^i$ for a node $i$ with a predetermined local perturbation limit satisfies the following conditions:
 \begin{equation}
     \hat{p}^i: \nodeRCP{i}{p_l}{\hat{p}^i}_\le = 1 \land \forall p_g > \hat{p}^i, \nodeRCP{i}{p_l}{p_g}_\le = 0
 \end{equation}
 
 Following this, the collective certifier determines the \textit{greatest} perturbation across the graph such that for each node, the number of perturbations affecting it remains below its maximum robust limit, while respecting the local perturbation limit. \autoref{alg:collective} illustrates the process of computing the maximum robust limit with a classifier. Starting with the original feature matrix, the perturbation limit is incrementally increased until the certifier fails to certify the node. Then the highest limit where the classifier flags the node as robust is the maximum robust limit.  

\begin{algorithm}
\caption{Pseudocode to find the maximum robust limit for a node}
\label{alg:collective}
\raggedright
\textbf{Input:} node $i$, local budget $\localp$, certifier $\certifier$, GCN classifier $\classifier$, and input graph $\graph=(\adj, \nodeF)$ \\
\KwResult{maximum robust limit $\hat{p}^i$ for the node $i$}
$p \gets 0$\;\\
\While{$\nodeRCP{i}{p_l}{\hat{p}^i}_\le = 1$}{
  $p \gets p + 1$\;
}
\Return p - 1
\end{algorithm}


\subsection{Robust training}
\label{sec:robust-training}
Since the certification process is differentiable, it can also be used for robust training. We define the loss function used in the robust training for each node. The final loss function for a batch can be the average of all loss values of the nodes in the batch. We also omit training details, such as backpropagation and gradient update.

Given a graph $\graph=(\adj,\nodeF)$ and perturbation space $\perturba{\nodeF}$, let classifier $\classifier$ be a GCN, and $\classifier^{\#_P}$ be the polyhedra abstract classifier corresponding to $\classifier$. For the input abstraction $\mathbf{a}_0$ of the perturbation space, let $\absele_\layer = \classifier^{\#_P}(\mathbf{a}_0)$ be the output abstract element. For any node $\node$, we compute $\delta^*_{\node, \lab_\node} = [\delta^*_{\node, \lab_i, \lab'}]_{\lab' \ne \lab_i}$ for some label $\lab_\node$ as the lower bound between the output score differences of $\lab_\node$ and other labels.


During training, we can use the ground  truth label if the node is labeled. Let $\lab_{gt}$ be the ground truth label of node $\node$: $\lab_\node = \lab_{gt}$. When a node $\node$ is not labeled, we can either ignore it during the training or use the label predicted by the GCN $\lab_\node = argmax(\classifier(\adj, \nodeF)_\node).$

Ideally, $\delta^*_{\node, \lab_{\node}}$ should be a matrix of positive values, meaning $\classifier$ predicts the ground truth label for all possible changes in the perturbation region. To achieve this, the binary cross entropy loss function can be used during training: $\op{bce}(\delta^*_{\node, \lab_{\node}}) = \mathbf{1}^{(1, |\labelSet| - 1)} \cdot \log \sigma(\delta^*_{\node, \lab_{\node}})$, where $\sigma$ is the sigmoid function and $\labelSet$ is the set of all labels.

 In some cases, controlling the values in $\delta^*_{\node, \lab}$ is necessary so that it does not grow indefinitely. With this in mind, one can use the robust hinge loss \cite{zugner2019certifiable}: $\op{hinge}_{t}(\delta^*_{\node, \lab})=max(-\delta^*_{\node, \lab} + t, 0)$ where $t$ is some predefined threshold. The hinge loss only penalizes the GCN if $\delta^*_{\node, \lab}$ is smaller than $t$. Consequently, this loss is suitable for semi-supervised learning with unlabeled nodes. One can give a larger threshold for labeled nodes with ground truth labels and a smaller threshold for unlabeled nodes with predicted labels.

\subsection{Discussion}
This section presents our GCN certification approach based on polyhedra \absint{} in detail. Despite its increased computational demands, the polyhedra-based approach offers many advantages over the interval-based approach.

\textbf{Abstraction tightness:} The polyhedra abstract domain provides a notably tighter approximation to the perturbation space when compared to the interval domain. The interval domain approximates possible outputs of the GCN with a multi-dimensional box, as it solely tracks the minimum and maximum values. In contrast, the polyhedra domain approximates the values through linear inequalities of input features, resulting in a more precise approximation. Indeed, as demonstrated in \autoref{example:poly}, the interval-based method fails to certify the GCN under a simple example, whereas the polyhedra-based approach can readily certify the example.

\textbf{Counterexamples:} In addition to the standard lower-bound robustness offered by the interval-based method, the polyhedra-based approach can also create potential counterexamples, identifying upper bounds of robustness. In the polyhedra domain, the coefficient for each input feature used to approximate the output of a GCN can be treated as the impact of altering that particular input feature on the output. Thus, the set comprising the influential features could form a counterexample. Such counterexamples serve as valuable input for understanding the failure patterns of a GCN.

\textbf{Applicability:} Our method can serve as input for other techniques aimed at deriving other types of robustness certification or enhancing robustness. While the interval-based method may also be used for collective certification, its inherent imprecision can significantly impact the resulting performance when compared to the tight approximation of the polyhedra domain.  Moreover, our method can further improve the robustness of GCNs by integrating lower bounds into the loss function during robust training, facilitated by the differentiable operations used in the method. 

\section{Evaluation}
\label{sec:evaluation}


In order to evaluate the effectiveness and performance of our \absint{}-based robustness certification approach, we conduct experiments on three well-known node classification datasets. We target the following research questions in our experiments and perform all experiments on a server with a single Nvidia RTX A6000 GPU:

\begin{enumerate}[label=RQ\arabic*,noitemsep,topsep=0pt, leftmargin=*]
    \item How tight is the certification based on polyhedra \absint{}?
    \item How is the runtime performance of polyhedra Abs\-Int certification compared to existing approaches?
    \item How effective is the approach to certify \textit{collective robustness} of the entire graph?
    \item What is the effect of the polyhedra \absint{} certification on robust training?
\end{enumerate}

\subsection{Experiment setup}
\noindent\textbf{Datasets:} We evaluate our approach using three citation graphs often used for benchmarking node classification tasks: Citeseer \cite{giles1998citeseer}, 
Cora \cite{mccallum2000automating}, 
and PubMed 
\cite{sen2008collective} (see detailed statistics of the datasets in \autoref{tab:uncertainty_region}). 
Although graph sizes in these datasets are relatively small compared to large-scale graphs such as social networks and knowledge graphs, they are widely recognized as benchmarks for previous work of GNN certifications\cite{lee2019tight,liu2020abstract,zugner2019certifiable}. We designate the assessment of our approach on large-scale graphs to future work.  
To set up a baseline, we follow existing configurations \cite{zugner2019certifiable} for each dataset. The fixed local perturbation budget is set to $1\%$ of the number of nodes features $\localp = 0.01\numFeatures$, while the global perturbation range is set to between $1$ and $50$: $\globalp \in[1, 50]$.


\noindent\textbf{Baseline approaches:}
We compare our approach with two prevalent node classifier certification techniques. The first achieves the current state-of-the-art certification tightness, while the second also relies on \absint{} as our method. Although other certification methods exist, they typically follow a black-box approach, leading to probabilistic robustness certifications on a modified version of the GNN, or they focus on tasks other than node classification. Consequently, these methods fall outside the scope of this paper. 

The \emph{dual optimization approach} (\textbf{\Dual{}}) \cite{zugner2019certifiable} uses dual programs to provide a lower and upper bound on the robustness of a GCN over a given graph. It achieves the current state-of-the-art performance in certifying node feature perturbations for node classification.
\chadded{During our experiments, we identified an issue in the original implementation, where accumulated numerical errors from sparse matrix multiplications led to imprecise upper bounds. We resolved this issue by consistently using dense matrix multiplications, which yielded tighter upper bounds. We refer to the original implementation as \textbf{\DualO{}} and the corrected version as \textbf{\DualF{}}. 
}

The interval \absint{} \cite{liu2020abstract} uses 
interval as the abstract domain (\textbf{\Interval{}}) and ignores the relations between node features. We implemented this method as our numerical activation bounding. The intervals provide fast estimations of the lower bound robustness. However, this approach cannot produce any counterexamples. The paper mentions two variations (Max and TopK) of the method; we only use the more precise version (TopK) in the experiments.

\noindent\textbf{Our approaches:}
We evaluate two variations of our technique: when performing activation bounding with the interval abstraction before polyhedra abstract interpretation, one can choose \textit{Max} perturbation (\textbf{\OursM{}}) or \textit{TopK} perturbations (\textbf{\OursT{}}). \textit{Max} yields a faster certification but \textit{TopK} is more precise. 

\noindent\textbf{Compared metrics:}
To compare the tightness of certification (RQ1), we evaluate (1) the lower bound of robustness for different perturbation budgets, and (2) the uncertainty region over the perturbation range. Since the local perturbation budget is fixed to $\localp$ and the global perturbation budget is discrete, we can simplify \autoref{equ:uncertainty_region} of uncertainty region to one summation:
\begin{equation}
\label{equ:ur_evaluation}
\text{\small
    $\textstyle\sum^{\globalp_\ge}_{\globalp=\globalp_\le}{(\graphRobustUp{\localp}{\globalp} - \graphRobustLower{\localp}{\globalp})}$
}
\end{equation}
Runtime performance (RQ2) is evaluated using the time needed to finish the certification with different perturbation budgets. Then, the collective certification is measured by the lower bound collective robustness (RQ3). Finally, robust training (RQ4) is evaluated using accuracy and the lower bound robustness.



\begin{table}[tb]
    \centering
    \begin{tabular}{|c|c|c|c|}
    \hline
     & Citeseer & Cora & Pubmed \\
    \hline
    Nodes & 2995 & 3312 & 19171 \\
    Edges & 8416 & 4715 & 44324 \\
    Features & 2876 & 4715 & 500 \\
    Labels & 7 & 6 & 3\\
     \hhline{|====|}
    \DualO{} &3.32  & 6.24 & 9.60 \\
    \DualF{} &1.81 & 4.51  &  \textbf{5.81}  \\ \hline
    \OursT{} &\textbf{1.79} & \textbf{4.42}  & 5.93 \\
    \OursM{} &2.24 & 6.50  & 6.65\\
    \hline
    \end{tabular}
    \caption{Dataset statistics and uncertainty region for certifiers (lower is better)}
    \label{tab:uncertainty_region}
\end{table}

\subsection{RQ1: Certification tightness}
\noindent\textbf{Setup:}
We perform two different analyses to evaluate the tightness of the certification methods. First, we check lower bounds from each method. Since all methods are sound, a higher lower bound means the result is closer to the actual robustness, thus, more precise.

Next, we evaluate the uncertainty region for all methods when 
both the lower bounds and upper bounds are evaluated over a range of perturbation budgets. A lower uncertainty region means the certification method is tighter over the \textit{entire perturbation range}. This evaluation excludes \textit{\Interval{}} as it does not produce an upper bound. We use pre-trained GCNs in this experiment \cite{zugner2019certifiable}.

\noindent\textbf{Analysis:}
\autoref{tab:lowerbound} shows lower bounds produced by each method over four different perturbation budgets. We merge the result of \textit{\DualO{}} and \textit{\DualF{}} as they give the same lower bound. In general, \textit{\Interval{}} is the least tight method, giving each perturbation's lowest value on all datasets. On the contrary, \textit{\OursT{}} gives the most precise lower bound for all cases. This can be attributed to the optimum bounding area used in bounding the non-linear activations. The \textit{\Dual{}} method also gives precise lower bounds that are slightly lower than \textit{\OursT{}}. Interestingly, the sets of nodes certified by \textit{\Dual{}} and \textit{\OursT{}} are not exactly the same. This difference means one can run both \textit{\Dual{}} and \textit{\OursT{}}, take the union of the verified nodes, and get an even higher lower bound on the robustness.

The uncertainty region values of the methods computed over the perturbation range 0 and 50 are shown in \autoref{tab:uncertainty_region}. One can observe that the fixed version \textit{\DualF{}} significantly improves the precision of the upper bound compared to \textit{\DualO{}}. The two approaches have the same lower bounds, but \textit{\DualF{}} has a much smaller uncertainty region compared to \textit{\DualO{}} across all datasets. Meanwhile, \textit{\OursT{}} is the tightest approach in 2 out of 3 datasets while being slightly worse than \textit{\DualF{}} in Pubmed. This result implies that \textit{\OursT{}} can also provide a comparable upper bound as \textit{\DualF{}}, thanks to the tight linear bounds provided by the polyhedra operators.


\begin{table*}[t!]
\centering
\scalebox{0.9}{
{\footnotesize\begin{tabular}{|c|cccc||cccc||cccc|}
\hline
\multicolumn{1}{|c|}{\multirow{2}{*}{Approach}} & \multicolumn{4}{c||}{Citeseer} & \multicolumn{4}{c||}{Cora} & \multicolumn{4}{c|}{Pubmed} \\
\hhline{~------------}
& 1 & 10 & 20 & 30 & 1 & 10 & 20 & 30 & 1 & 10 & 20 & 30 \\
\hline
\Interval{} & 75.75 & 2.14 & 0.03 & 0.0 & 86.38 & 13.72 & 1.27 & 0.13 & 73.65 & 2.23 & 0.06 & 0.01\\
\Dual{} & 86.9 & 21.38 & 3.32 & 0.94 & \textbf{93.69} & 55.03 & 26.61 & 11.22 & 84.97 & 12.94 & 4.22 & 1.66 \\ \hline
\OursT{} & \textbf{86.96} & \textbf{21.56} & \textbf{3.41} & \textbf{0.97} & \textbf{93.69} & \textbf{55.16} & \textbf{27.15} & \textbf{11.45} & \textbf{85.32} & \textbf{13.6} & \textbf{4.7} & \textbf{1.97} \\
\OursM{} & \textbf{86.96} & 18.72 & 2.14 & 0.63 &\textbf{93.69} & 51.39 & 19.6 & 5.84 & \textbf{85.32} & 9.24 & 1.85 & 0.44 \\
\hline
\end{tabular}}}
\caption{Certifier lower bounds for different perturbation budgets, higher values mean tighter bounds (all values are in $\%$)}
\label{tab:lowerbound}
\end{table*}

\begin{tcolorbox}
\textbf{Answer to RQ1.}  
The polyhedra abstract interpretation-based certification method \OursT{} provides tighter lower bounds on the robustness compared with all other baseline methods. Moreover, within a perturbation range of 50, its uncertain region is smaller than that of all other baselines in two out of the three datasets, while its performance is on par with \DualF{} when evaluated on the Pubmed dataset.  
\end{tcolorbox}

\subsection{RQ2: Runtime performance}

\noindent\textbf{Setup:} 
In this evaluation, we perform the same certification as in the previous section and measure the runtime for each approach. For approaches that provide both lower and upper bounds, we run the certification to get both bounds. For the \textit{\Interval{}} method, we only get the lower bound. We use the median of ten different runs.

\begin{figure*}[tb]
    \centering
    \includegraphics[width=\textwidth]{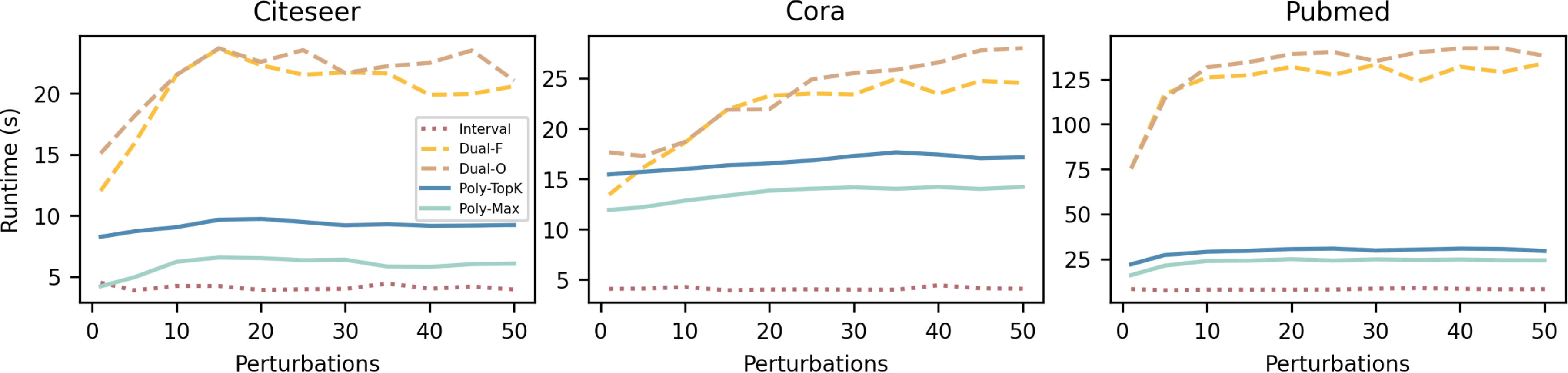}
    \caption{Runtime for certifiers (in seconds)}
\label{fig:runtime}
\end{figure*}

\noindent\textbf{Analysis:} \autoref{fig:runtime} shows the runtime evolving for all three datasets over the perturbation range. In general, the runtime of all approaches increases as the perturbation budget increases. The PubMed dataset has the longest runtime since it has the most number of nodes to be certified. \textit{\DualF{}} and \textit{\DualO{}} have similar runtime, hence our fix to the upper bound does not introduce any performance overhead. Meanwhile, \textit{\OursM{}} is faster than \textit{\OursT{}}, showing the tightness-runtime tradeoff between the two settings.

\textit{\Interval{}} has the shortest runtime for all three datasets. However, it also provides the worst precision in terms of lower-bound robustness. The figure also shows that our approach runs faster than the \textit{\Dual{}} method while still providing a better certification tightness for Citeseer and Cora. Remarkably, our method also scales better with respect to the perturbation budget and size of the dataset. It almost stays flat with respect to the perturbation budget while the \textit{Dual{}} approach has a significant jump at around 10 perturbation budget. At the same time, when moving from a smaller dataset Citeseer to a larger dataset Pubmed, the runtime for \textit{\Dual{}} increases by more than 6 times, while the runtime for our approach only increases by around 3 times. {This significant improvement in runtime can be attributed to our integration of interval \absint{} for activation bounding and the efficient GPU-based parallel implementation.

\begin{tcolorbox}
\textbf{Answer to RQ2.}  
Both abstract interpretation approaches \OursT{} and \OursM{} exhibit significantly better runtime performance compared to the \Dual{} approaches, while simultaneously providing tighter certification. Additionally, their runtime escalation with respect to the perturbation budget is more gradual compared to the \Dual{} approaches.
\end{tcolorbox}

\subsection{RQ3: Collective robustness}
\noindent\textbf{Setup:}
In this research question, we investigate the impact of single-node certification obtained through our method \textit{\Ours{}} (\OursT{}) on collective certification in comparison to the \textit{\Dual{}} approach when incorporated as input to a \textit{collective certifier}. We ignore the \Interval{} approach as it provides significantly less precise lower bound robustness compared to the other two approaches. For this experiment, we use the current state-of-the-art method for collective certification based on linear optimization proposed by Schuchardt et al. \cite{schuchardt2023collective}. For both \Dual{} and \Ours{} methods, we derive inputs for the collective certifier by determining the \textit{maximum} robust limit for each node following the procedure in \autoref{alg:collective}. 
We ignore the local budget and only focus on the global budget in the experiment for simplicity. 

The collective certifier categorizes perturbation based on \emph{feature addition} (changing a feature from 0 to 1) and \emph{feature deletion} (changing from 1 to 0). Given that \textit{\Ours{}} calculates a lower bound expression using node feature variables, it enables fixing the value of some feature variables during robustness certification to accommodate both perturbation types. Conversely, the \textit{\Dual{}} approach does not distinguish the two perturbation types. Thus, we set the same value for feature addition and feature deletion to the collective certifier for \Dual{}. 

\begin{figure}[tb]
    \centering
    \subfloat{\includegraphics[width=0.45\linewidth]{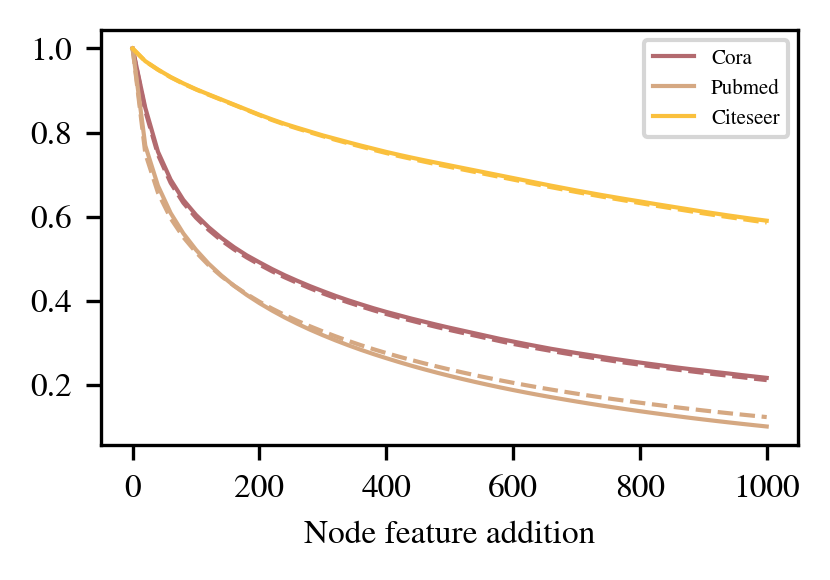}}
    \quad
    \subfloat{\includegraphics[width=0.45\linewidth]{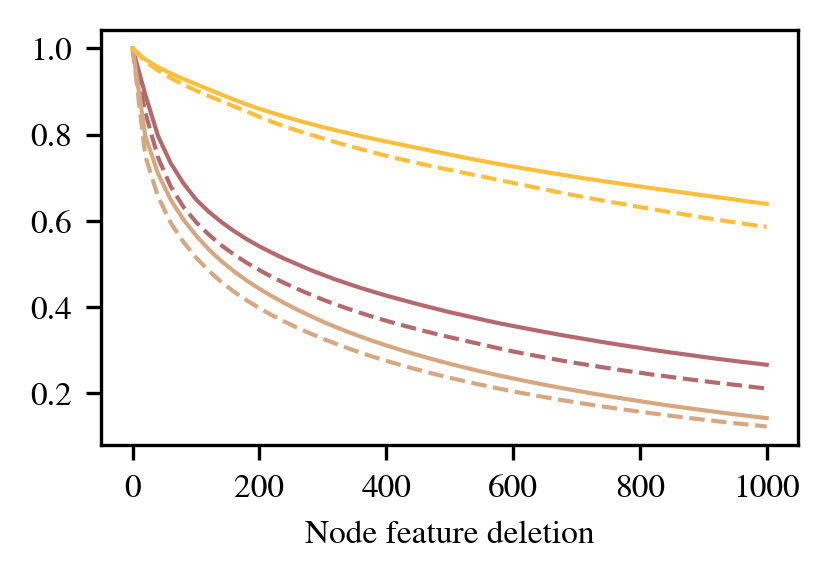}}
    \caption{Collective certification results (Higher means tighter certification). We compare \textit{\Ours{}} (solid line) and \textit{\Dual{}} (dashed line) approaches on feature addition (left) and deletion (right).}
    \label{fig:collective}
\end{figure}

\noindent \textbf{Analysis:}
\autoref{fig:collective} demonstrates the performance of the two approaches on collective certification of the three datasets. We note that the value of collective certification is significantly higher compared to the value of single-node certification. For example, both methods yield a single-node certification value of 0 for Citeseer when the perturbation budget exceeds 150. Yet, during collective certification, robust nodes still exist at a perturbation budget of 1000. Such difference aligns with the findings of prior work~\cite{schuchardt2023collective}, underlining the differences between the two attack models.

Generally, \textit{\Ours{}} outperforms the \Dual{} approaches on all three datasets in feature addition and deletion. The only exception is for feature addition in the Pubmed dataset, where the performance of \Ours{} is slightly lower than that of \Dual{}. We attribute the improvement of \Ours{} over \Dual{} to the tighter single-node certification lower bound derived by our approach, coupled with its capability to adapt to different perturbation types. Interestingly, the improvement observed in feature deletion is more significant than in feature addition. A possible explanation for this disparity is the sparsity of the feature matrix. For example, in both datasets, fewer than $2\%$ of the features have a value of 1, leading to a significantly larger number of feature addition combinations compared to feature deletion combinations. 

Interestingly, \autoref{tab:lowerbound} shows that the single node certification in Cora is generally higher than that in Citeseer. Conversely, the collective certification of Citeseer is much higher than Cora. We suspect this divergence is caused by the different graph structures, such as the density of the graph.

\begin{tcolorbox}
\textbf{Answer to RQ3.}  
When used as input for collective certification, \Ours{} mostly outperforms the \Dual{} approach. Notably, the improvement is more significant in the context of feature deletion than feature addition. 
\end{tcolorbox}


\subsection{RQ4: Robust training}
\begin{table}[t!]

\makebox[\linewidth][c]{
\scalebox{0.8}{
\begin{tabular}{|c|cc|cc|cc|cc|cc|cc|}
\hline
\multicolumn{1}{|c|}{\multirow{2}{*}{Method}} & \multicolumn{2}{c|}{Citeseer} & \multicolumn{2}{c|}{Citeseer-U} & \multicolumn{2}{c|}{Cora} & \multicolumn{2}{c|}{Cora-U} & \multicolumn{2}{c|}{PubMed} & \multicolumn{2}{c|}{PubMed-U}  \\
\hhline{~------------}
& \makecell{LB@12} & Acc & \makecell{LB@12} & Acc & \makecell{LB@12} & Acc & \makecell{LB@12} & Acc & \makecell{LB@12} & Acc & \makecell{LB@12} & Acc\\
\hline
Standard CE & 25.51 & 69.53 & -- & -- & 60.83 & 83.59 & -- & -- & 23.09 & 84.49 & -- & -- \\
\hhline{|=============|}
 \Dual{} + RH & \textbf{51.40} & 68.92 & 77.78 & \textbf{70.31} & 74.60 & 83.01 & \textbf{91.70} & 82.75 & 81.08 & 83.63 & 81.73 & \textbf{83.62} \\
\Ours + RH & 50.55 & 68.86 & 77.88 & 68.8 & 74.70 & 83.11 & 91.15 & 82.92 & 80.44 & 83.19 & 81.52 & 83.4 \\
\hhline{|=============|}
\Dual{} + BCE & \textbf{54.81} & 69.59 & 76.24 & 69.95 & 76.48 & 83.09 & 90.57 & 82.83 & 79.65 & 83.33 & 79.66 & 83.23 \\
\Ours + BCE & 53.53 & 69.30 & 75.68 & 69.83 & 76.54 & 83.33 & 90.08 & 82.8 & 79.88 & 83.46 & 79.8 & 83.38 \\
 \hline
\end{tabular}
}
}
\caption{Robustness lower bound and test accuracy of robust training results, average over 5 random seeds, higher is better}
\label{tab:robust_training}
\end{table}

\noindent\textbf{Setup:}
In this experiment, we aim to evaluate the effect of our certification method when used in robust training. 
We use two loss functions: the robust binary cross-entropy loss (\emph{\bce}) and the robust hinge loss (\emph{\hinge}) (see \autoref{sec:robust-training}). We exclude the regular cross-entropy loss since the robust loss already considers ground truth labels. In RQ1, we observe \emph{\OursT{}} provides the best lower bound. So we use it to evaluate the robustness of the trained models. 
 
We compare our approach with standard cross-entropy training and robust training with the \Dual{} method (\Interval{} does not provide robust training). We label the training using standard cross entropy as \textbf{CE} and robust training with further robust losses as \textbf{Method name + Loss name}. We only use the \textit{\OursM{}} approach in robust training. The numerical bound must be re-calculated for each training step since the weights are continuously updated. The \textit{\OursT{}} method would make the training significantly slower compared with the standard cross-entropy training. 

Finally, we also evaluate the training in a semi-supervised setting, where we provide the ground truth label in the robust loss for labeled nodes and the predicted label for unlabeled nodes. Such semi-supervised settings are denoted as \textit{Dataset-U}. In this setting, robust hinge loss is used for both approaches in the case of unlabeled nodes, while labeled nodes are treated as indicated in \autoref{tab:robust_training} with robust hinge or BCE loss.
We use the same GCN architecture and robust training procedure as in \textit{\Dual{}} \cite{singh2019abstract} with a global perturbation budget of 12. We divide the data into $10\%$ training and $90\%$ testing, averaging the result of five random splits. 

\noindent\textbf{Analysis:}
Robust training results are shown in \autoref{tab:robust_training}. Columns show robustness of lower bound and test accuracy for datasets in supervised and semi-supervised settings. The first row shows results for cross-entropy loss and the following rows show training performance with robust losses. 

Compared to the standard CE loss, the robust training improves the robustness significantly in \textit{all} cases while keeping the accuracy almost unaffected in both \textit{\hinge} and \textit{\bce{}}. Moreover, semi-supervised training improves the robustness significantly for both Citeseer and Cora datasets compared to the supervised setting. Interestingly, the robustness of PubMed-U is only slightly enhanced compared to PubMed. The small improvement may be because PubMed contains many nodes and even $10\%$ of nodes is enough for the GCN to be trained near its limit. Thus, the semi-supervised setting provides more benefits on relatively small datasets like Citeseer and Cora.

We carry out statistical analysis to compare the performance of \textit{\Ours{}} and \textit{\Dual{}} under the same robust training setting across different datasets. We first run the training for 5 random seeds and record all metric values of the trained model. Then, we calculate the Cohen-d effect size \cite{cohen2013statistical} and highlight (in bold) the ones with a larger than 0.8 Cohen-d value (corresponding to a large effect size). The performance of \textit{\Ours{}} and \textit{\Dual} is similar in robust training. For robustness, \textit{\Dual} is better in 3/12 cases, and the two approaches are comparable in 9/12 cases. \textit{\Dual} also has a higher test accuracy in 2/12 cases, and the two approaches are  similar 
in 10/12 cases.  

\chadded{
We hypothesize that the slightly weaker robust training performance of \textit{\Ours{}} may be caused by a vanishing gradient problem~\cite{hochreiter1998vanishing} associated with the lower bound used in bounding $\relu$. As shown in \autoref{thm:relu_bounding}, one of the lower bounds for $\relu$ is $0$. The frequent occurrence of this lower bound may lead to flattened gradients, thereby impacting training performance. To test this hypothesis, we replaced the $0$ lower bound with $\lambda x$ during training, where $\lambda$ is a small positive constant, in order to preserve nonzero gradients. Note that this adjustment remains sound, since $\lambda x \leq 0$ when $x \leq 0$. However, under the same training configuration and using the \bce{} loss, we observed no significant performance change across datasets for $\lambda \in \{0.1, 0.5\}$. Thus, further research will be needed to improve the robust training performance of \textit{\Ours{}}.
}


\begin{tcolorbox}
\textbf{Answer to RQ4.}  
The \Ours{} approach enhances the robustness of the GCN model when incorporated during robust training. Furthermore, this enhancement is comparable with the contemporary state-of-the-art robust training techniques.
\end{tcolorbox}

\subsection{Threats to validity}
\chadded{\textbf{Internal validity.}
Internal validity concerns the experimental setup. Specifically, we observed potential imprecision in the \Dual{} approach caused by sparse matrix multiplications. Although the fixed version improves the tightness of the upper bound, we report results for both versions to ensure completeness. In addition, the runtime of each approach may fluctuate across runs. To mitigate this variability, we executed all approaches in the same environment and report the median over ten runs. For robust training, GCN training involves several random factors, such as dataset shuffling. To address this threat, we report average metric values over five random seeds.
}

\noindent
\textbf{External validity.}
\chadded{
The main external validity threat of this work lies in the datasets used for evaluation. Although widely adopted in prior studies \cite{liu2020abstract,zugner2019certifiable,lee2019tight}, these datasets primarily contain homogeneous graphs of small to medium size. This choice reflects the design limitations of GCNs, which operate on homogeneous graphs. In contrast, many real-world large-scale graphs, such as knowledge graphs (e.g., Chameleon, Squirrel, and Texas~\cite{pei2020geom,rozemberczki2021multi}), are heterogeneous, containing multiple node and edge types. Extending the evaluation to such settings would require adapting the approach to alternative GNN architectures. We therefore leave this direction to future work.
}

\chadded{
\noindent\textbf{Construct validity.}
This threat mostly concerns metric selection in this study. For all experiments, we adopt widely used metrics, such as the certified lower bound for robustness tightness and accuracy for node classification. In addition, we introduce a new metric, the uncertainty region, which provides a holistic assessment of a certifier’s performance across a range of perturbation limits.
}

\section{Related work}
\label{sec:rel-work}

\subsection{Graph adversarial examples}
The robustness of neural networks against adversarial examples has been studied extensively \cite{szegedy2013intriguing}. Graph adversarial examples also receive significant interest, and multiple surveys exist on this topic \cite{sun2022adversarial,chen2020survey,GNNBook-ch8-gunnemann}. Two main types of approaches exist for this task: node-level and edge-level. 

\textbf{Node-level} adversarial examples focus on either modifying node attributes or adding nodes to the target graph. The task type determines the approaches used to select nodes or node features to change. Many adversarial attacks by changing node features use GNN gradients to increase task-specific loss \cite{zugner2018adversarial,takahashi2019indirect,ma2020towards}. Methods for fake node injection typically use reinforcement learning \cite{sun2020non} and optimization \cite{tao2021single,zou2021tdgia}. Generative models such as generative adversarial networks (GAN) have also been applied to both node feature perturbations \cite{bose2019generalizable} and node injection \cite{wang2018attack}.

\textbf{Edge-level} changes such as adding, removing, or rewiring edges in the graph can also create adversarial examples for GNNs. Reinforcement learning-based approaches are used to learn to select the most influential edge to modify \cite{ma2019attacking,dai2018adversarial}. Some meta-learning \cite{zugner2019adversarial} and optimization-based methods \cite{wang2019attacking,xu2019topology,zou2021tdgia} are also used for adversarial construction with edge addition or removal. 

At the same time, different defense methods against adversarial examples have been proposed. Many of the methods improve the training of GNNs. For example, the \Dual{} \cite{zugner2019certifiable} baseline used in this paper can improve the robustness of GNNs by showing the lower bound of robustness during training. Other methods improve the robustness by focusing on specific graph structures during training \cite{miller2024complex}, or even modifying the GNN architecture \cite{chen2021understanding}. Tao et. al. \cite{tao2023graph} proposed a method to fix and immunize certain parts of the graph from any modification to improve the robustness of GNNs.

Counterexamples derived from our approach can be seen as node-level adversarial examples for GCNs. But our approach also provides certified lower bounds of the GCN being robust under \textit{any} node-level graph adversarial examples. Furthermore, our method can be used to improve the robustness during training, similar to the \Dual{} method.

\subsection{Neural networks certification}
Many approaches have been proposed to certify properties of neural networks. These approaches can be categorized by the types of abstraction they use during the verification. Two surveys provide comprehensive overviews \cite{liu2021algorithms,albarghouthi2021introduction}. 

\textbf{Optimization-based} methods abstract the neural network into a set of optimization clauses where the property is formulated as the objective function. Then an external optimization solver can be utilized to check the property can be satisfied. NSVerify \cite{meyer2022reachability} and MIPVerify \cite{tjeng2017evaluating} translate the verification problem into a mixed integer programming problem. ILP \cite{bastani2016measuring} uses iterative linear programming to verify the robustness of neural networks. 
More recently, mm-bab \cite{ferrari2022complete} combines branch-and-bound and dual optimization to certify neural networks efficiently on GPUs. 


\textbf{Abstract interpretation-based} methods typically define a domain to capture the input of the neural network as well as the transformation operations of elements in the abstract domain. Deeppoly \cite{singh2019abstract} uses the Deeppoly abstract domain to verify convolutional neural networks with ReLU or Sigmoid activation functions. ReluVal \cite{wang2018formal} and Neurify \cite{wang2018efficient} use symbolic intervals to capture the set of inputs for neural networks. 

\textbf{Hybrid methods} combine different approaches to design a hybrid verifier. Alpha-beta crowning \cite{zhang2018efficient,xu2021fast,wang2021beta} combines bound propagation and branch and bound techniques for fast and precise certification on neural networks. 



\subsection{Graph neural network certification}
\begin{table}[]
    \makebox[\linewidth][c]{
    \scalebox{0.9}{
    \begin{tabular}{|c|c|c|c|c|c|}
        \hline
         Approach & Certification Method & Guarantee & Certification Target & Tightness & Speed  \\
        \hline
        \cite{lee2019tight,dvijotham2020framework,bojchevski2020efficient,jia2020certified,scholten2024hierarchical} & \makecell{Randomized smoothing} & Probabilistic & \makecell{Smoothed \\black-box GNN} & \makecell{Tight} & Moderate to fast \\
        \hline
        \Dual{} \cite{zugner2019certifiable} & Dual optimization & Absolute & GCN & Tight & Moderate \\ 
        \hline
        \Interval{} \cite{liu2020abstract} & Abstract interpretation & Absolute & GCN & Loose & Fast \\
        \hline
        \Ours{} (Ours) & Abstract interpretation & Absolute & GCN & Tightest & Fast \\
        \hline
    \end{tabular}
    }
    }
    \caption{Comparison of GNN Certification Approaches}
    \label{tab:approach-compare}
\end{table}

Though many certification methods exist for other neural networks, they do not directly apply to GNNs due to the highly relational nature of graphs. Thus, methods tailored for GNNs have emerged. Recent surveys \cite{GNNBook-ch8-gunnemann,zhai2023state} provide a comprehensive review of these methods, which can be categorized by the type of guarantee provided by the approaches. \autoref{tab:approach-compare} compares different GNN certification approaches.

\textbf{Probabilistic certificate.} These certification provides \textit{probabilistic guarantees} on the robustness in the form of a confident interval. Most of these methods adapt randomized smoothing  \cite{cohen2019certified}, a method assuming the GNN is a black box, from continuous data to graph data \cite{lee2019tight,dvijotham2020framework,bojchevski2020efficient,jia2020certified}. Moreover, Scholten et. al. \cite{scholten2024hierarchical} propose a hierarchical randomized smoothing method that works for both continuous and graph data. However, randomized smoothing methods modify the input of the original GNN to create a smoothed version of it. Thus, it provides a probabilistic guarantee on the robustness of the \textit{smoothed} GNN rather than the original GNN.

\textbf{Absolute certification.} In practice, it is usually preferred to derive absolute certification on the robustness of the original GCN. These methods providing absolute guarantee are typically designed for a specific GNN architecture. The most studied GNN architecture is GCNs. For example, the Dual \cite{zugner2019certifiable} and Interval \cite{liu2020abstract} methods compared in this paper are both model-specific methods with GCNs and node feature perturbation. Other certification methods mainly focus on edge perturbation\cite{zugner2020certifiable,jin2020certified} other types of tasks \cite{osselin2023structure}, or other GNN architectures \cite{bojchevski2019certifiable,hojny2024verifying}. 

This paper proposes a novel absolute certification approach for GCNs with node perturbation by applying polyhedra abstract interpretation techniques to graph data. Compared to abstract interpretation approaches of neural networks for non-relational continuous input, we also tackle several challenges unique to graph data. It can derive the lower and upper bounds of the robustness at the same time. Moreover, it simultaneously improves the certification tightness and the runtime performance compared with existing certification approaches for GCNs.  

Recently, the notion of \emph{expected robustness} and \emph{collective robustness} \cite{schuchardt2023collective}  is proposed \cite{abbahaddou2024bounding}. The expected robustness can be seen as a more general definition of the "worst-case" robustness investigated in this paper while the abstract interpretation method can be used to derive more precise collective robustness of GCNs.

\section{Conclusions}
\label{sec:conclusions}
In this study, we tackle the challenge of GCN certification for node classification in the presence of node feature perturbation. According to the perturbation model, each node can be altered within a specific local limit while the entire graph is subject to changes within a global limit. We introduce generic concepts for sound and complete certifiers for node classification. We propose a polyhedra \absint{} method for certifying GCN robustness within a defined perturbation region, ensuring \textit{no} changes to the original graph can influence the GCN's prediction. Our experimental evaluations, conducted on three widely used node classification datasets and compared with two other certification methods, indicate that our method provides tighter certification bounds and significantly improves the certification runtime. This improvement in the tightness is further reflected in the collective certification. Moreover, our approach demonstrates promising results in improving GCN robustness through robust training. 

We believe our approach introduces a novel direction in GNN certification. While our polyhedra abstract interpretation presents strong advantages for GCN certification in node classification, it is important to note that real-world scenarios might encounter \chadded{larger-scale heterogeneous graphs, other types of graph perturbations, diverse GNN architectures}, and different graph tasks. One direction for future work involves investigating using \absint{} in these contexts. Notably, despite the better tightness of our approach, it does not outperform baselines in robust training. Thus, subsequent research could target to optimize robustness training methods to further improve the robustness of GCNs. Finally, an added merit of our method is its ability to express the GCN results as linear expressions. These expressions can potentially be used to enhance the interpretability of GCNs.


\bibliography{sample-base}

\clearpage
\appendix
\appendix
\section{Appendix}
\label{sec:appendix}
\subsection{Interval abstract interpretation}
\label{append:interval}
We can define the interval \absint{} \cite{liu2020abstract} approach using our general \absint{} framework. The interval abstract domain \cite{cousot1977abstract} captures the set of inputs by interval bounds. For the concrete domain of feature vectors $\concreteElement \in \concreteDomain$, the abstract domain can be defined as a tuple of matrices: $\mathbb{A}_I=(\realDomain^{(\numNodes, \numFeatures_\layer{})}, \realDomain^{(\numNodes, \numFeatures_\layer)})$, such that for any $(\lowBound, \upBound) \in \mathbb{A}_I$, it satisfies $\lowBound \le \upBound$ for each entry of the matrices. 

\paragraph{Input abstraction}
Given the initial perturbation space $\perturba{\nodeF}$, the input abstraction can be defined as $\absele_0 = (min(\mathcal{\perturba{\nodeF}}), max(\mathcal{\perturba{\nodeF}}))$, where $min$ and $max$ finds the element-wise minimum and maximum, respectively. 
However, directly abstracting the input domain $\wp(\nodeFeature{B}{\numNodes}{\numFeatures_0})$ is not helpful because this will result the lower bound to be always zero and upper bound to be always one for the given $\perturba{X}$ since all the feature can be potentially flipped. Such abstraction will be too imprecise to analyze. To make the analysis more precise, one can instead abstract from the concrete domain after the first GNN layer $\wp(\nodeFeature{R}{\numNodes}{\numFeatures_1})$. 

Let $\concreteElement_0 \in \wp(\nodeFeature{B}{\numNodes}{\numFeatures_0}), \concreteElement_1 \in \wp(\nodeFeature{R}{\numNodes}{\numFeatures_1})$ be the values at the input layer and after the first GCN layer, before the $\relu$ operation by $\concreteElement_1 = \overline{\linear{\weight, \bias}}(\overline{\gc{}}(\concreteElement_0))$. The input abstraction can then be defined using $(max(\concreteElement_1)$ and $min(\concreteElement_1))$ based on $\concreteElement_0$ and the first GCN layer \cite{liu2020abstract,zugner2019certifiable}. Specifically, let $\nodeF$ be original node feature matrix and $\latentFeatures_1$ be the feature value calculated by $\latentFeatures_1 = \gc(\linear{W,b}(X))$ $\localp$ be the local perturbation budget and $\globalp$ be the global perturbation budget; we can compute the maximum by using the top perturbations for the first layer:

\begin{equation*}
\label{equ:activation_bounding}
\begin{aligned}
&max(\concreteElement_1)_{\node,j} = (\sum \op{MaxK}_\globalp(flatten(diag(\Tilde{A_{\node:}})\hat{S}_j))) + H^1_{\node, j}, \\
&\text{ where }  \hat{S}_j = max(\op{MaxK}_\localp(P\:diag(W_{:j})), 0)
\end{aligned}
\end{equation*}
$P = (1 - X) - X$ is the matrix of all possible perturbations, $\op{MaxK}_k$ gets the top $k$ elements for each row of the matrix while we only consider positive values using $max(\cdot, 0)$, $diag$ creates a diagonal matrix with the vector and $flatten$ flattens the matrix into a row vector.

Finding $min(\mathcal{S}_1)$ is similar by replacing the $\op{MaxK}$ and $max(\cdot, 0)$ functions with $\op{MinK}$ and $min(\cdot, 0)$ for the minimum values. Finally, the abstraction function can be defined as $(\lowBound^{TopK}_1, \upBound^{TopK}_1) =(min(\mathcal{S}_1), max(\mathcal{S}_1))$.

However, this method may have an impact on the runtime since the $\op{MaxK}$ and $\op{MinK}$ require (partially) sorting the input. Another variation of the method is to find the maximum/minimum value. 
\begin{equation*}
\label{equ:activation_bounding_max}
\begin{aligned}
&max(\concreteElement_1)_{\node,j} = (\sum \globalp * \op{max}(flatten(diag(\Tilde{A_{\node:}})\hat{S}_j))) + H^1_{\node, \feature}, { where }\\
&\hat{S}_j = max(\op{max}(P\:diag(W_{:j})), 0)
\end{aligned}
\end{equation*}
where $\op{max}$ finds the maximum value for each row of the matrix. $min(\mathcal{S}_1)$ can be calculated similarly by using $\op{max}$ and $min(\cdot)$.



\paragraph{Concretization} The concretization of an abstract element $\absele = (\lowBound, \upBound) \in \abstractDomain_I$ is all matrices that within the interval of the abstract ele: $\gamma(\absele) = \{H \in \realDomain^{(\numNodes, \numFeatures)} | \forall \node < \numNodes, \feature < \numFeatures: \lowBound_{\node, \feature} \le H_{\node, \feature} \le \upBound_{\node, \feature}  \}$

\paragraph{Abstract operations}
Once the input abstraction is calculated, it can be processed through the abstract operations defined from the concrete operations. Let $(\lowBound, \upBound) \in \mathbb{A}$ be an element from the interval domain at some layer of the GCN, the abstract operations can be defined as follows \cite{liu2020abstract}:

\textbf{Fully Connected:} For $\linear{W, b}^{\#_I}$, the result can be calculated using \textit{interval arithmetic} as: 
\begin{equation*}
\label{equ:linear}
\begin{aligned}
    &\linear{W, b}^{\#_I}(\lowBound, \upBound) = (\lowBound', \upBound') \\
    &\lowBound'=\lowBound \times max(W, 0) + \upBound \times min(W, 0) + b \\
    &\upBound'=\upBound \times max(W, 0) + \lowBound \times min(W, 0) + b \\
\end{aligned}
\end{equation*}

\textbf{Graph Convolution:} Next, $\gc{}^{\#_I}$ only needs to perform the $\gc{}$ operation on the lower and upper bound.
\begin{equation*}
    \gc{}^{\#_{I}}(\lowBound, \upBound) = (\Tilde{A}\lowBound, \Tilde{A}\upBound) \\
\end{equation*}

\textbf{ReLU:} Similarly, the $\relu{}^{\#_{I}}$ function can also be calculated element-wise.
\begin{equation*}
    \relu{}^{\#_{I}}(\lowBound, \upBound) = (\relu{}(\lowBound),\relu{}(\upBound)) \\
\end{equation*}

\paragraph{Robustness certification}
Certifying the node robustness requires the difference $\delta_{\node, \lab, \lab'}$ between the scores $\op{cl}(\adj, \nodeF)_{\node, \lab}$ and \(\op{cl}(\adj, \nodeF)_{\node, \lab'}\). Given the output interval from the last layer \((\lowBound, \upBound) \in (\realDomain^{(\node, |\labelSet|)}, \realDomain^{(\node, |\labelSet|)})\), for a node $\node$, the lower bound of $\delta_{\node, \lab, \lab'}$ can be estimated by

\begin{equation*}
    min(\delta_{\node, \lab, \lab'}) = L_{\node, \lab} - U_{\node, \lab'}
\end{equation*}

Then, a certifier $\certifier_I$ can be derived with $r = \certifier_I(\graph, \perturba{X}, \classifier)$ such that for each node $\node$ in the graph, $r_{\node} = min([min(\delta_{\node, \lab_i, \lab'})| \lab' \ne \lab]) > 0$, where $\lab_i$ is the original label of the node $\node$.


\subsection{Proof for \autoref{thm:ai_soundness}}
\label{append:proof_ai_soundness}

\begin{apdthm}[Over-approximation of \absint{}]
\label{apdthm:ai_soundness}
Any abstract interpretation $(\abstractDomain{}, \concreteFunc{}, \operationSet{}^\#)$ defined for GCN as above over-approximates the behavior of the GCN classifier, i.e., \(\overline{\classifier}(A, \perturba{X}) = \{ \classifier(A, X') \mid X' \in \perturba{X} \} \subseteq \concreteFunc(\classifier^\# (\absele))\) for all abstract elements \(\absele \in \abstractDomain\) with \(\perturba{X} \subseteq \concreteFunc(\absele)\).
\end{apdthm}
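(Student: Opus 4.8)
The plan is to prove the inclusion by induction on the operation structure of the GCN, lifting the pointwise over-approximation of individual abstract operations to an over-approximation of their composition. By the recursive definition $\latentFeatures_{l+1} = \relu{}(\linear{\weight_l,\bias_l}(\gc{}(\latentFeatures_l)))$ from the background, the map $X' \mapsto \classifier(A,X')$ is a composition $\operation_N \circ \cdots \circ \operation_1$ of $N = 3z$ operations from $\operationSet{}$ (the blocks $\gc{}$, $\linear{\weight_l,\bias_l}$, $\relu{}$ repeated across the $z$ layers). Since the set-valued lift of a composition is the composition of the lifts — the definition $\overline{\operation}(\concreteElement{}) = \{\operation(H)\mid H\in\concreteElement{}\}$ immediately gives $\overline{g\circ f} = \overline{g}\circ\overline{f}$ — we have $\overline{\classifier}(A,\cdot) = \overline{\operation}_N \circ \cdots \circ \overline{\operation}_1$, while by construction $\classifier^{\#} = \operation_N^{\#} \circ \cdots \circ \operation_1^{\#}$. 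Two elementary facts drive the argument: (i) every set-valued operation is monotone under set inclusion, \ie $\concreteElement{}_1 \subseteq \concreteElement{}_2$ implies $\overline{\operation}(\concreteElement{}_1) \subseteq \overline{\operation}(\concreteElement{}_2)$; and (ii) by the definition of abstract operations recalled just before the theorem, each $\operation^{\#}$ satisfies $\overline{\operation}(\concreteFunc{}(\absele{}')) \subseteq \concreteFunc{}(\operation^{\#}(\absele{}'))$ for all $\absele{}' \in \abstractDomain{}$.

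First I would fix the induction hypothesis: for $0 \le k \le N$, set $\concreteElement{}_k := (\overline{\operation}_k \circ \cdots \circ \overline{\operation}_1)(\perturba{X})$ and $\absele{}_k := (\operation_k^{\#} \circ \cdots \circ \operation_1^{\#})(\absele{})$, and claim $\concreteElement{}_k \subseteq \concreteFunc{}(\absele{}_k)$. The base case $k = 0$ is exactly the hypothesis $\perturba{X} \subseteq \concreteFunc{}(\absele{})$. For the inductive step, assume $\concreteElement{}_k \subseteq \concreteFunc{}(\absele{}_k)$; applying $\overline{\operation}_{k+1}$ to both sides of this inclusion and using monotonicity (i) gives $\concreteElement{}_{k+1} = \overline{\operation}_{k+1}(\concreteElement{}_k) \subseteq \overline{\operation}_{k+1}(\concreteFunc{}(\absele{}_k))$, and the over-approximation property (ii) of $\operation_{k+1}^{\#}$ then gives $\overline{\operation}_{k+1}(\concreteFunc{}(\absele{}_k)) \subseteq \concreteFunc{}(\operation_{k+1}^{\#}(\absele{}_k)) = \concreteFunc{}(\absele{}_{k+1})$. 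Chaining the two inclusions closes the step; instantiating at $k = N$ yields $\overline{\classifier}(A,\perturba{X}) = \concreteElement{}_N \subseteq \concreteFunc{}(\absele{}_N) = \concreteFunc{}(\classifier^{\#}(\absele{}))$, which is the claim.

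The one subtlety needing care is that the concrete domain is layer-dependent — it starts as $\wp(\binaryDomain{}^{(\numNodes,m_0)})$ and becomes $\wp(\realDomain{}^{(\numNodes,m_l)})$ after layer $l$ — so facts (i) and (ii) must be invoked on the component of $\concreteDomain{}$ appropriate to each intermediate step; this is pure bookkeeping and does not alter the argument. I do not anticipate a genuine obstacle: this is the standard soundness-by-composition theorem of abstract interpretation instantiated to $\operationSet{}^{\#} = \{\gc{}^{\#}, \linear{\weight,\bias}^{\#}, \relu{}^{\#}\}$, and the only substantive thing to verify is that the GCN is indeed the stated composition of operations in $\operationSet{}$ — which is immediate from its recursive definition — together with the already-assumed fact that each such operation has a matching over-approximating abstract counterpart in $\operationSet{}^{\#}$.
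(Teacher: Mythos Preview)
Your proposal is correct and follows essentially the same approach as the paper: both decompose the GCN as a composition of operations, state the monotonicity of the set-valued lift as a preliminary lemma, and then induct on the sequence of operations using monotonicity followed by the per-operation over-approximation property to chain the inclusion through each layer. Your version is slightly more explicit in spelling out $N=3z$, the identity $\overline{g\circ f}=\overline{g}\circ\overline{f}$, and the layer-dependence of the concrete domain, but the structure of the argument is identical.
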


Before we can proof \autoref{thm:ai_soundness}, we can observe that the set version of the operations $\overline{\operation}$ is monotone

\begin{lemma}[Monotonicity of $\overline{\operation}$]
    \label{lemma:monotone}
    The set function $\overline{\operation}(\concreteElement) = \{\operation(H)\mid H \in \concreteElement\}$ is monotonic: $\forall \concreteElement_1, \concreteElement_2 \in \concreteDomain: \concreteElement_1 \subseteq \concreteElement_2 \implies \overline{\operation}(\concreteElement_1) \subseteq \overline{\operation}(\concreteElement_2)$
\end{lemma}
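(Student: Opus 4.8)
The plan is to prove \autoref{lemma:monotone} by a direct element-chasing argument, since monotonicity of the set-valued lift $\overline{\operation}$ is just the observation that taking images of a function preserves the subset order. First I would fix arbitrary concrete sets $\concreteElement_1, \concreteElement_2 \in \concreteDomain$ with $\concreteElement_1 \subseteq \concreteElement_2$, and pick an arbitrary element $Y \in \overline{\operation}(\concreteElement_1)$. By the defining equation $\overline{\operation}(\concreteElement) = \{\operation(H) \mid H \in \concreteElement\}$, there is some matrix $H \in \concreteElement_1$ with $Y = \operation(H)$. Because $\concreteElement_1 \subseteq \concreteElement_2$, the same $H$ belongs to $\concreteElement_2$, hence $Y = \operation(H) \in \overline{\operation}(\concreteElement_2)$. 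Since $Y$ was arbitrary, $\overline{\operation}(\concreteElement_1) \subseteq \overline{\operation}(\concreteElement_2)$, which is exactly the claim.

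I do not anticipate any real obstacle here: the argument uses only the definition of $\overline{\operation}$ and is completely independent of the concrete operation $\operation$ (graph convolution $\gc$, the linear layer $\linear{\weight_l, b_l}$, or $\relu$), so it applies uniformly to every element of $\operationSet$. The only thing that needs care is to invoke the definition of the set-valued operation in the right direction and to note that it is defined on all of $\concreteDomain$, so both $\overline{\operation}(\concreteElement_1)$ and $\overline{\operation}(\concreteElement_2)$ are well defined.

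The reason this lemma is stated separately is that it is the inductive engine for \autoref{thm:ai_soundness}: combined with the over-approximation property $\overline{\operation}(\concreteFunc(\absele)) \subseteq \concreteFunc(\operation^\#(\absele))$ of each abstract operation, monotonicity of $\overline{\operation}$ lets one push a containment of the form $\overline{\operation}(\concreteElement) \subseteq \concreteFunc(\absele')$ through the next layer of the network. An induction over the $\numLayers$ layers of the GCN, with \autoref{lemma:monotone} applied at each step to propagate the containment and the over-approximation property applied to close the step, then yields $\overline{\classifier}(A, \perturba{X}) \subseteq \concreteFunc(\classifier^\#(\absele))$ whenever $\perturba{X} \subseteq \concreteFunc(\absele)$.
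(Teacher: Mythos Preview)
Your proof is correct and essentially the same as the paper's: both are element-chasing arguments that unfold the definition of $\overline{\operation}$ and use $\concreteElement_1 \subseteq \concreteElement_2$ to transport a witness. The only cosmetic difference is that the paper phrases it as a proof by contradiction (assume $\overline{\operation}(\concreteElement_1) \not\subseteq \overline{\operation}(\concreteElement_2)$ and derive $\concreteElement_1 \not\subseteq \concreteElement_2$), whereas you give the direct form; your direct argument is arguably cleaner, and your remarks on how the lemma feeds into the induction for \autoref{thm:ai_soundness} match the paper's use exactly.
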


\begin{proof}(\Cref{lemma:monotone})
Proof by contradiction: Let $\concreteElement_1, \concreteElement_2 \in \concreteDomain$ such that $\concreteElement_1 \subseteq \concreteElement_2$ but $\overline{\operation}(\concreteElement_1) \not \subseteq \overline{\operation}(\concreteElement_2)$. Then, there exist some matrix $H'$ such that $H' \in \overline{\operation}(\concreteElement_1)$ but $H' \not \in \overline{\operation}(\concreteElement_2)$. By the definition of $\overline{\operation}$, there must exist some $H$ such that $H' = \operation(H)$ and $H \in \concreteElement_1$ but $H \not \in \concreteElement_2$. This contradicts to $\concreteElement_1 \subseteq \concreteElement_2$
\end{proof}

Then, we can proof \autoref{thm:ai_soundness}. Since the adjacency matrix $\adj$ stays the same, it can be treated as a component of the operation $\operation$ so we omit $\adj$ for $\classifier$ in the proof
\begin{proof}(\autoref{thm:ai_soundness}) 
Let \(\classifier = \operation_0 \circ \operation_1 \circ \dots \circ \operation_z \) be \emph{any} GCN with \(z\) operations, \(\overline{\classifier} = \overline{\operation}_0 \circ \overline{\operation}_1 \circ \dots \circ \overline{\operation}_z \) be the corresponding set version of the GCN and \(\classifier^\# = \operation^\#_0 \circ \operation^\#_1 \circ \dots \circ \operation^\#_z \) be the GCN with corresponding abstract operations. 
Let \(\concreteElement_0 = \perturba{X}\) be a perturbation region and $\absele_0$ be an abstraction of \(\concreteElement_0\) such that \(\concreteElement_0 \subseteq \concreteFunc(\absele_0)\). Then \autoref{thm:ai_soundness} can be proved using induction.

\textbf{Base Case:} \(\concreteElement_0 \subseteq \concreteFunc(\absele_0)\) is defined for the input

\textbf{Hypothesis:} let $\concreteElement_l = (\overline{\operation}_0 \circ \overline{\operation}_1 \circ \dots \circ \overline{\operation}_{l-1})(\concreteElement_0)$ and $\absele_l = (\operation^\#_0 \circ \operation^\#_1 \circ \dots \circ \operation^\#_{l-1})(\absele_0)$, then $\concreteElement_l \subseteq \gamma(\absele_l)$

\textbf{Inductive Step:} Since $\concreteElement_l \subseteq \concreteFunc(\absele_l)$, using the monotonicity of $\overline{\operation}_l$ (\Cref{lemma:monotone}), we can derive $\overline{\operation}_l(\concreteElement_l) \subseteq \overline{\operation}_l(\concreteFunc(\absele_l))$. By the definition of an abstract operator, we have \( \forall \absele \in \abstractDomain: \overline{\operation_l}(\concreteFunc(\absele)) \subseteq \concreteFunc(\operation_l^\# (\absele))\). Then we can get $\concreteElement_{l+1}=\overline{\operation}_l(\concreteElement_l) \subseteq \overline{\operation}_l(\concreteFunc(\absele_l)) \subseteq \concreteFunc(\operation_l^\# (\absele_l))=\concreteFunc(\absele_{l+1})$, thus $\concreteElement_{l+1} \subseteq \concreteFunc(\absele_{l+1})$

Overall, we can conclude that \(\overline{\classifier}(A, \perturba{X}) \subseteq \concreteFunc(\classifier^\# (\absele))\) for all abstract elements \(\absele \in \abstractDomain\) with \(\perturba{X} \subseteq \concreteFunc(\absele)\), and this proves \autoref{thm:ai_soundness}.
\end{proof}


\subsection{Proof for \autoref{thm:relu_bounding}}
\label{append:proof_relu_bounding}

\begin{apdthm}
    \label{apdthm:relu_bounding}
    Let $\variable{}$ be a variable in interval $[\low{}, \high{}]$ such that $\low{} \le 0 \le \high{}$. The \textbf{minimum area bounding} of $\relu{}(\variable{})$ with one upper and one lower bound can be defined as:\\
    (1) upper bound: $\variable{}'=\frac{\high{}}{\high{}-\low{}}\variable{} - \frac{\high{} \cdot \low{}}{\high{} - \low{}}$\\
    (2) lower bound: $\variable{}'= \begin{cases}
            \variable{} & \text{if } |\high{}| \ge |\low{}| \\
            0 & \text{otherwise.}
        \end{cases}$
\end{apdthm}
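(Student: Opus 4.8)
The plan is to phrase the task as a small optimization over affine functions. A bounding of $\relu{}(\variable{})$ on $[\low{},\high{}]$ consists of an affine lower bound $\ell(\variable{}) = a\variable{}+b$ and an affine upper bound $u(\variable{}) = c\variable{}+d$ with $\ell(\variable{}) \le \relu{}(\variable{}) \le u(\variable{})$ for all $\variable{} \in [\low{},\high{}]$, and the ``area'' to be minimized is $\int_{\low{}}^{\high{}}\!\bigl(u(\variable{})-\ell(\variable{})\bigr)\,d\variable{}$. Two observations drive the argument. First, since $u-\ell$ is affine, this integral equals the trapezoid value $\tfrac{\high{}-\low{}}{2}\bigl[(u(\low{})-\ell(\low{}))+(u(\high{})-\ell(\high{}))\bigr]$, so the objective depends on $u$ only through $u(\low{})+u(\high{})$ and on $\ell$ only through $\ell(\low{})+\ell(\high{})$; hence the two bounds can be optimized independently. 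Second, feasibility of an affine bound over the interval reduces to finitely many endpoint inequalities, using that $\relu{}$ is convex and piecewise linear with the break at $0 \in [\low{},\high{}]$. I will assume $\low{}<0<\high{}$; the cases $\low{}=0$ or $\high{}=0$ are immediate since $\relu{}$ is then linear on the interval and both claimed bounds collapse to it.

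For the upper bound: because $\relu{}$ is convex and $u$ is affine, $u \ge \relu{}$ on $[\low{},\high{}]$ holds iff it holds at the endpoints, i.e.\ $u(\low{}) \ge 0$ and $u(\high{}) \ge \high{}$ (necessity is evaluation at $\low{},\high{}$; sufficiency follows by writing an arbitrary $\variable{}$ as a convex combination of $\low{}$ and $\high{}$ and combining convexity of $\relu{}$ with affinity of $u$). Minimizing $u(\low{})+u(\high{})$ over this region forces $u(\low{})=0$ and $u(\high{})=\high{}$, which pins down the unique affine function through $(\low{},0)$ and $(\high{},\high{})$, namely $\variable{}' = \tfrac{\high{}}{\high{}-\low{}}\variable{} - \tfrac{\high{}\cdot\low{}}{\high{}-\low{}}$, as claimed in (1).

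For the lower bound: on $[\low{},0]$ we have $\relu{}(\variable{})=0$ and on $[0,\high{}]$ we have $\relu{}(\variable{})=\variable{}$; since an affine function attains its extrema on a subinterval at the endpoints, $\ell \le \relu{}$ on $[\low{},\high{}]$ holds iff $\ell(\low{})\le 0$, $\ell(0)\le 0$ and $\ell(\high{})\le \high{}$. Writing $\ell(0) = \tfrac{\high{}}{\high{}-\low{}}\ell(\low{}) - \tfrac{\low{}}{\high{}-\low{}}\ell(\high{})$, maximizing $\ell(\low{})+\ell(\high{})$ becomes a two-variable linear program in $(\ell(\low{}),\ell(\high{}))$. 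Its optimum is at $(\ell(\low{}),\ell(\high{}))=(\low{},\high{})$, i.e.\ $\ell(\variable{})=\variable{}$, when $\high{}+\low{}\ge 0$, and at $(\ell(\low{}),\ell(\high{}))=(0,0)$, i.e.\ $\ell(\variable{})=0$, when $\high{}+\low{}\le 0$; since $\low{}<0<\high{}$, the condition $\high{}+\low{}\ge 0$ is exactly $|\high{}|\ge|\low{}|$, which matches the case split in (2) (and the two optima agree in value when $|\high{}|=|\low{}|$). Plugging the optimal $u$ and $\ell$ back into the trapezoid formula then yields the claimed minimum-area bounding.

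I expect the only mildly delicate step to be the lower-bound linear program: one must track the sign reversals when dividing by $\low{}<0$ versus $\high{}>0$, confirm the feasible region is nonempty, and check that the claimed vertex is the genuine maximizer rather than merely a boundary point (the objective is linear, so comparing the two candidate vertices suffices once feasibility of each is verified). Everything else is routine — the convexity argument for the upper bound is standard, and the trapezoid identity is exact because the integrand $u-\ell$ is affine.
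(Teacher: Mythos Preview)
Your proof is correct and in fact establishes a stronger statement than the paper's own argument. The paper \emph{assumes} the upper bound $\variable'=\frac{\high}{\high-\low}\variable-\frac{\high\cdot\low}{\high-\low}$ is fixed and restricts the lower bound a priori to the one-parameter family $\variable'=\lambda\variable$ with $\lambda\in[0,1]$; it then computes the trapezoid area as a linear function of $\lambda$ and reads off $\lambda\in\{0,1\}$ according to the sign of $\high^2-\low^2$. Your argument instead optimizes over \emph{all} affine upper and lower bounds: you use convexity of $\relu$ to pin down the upper bound via its two endpoint values, and you reduce the lower-bound search to a two-variable linear program in $(\ell(\low),\ell(\high))$ whose vertices you identify and compare. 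The payoff is that you actually justify optimality of the stated upper bound and rule out affine lower bounds not of the form $\lambda\variable$, points the paper leaves implicit. The paper's route is shorter because it never leaves the one-parameter family, but yours matches the literal theorem statement more closely.
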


\begin{proof}
 Let $\variable$ be any variable with lower bound and upper bound $l < 0 < u$. Let $\variable_{\ge}=\frac{u}{u-l}\variable - \frac{ul}{u-l}$ be the upper bound expression after $ReLU$, the lower bound $\variable_\le$ can be expressed as $\variable_\le=\lambda \variable, \lambda \in [0, 1]$. Then the resulting shape is bounded by 4 lines: $\variable'=\frac{u}{u+l}\variable - \frac{ul}{u-l}, \variable'=l, \variable'=u, \variable'=\lambda \variable$ forming a trapezoid. Then, the bounding area $Area$ can be calculated using the trapezoid area formula as $Area=\frac{1}{2}(-\lambda\cdot l+u-\lambda u)(u-l)$. This formula can be simplified as $Area=\frac{1}{2}(-(u^2-l^2)\lambda + u(u-l))$. Since $l, u$ are constant, the $\lambda$ minimizing $-(u^2-l^2)\lambda$ will also minimize $Area$. When $u^2 \ge l^2$, $-(u^2-l^2)\lambda \le 0$ and is minimized by $\lambda = 1$. When When $u^2 < l^2$, $-(u^2-l^2)\lambda \ge 0$ and is minimized by $\lambda=0$. Thus, when $|u| \ge |l|$, $x_{\le}=\variable$ will minimize the area and when $|u| < |l|$, $\variable_{\le}=0$ will minimize the area.
\end{proof}

\subsection{Proof for \autoref{thm:poly_sound}}
\label{append:proof_poly_sound}

\begin{apdthm} (Soundness of robustness certification)
\label{apdthm:poly_sound}
$\certifier{}_{\le}: (\adj{}, \perturba{\nodeF{}}, \classifier{}) \to \realDomain{}^\numNodes{}$ is sound; hence it produces a lower bound for the robustness of $\classifier$ over $\graph$.
\end{apdthm}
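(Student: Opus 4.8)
The strategy is to show that for every node $i$ and every label $\lab'\ne\lab_i$, the number $\delta^*_{i,\lab_i,\lab'}$ returned by Step~2 is a genuine \emph{lower bound} on the concrete score difference $\delta_{i,\lab_i,\lab'}=\classifier(\adj,\nodeF')_{i,\lab_i}-\classifier(\adj,\nodeF')_{i,\lab'}$ as $\nodeF'$ ranges over $\perturba{\nodeF}$. Once this is established, soundness is immediate: if $\certifier_{\le}(\graph,\perturba{\nodeF},\classifier)_i=r^*_i=\min_{\lab'\ne\lab_i}\delta^*_{i,\lab_i,\lab'}>0$, then every $\nodeF'\in\perturba{\nodeF}$ keeps $\delta_{i,\lab_i,\lab'}>0$ for all $\lab'\ne\lab_i$, which is exactly the robustness condition of \autoref{equ:node_robustness_min}, so $\nodeRC{i}\implies\nodeR{i}$; by the remark after \autoref{def:soundness} this yields $\graphRC{}\le\graphR{}$, a lower bound on the robustness of $\classifier$ over $\graph$.

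First I would discharge the prerequisite of \autoref{thm:ai_soundness}, namely that the polyhedra abstract operations $\linear{\weight,\bias}^{\#_P}$, $\gc{}^{\#_P}$, $\relu{}^{\#_P}$ over-approximate their concrete set-valued counterparts. For $\linear{}{}^{\#_P}$ and $\gc{}^{\#_P}$ this is routine: both are linear, so the retained inequalities hold with equality after the sign split of the coefficient matrices (multiplying the lower/upper coefficient matrices by the positive/negative parts of the transformation), hence the concretization is preserved exactly. For $\relu{}^{\#_P}$ the retained upper inequality and the single retained lower inequality sandwich the graph of $\relu{}$ on $[\low,\high]$ — this is precisely what \autoref{thm:relu_bounding} guarantees in the nontrivial case $\low<0<\high$, and the cases $\low\ge 0$, $\high<0$ are the identity and the zero map. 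With these in hand, \autoref{thm:ai_soundness} applies to $\classifier^{\#_P}$, giving $\overline{\classifier}(\adj,\perturba{\nodeF})\subseteq\concreteFunc(\classifier^{\#_P}(\absele_0))$ for the input abstraction $\absele_0$. Applying the sound operation $\linear{\Delta^{\lab_i,\lab'},0}^{\#_P}$ once more, the resulting abstract element's concretization contains $\{\classifier(\adj,\nodeF')_{i,\lab_i}-\classifier(\adj,\nodeF')_{i,\lab'}\mid\nodeF'\in\perturba{\nodeF}\}$. Unfolding the concretization $\concreteFunc$, this says: for each $\nodeF'\in\perturba{\nodeF}$ there is an assignment of the symbolic variables $\map'_i$ that both satisfies the retained lower-bound inequality of $\coef^{\prime}_{i,\lab'}$ and lies in $\perturba{X}^{\localp,\globalp}$ (the variable domains being exactly the feasible feature flips), under which the expression $\sum_{\variable_{k,j}\in\map'_i}(\polyLowExp{*i})_{k,j}\,\variable_{k,j}+\polyLowConst{i}$ evaluates below $\delta_{i,\lab_i,\lab'}$. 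Hence $\delta_{i,\lab_i,\lab'}$ at $\nodeF'$ is $\ge$ the minimum of that linear expression over $\coef^{\prime}_{i,\lab'}\cup\perturba{X}^{\localp,\globalp}$ — the very problem of Step~2.

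It then remains to check that the greedy flipping of Steps~(i)--(vi) computes this minimum \emph{exactly}. The objective is linear in the binary flip decisions, with flipping $\variable_{k,j}$ contributing the fixed amount $\theta_{k,j}=(\polyLowExp{*i})_{k,j}\cdot P_{k,j}$, subject only to the per-node local budget $\localp$ and the global budget $\globalp$ over the $\layer$-hop neighbourhood $\neighbors{\node}{\layer}$. I would argue that never including a non-negative $\theta_{k,j}$ is optimal (hence the $\min(\cdot,0)$ clamp is sound), that for a fixed number of flips allotted to node $k$ the best choice is its most negative $\theta$-values (hence $\op{MinK}_{\localp}$ per node builds the candidate pool $\hat\theta$ of per-node optima), and finally, by a standard exchange argument, that selecting the $\globalp$ most negative entries of $\hat\theta$ minimizes the global sum. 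This shows $\delta^*_{i,\lab_i,\lab'}=\min_{\map'_i}\delta_{i,\lab_i,\lab'}$ over the abstract feasible region, so combined with the previous paragraph $\delta^*_{i,\lab_i,\lab'}\le\delta_{i,\lab_i,\lab'}$ for all $\nodeF'\in\perturba{\nodeF}$, completing the argument.

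The main obstacle I anticipate is this last paragraph: carefully proving that the two-stage $\op{MinK}$ greedy solves the discrete budget-constrained linear program \emph{optimally}, and not merely furnishes a lower bound — in particular, handling the bookkeeping of the constant term $\polyLowConst{i}$ after substituting the concrete $\nodeF$ into the expression (so that $\theta_{k,j}$ is the exact marginal change), justifying the $\min(\cdot,0)$ clamp, and making the exchange argument that the per-node-then-global decomposition loses nothing. The over-approximation half, by contrast, is a fairly direct consequence of \autoref{thm:ai_soundness} once the per-operation soundness is recorded.
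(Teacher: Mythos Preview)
Your proposal is correct and follows essentially the same route as the paper: verify that each polyhedra abstract operator over-approximates its concrete counterpart (the paper records these as separate lemmas), invoke \autoref{thm:ai_soundness}, and conclude that minimizing the retained lower-bound linear expression over the perturbation constraints lower-bounds every concrete score difference. You actually go further than the paper on the last step --- the paper simply asserts that $\delta^*_{i,\lab,\lab'}=\min(\gamma(\absele'))$ ``since it is the solution to the minimization problem'' without justifying that the two-stage $\op{MinK}$ greedy attains the optimum, whereas you propose an explicit exchange argument for this; that is added rigor, not a different approach.
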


First, we show that the input abstraction over-approximates the input perturbation space.
\begin{lemma}
    \label{lemma:input_abstraction}
    The input abstraction over-approximates the perturbation space: $\perturba{\adj} \subseteq \gamma(\absele_0)$.
\end{lemma}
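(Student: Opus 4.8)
The plan is purely definitional: unfold the input abstraction $\absele_0$ and the concretization map $\concreteFunc$ and observe that $\concreteFunc(\absele_0)$ contains the perturbation space by construction. Concretely, I would fix an arbitrary $X' \in \perturb{\localp}{\globalp}{\nodeF{}}$ and argue that $X' \in \concreteFunc(\absele_0)$; since $X'$ was arbitrary, this yields $\perturba{\nodeF{}} \subseteq \concreteFunc(\absele_0)$.

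First I would recall that $\concreteFunc(\absele_0) = \{ H \in \realDomain^{(n,m_0)} \mid H \vDash \absele_0 \cup \perturb{\localp}{\globalp}{\nodeF{}} \}$, so membership requires satisfying \emph{both} the linear inequalities stored in $\absele_0$ and the local/global perturbation constraints $\perturb{\localp}{\globalp}{\nodeF{}}$. The latter are satisfied by $X'$ by hypothesis, so the only remaining obligation is to check that $X'$ satisfies the inequalities recorded in $\absele_0$. Here I would use that at the input layer each latent variable $\texttt{h}_{i,j}$ is identified with the feature variable $\texttt{x}_{i,j}$, and that the coefficient tuple of $\absele_0$ for every node $i$ is $(I^{m_0}, \mathbf{0}^{m_0}, I^{m_0}, \mathbf{0}^{m_0})$. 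Expanding, the retained inequalities are $1\cdot\texttt{x}_{i,j} + 0 \le \texttt{h}_{i,j}$ and $1\cdot\texttt{x}_{i,j} + 0 \ge \texttt{h}_{i,j}$, \ie $\texttt{x}_{i,j} \le \texttt{x}_{i,j}$ and $\texttt{x}_{i,j} \ge \texttt{x}_{i,j}$ after the identification, which hold vacuously under the assignment $\texttt{x}_{i,j} \mapsto X'_{i,j}$. Hence $X' \vDash \absele_0$, and combined with the observation on the perturbation constraints we get $X' \in \concreteFunc(\absele_0)$, completing the argument. (In fact the two directions give $\concreteFunc(\absele_0) = \perturb{\localp}{\globalp}{\nodeF{}}$, so the abstraction is exact at the input.)

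There is essentially no obstacle here: the lemma holds precisely because $\concreteFunc$ is defined to carry the perturbation constraints along and the identity coefficient matrices of $\absele_0$ add nothing. The only care needed is bookkeeping --- making the identification $\texttt{h}_{i,j} \equiv \texttt{x}_{i,j}$ explicit and confirming that the zero constant vectors render the stored inequalities trivial --- with no non-trivial estimate or construction involved. This lemma then supplies the base case $\concreteElement_0 \subseteq \concreteFunc(\absele_0)$ for the inductive soundness argument of $\certifier_\le$ in \autoref{thm:poly_sound}, to be combined with the over-approximation property of \autoref{thm:ai_soundness}.
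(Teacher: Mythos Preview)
Your proposal is correct and follows essentially the same approach as the paper: unfold the definition of $\absele_0$ and $\concreteFunc$, observe that the identity-coefficient inequalities are vacuous so that $\concreteFunc(\absele_0)$ reduces to the perturbation space itself, and conclude the inclusion (indeed equality). The paper's proof is a terser version of exactly this argument.
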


\begin{proof} (\Cref{lemma:input_abstraction})
    \(\absele_0 = ((I^{\numFeatures{}_0}, \mathbf{0}^{\numFeatures{}_0}, I^{\numFeatures{}_0}, \mathbf{0}^{\numFeatures{}_0}), \nodeVar_i)_{i = 1}^n \in \abstractDomain\). For any perturbation space $\perturba{\adj{}}$, the concretization of $\absele_0$ is the set of matrices satisfy the abstract domain and the perturbation space. Since $\absele_0$ bounds each input feature variable by itself, we have: \(\concreteFunc(\absele_0) = \{ H \in \mathbb{R}^{(n,m)} \mid H \vDash \absele_0 \cup \perturba{\adj}\} = \perturba{\adj{}}\). Thus, $\perturba{\adj} \subseteq \gamma(\absele_0)$.
\end{proof}

Next, we show that each abstract operation overapproximates the original operation. Since the operation for each node is done independently, we show the overapproximation for the feature values of each node. 

Let $\absele'$ be the abstract element after any operation $\operation^{\#}$: $\absele' = \operation^{\#}(\absele)$. To prove the overapproximation, it is suffice to prove that $\forall H \in \gamma(\absele): \operation(H) \in \gamma(\absele')$.

\begin{lemma}
    \label{lemma:poly_linear}
    The $\linear{\weight, \bias}^{\#_P}$ abstract operation overapproximate the behavior of  $\linear{\weight, \bias}$
\end{lemma}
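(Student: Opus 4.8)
The plan is to unfold the definitions of the abstract operation and of concretization, reduce the claim to a per-node, per-output-coordinate affine inequality, and then verify that inequality by the elementary sign-split argument for affine maps. By the reduction stated just before the lemma, writing $\absele' = \linear{\weight, \bias}^{\#_P}(\absele)$, it suffices to show $\linear{\weight,\bias}(H) = H\weight + \bias \in \gamma(\absele')$ for every $H \in \gamma(\absele)$. So I would fix $H \in \gamma(\absele)$ and unpack $\gamma$: there is an assignment $\bar{x}$ to the symbolic variables $\nodeVar$ respecting the perturbation constraints $\perturb{\localp}{\globalp}{\nodeF}$ such that, for every node $i$ and feature $j$,
\[
\sum_{\texttt{x}_k\in\map_i}(\polyLowExp{i})_{j,k}\,\bar{x}_k + (\polyLowConst{i})_j \;\le\; H_{i,j} \;\le\; \sum_{\texttt{x}_k\in\map_i}(\polyUpExp{i})_{j,k}\,\bar{x}_k + (\polyUpConst{i})_j .
\]
Since $\linear{\weight,\bias}^{\#_P}$ introduces no new variables ($\map'_i = \map_i$), the same $\bar{x}$ is the witness candidate for $H':=H\weight+\bias \in \gamma(\absele')$ and the perturbation constraints transfer verbatim; hence only the two new linear inequalities of $\absele'$ must be checked.

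Next I would derive the lower-bound inequality of $\absele'$. Fix a node $i$ and an output coordinate $\ell$, so that $H'_{i,\ell} = \sum_j \weight_{j,\ell}H_{i,j} + \bias_\ell$. Splitting the sum over $j$ by the sign of $\weight_{j,\ell}$, and using that scaling by a nonnegative (resp. nonpositive) number preserves (resp. reverses) an inequality, I replace $H_{i,j}$ by its $\absele$-lower bound when $\weight_{j,\ell}\ge 0$ and by its $\absele$-upper bound when $\weight_{j,\ell}<0$; the right-hand side only decreases, giving
\[
H'_{i,\ell} \;\ge\; \sum_j \max(\weight_{j,\ell},0)\Bigl(\sum_{\texttt{x}_k\in\map_i}(\polyLowExp{i})_{j,k}\bar{x}_k + (\polyLowConst{i})_j\Bigr) + \sum_j \min(\weight_{j,\ell},0)\Bigl(\sum_{\texttt{x}_k\in\map_i}(\polyUpExp{i})_{j,k}\bar{x}_k + (\polyUpConst{i})_j\Bigr) + \bias_\ell .
\]
Collecting the coefficient of each $\bar{x}_k$ and the constant, the right-hand side equals $\sum_{\texttt{x}_k\in\map_i}(\polyLowExp{\prime i})_{\ell,k}\bar{x}_k + (\polyLowConst{\prime i})_\ell$ once one recognizes $\max(\weight^T,0)\polyLowExp{i} + \min(\weight^T,0)\polyUpExp{i}$ as the matrix whose $(\ell,k)$ entry is $\sum_j\max(\weight_{j,\ell},0)(\polyLowExp{i})_{j,k} + \sum_j\min(\weight_{j,\ell},0)(\polyUpExp{i})_{j,k}$, and likewise for the constant vector; these are exactly the formulas defining $\linear{\weight,\bias}^{\#_P}$. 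The upper-bound inequality of $\absele'$ follows from the mirror-image computation (swap the roles of $\polyLowExp{}/\polyLowConst{}$ and $\polyUpExp{}/\polyUpConst{}$ and reverse all inequalities). This yields $H' \in \gamma(\absele')$, and since $H$ was arbitrary, the lemma follows.

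The genuinely delicate points are bookkeeping rather than mathematics: keeping the transpose conventions consistent so that the matrix products $\max(\weight^T,0)\polyLowExp{i}$ and $\min(\weight^T,0)\polyUpExp{i}$ really encode the sign-split sums above (this is the place most likely to need care), and noting that non-uniqueness of the witness $\bar{x}$ is harmless since any satisfying assignment suffices and the output uses the very same one. The conceptual core — the elementary monotonicity of an affine form under a fixed-sign scaling of each argument — is precisely why $\linear{\weight,\bias}^{\#_P}$ keeps the positive and negative parts of $\weight$ separate.
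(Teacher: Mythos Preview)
Your proposal is correct and follows essentially the same approach as the paper: fix an $H$ in the concretization, then use the sign-split of $W$ (positive part multiplies lower bounds, negative part multiplies upper bounds) to verify the new symbolic inequalities coordinatewise. If anything, your version is more careful than the paper's, since you explicitly track the witness assignment $\bar{x}$ and note that $\map'_i=\map_i$, whereas the paper's proof abbreviates the symbolic bounds as $(\texttt{h}_{i,j})_{\le}$, $(\texttt{h}_{i,j})_{\ge}$ and jumps directly to the same sign-split inequality.
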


\begin{proof} (\Cref{lemma:poly_linear})
    For any abstract element $\absele$, let $H \in \gamma(\absele)$ be \emph{any} concretization of the element for \emph{any} perturbation region $\perturba{\adj{}}$. Then for the latent feature variable $\texttt{h}_{i, j}$, we have $\sum_{\texttt{x}_k \in \map{}_i}\! \bigl((\polyLowExp{\node{}})_{j,k} \cdot \texttt{x}_j\bigr) + (\polyLowConst{\node{}})_j \le \texttt{h}_{i,j} \le \sum_{\texttt{x}_k \in \map{}_i}\! \bigl((\polyUpExp{\node{}})_{j,k} \cdot \texttt{x}_k\bigr) + (\polyUpConst{\node{}})_j$. Let $(\texttt{h}_{i, j})_{\le}$ and $(\texttt{h}_{i, j})_{\ge}$ be the minimum and maximum value of $\texttt{h}_{i, j}$. We also have $(\texttt{h}_{i, j})_{\le} \le H_{i, j} \le (\texttt{h}_{i, j})_{\ge}$. Let $H'_{i, k} = \sum_{j}{(H_{i, j} \cdot \weight_{j, k} + \bias_k)}$ be the value of the latent feature $k$ after $\linear{\weight, \bias}^{\#_P}$. If $(\texttt{h}_{i, j})_{\le} \le \texttt{h}_{i, j} \le (\texttt{h}_{i, j})_{\ge}$, we will have $-(\texttt{h}_{i, j})_{\ge} \le -\texttt{h}_{i, j} \le -(\texttt{h}_{i, j})_{\le}$. Let $W^+ = max(\weight, 0)$ and $W^- = min(\weight, 0)$ Then we have
    \begin{align*}
        &\sum_{j}{((\texttt{h}_{i, j})_{\le} \cdot W^+_{i, j} + (\texttt{h}_{i, j})_{\ge} \cdot W^-_{i, j}) + \bias_k} \le H'_{i, k} \\
        &\sum_{j}{((\texttt{h}_{i, j})_{\ge} \cdot W^+_{i, j} + (\texttt{h}_{i, j})_{\le} \cdot W^-_{i, j}) + \bias_k} \ge H'_{i, k}
    \end{align*}
    This is the definition of $\linear{\weight, \bias}^{\#_P}$. And this proves the overapproximation of $\linear{\weight, \bias}^{\#_P}$
    
\end{proof}

\begin{lemma}
    \label{lemma:poly_gc}
    The $\gc^{\#_P}$ abstract operation overapproximate the behavior of  $\gc$
\end{lemma}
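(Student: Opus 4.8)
The plan is to mirror the structure of the proof of \Cref{lemma:poly_linear}: fix an arbitrary abstract element $\absele$ and an arbitrary concretization $H \in \gamma(\absele)$ relative to an arbitrary perturbation region $\perturba{\adj}$, and show that $\gc(H) \in \gamma(\gc^{\#_P}(\absele))$. Since the abstract operation is defined node-by-node, it suffices to argue this for the feature vector of an arbitrary node $\node$. The key structural fact I would exploit is that $\gc(H)_{\node,:} = \sum_{k \in \neighbors{\node}{1}} \Tilde{\adj}_{\node,k} \cdot H_{k,:}$, i.e.\ the new feature of node $\node$ is a \emph{nonnegative} linear combination of the features of its immediate neighbors, with coefficients $\Tilde{\adj}_{\node,k} \ge 0$ (this nonnegativity is exactly why the bound preservation is clean and why the same coefficients $\Tilde{\adj}_{\node,k}$ appear in both the lower- and upper-bound expressions of $\gc^{\#_P}$).

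First I would recall that, for each neighbor $k \in \neighbors{\node}{1}$ and each latent feature index $j$, the hypothesis $H \in \gamma(\absele)$ gives the two symbolic inequalities
\[
\textstyle\sum_{\texttt{x}_t \in \map{}_k}\! \bigl((\polyLowExp{k})_{j,t} \cdot \texttt{x}_t\bigr) + (\polyLowConst{k})_j \;\le\; H_{k,j} \;\le\; \textstyle\sum_{\texttt{x}_t \in \map{}_k}\! \bigl((\polyUpExp{k})_{j,t} \cdot \texttt{x}_t\bigr) + (\polyUpConst{k})_j,
\]
valid for the variable assignment witnessing $H$. Multiplying each neighbor's lower-bound inequality by the nonnegative scalar $\Tilde{\adj}_{\node,k}$ and summing over $k \in \neighbors{\node}{1}$ preserves the inequality direction, and likewise for the upper bounds. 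This yields
\[
\textstyle\sum_{k}\sum_{\texttt{x}_t}\! \bigl(\Tilde{\adj}_{\node,k}(\polyLowExp{k})_{j,t}\cdot\texttt{x}_t\bigr) + \sum_{k}\Tilde{\adj}_{\node,k}(\polyLowConst{k})_j \;\le\; \gc(H)_{\node,j} \;\le\; \textstyle\sum_{k}\sum_{\texttt{x}_t}\! \bigl(\Tilde{\adj}_{\node,k}(\polyUpExp{k})_{j,t}\cdot\texttt{x}_t\bigr) + \sum_{k}\Tilde{\adj}_{\node,k}(\polyUpConst{k})_j,
\]
which is precisely the pair of inequalities encoded by the abstract element $\gc^{\#_P}(\absele)$ once one identifies the horizontal concatenation $[\Tilde{\adj}_{\node,k}\cdot\polyLowExp{k}]_{k\in\neighbors{\node}{1}}$ (resp.\ upper) with the coefficient matrix over the enlarged variable set $\map'_\node = \bigcup_{k}\map{}_k$, and the sums $\sum_k \Tilde{\adj}_{\node,k}\polyLowConst{k}$ (resp.\ upper) with the constant vectors. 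The only bookkeeping point is the treatment of variables shared among neighbors: as the paper notes, overlapping columns are summed, which is just associativity/commutativity of the finite sum and does not affect the inequalities. I would also note the perturbation constraints $\perturba{\adj}$ carry over unchanged, since $\gc^{\#_P}$ does not alter the input-feature variables or their admissible range, only the symbolic expression bounding the latent feature.

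I do not expect a genuine obstacle here — the argument is a direct analogue of \Cref{lemma:poly_linear} with the role of the weight matrix $\weight$ played by the (entrywise nonnegative) normalized adjacency entries, so there is no need to split into positive and negative parts. The mildly delicate part is purely notational: correctly matching the block/concatenation structure of $\polyLowExp{\prime\node}$ and $\polyUpExp{\prime\node}$ against the expanded variable set $\map'_\node$, and handling the overlap-merging of columns cleanly. I would dispatch that by remarking that the concretization $\gamma$ depends only on the linear inequality each row represents, not on how the coefficients are indexed, so column merging is a semantics-preserving rewriting. This completes the overapproximation argument for $\gc^{\#_P}$.
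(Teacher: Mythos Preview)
Your proposal is correct and follows essentially the same approach as the paper: both arguments hinge on the nonnegativity of the normalized adjacency entries $\Tilde{\adj}_{\node,k}$, which guarantees that multiplying each neighbor's symbolic lower/upper bound by $\Tilde{\adj}_{\node,k}$ and summing over $k\in\neighbors{\node}{1}$ preserves the inequality directions. Your write-up is, if anything, a bit more explicit than the paper's about matching the resulting sums to the concatenated coefficient matrices and the enlarged variable set $\map'_\node$, and about the column-merging bookkeeping for overlapping variables; but there is no substantive difference in method.
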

\begin{proof} (\Cref{lemma:poly_gc})
    For any abstract element $\absele$, let $H \in \gamma(\absele)$ be \emph{any} concretization of the element for \emph{any} perturbation region $\perturba{\adj{}}$. Then for the latent feature variable $\texttt{h}_{i, j}$, we have $\sum_{\texttt{x}_k \in \map{}_i}\! \bigl((\polyLowExp{\node{}})_{j,k} \cdot \texttt{x}_j\bigr) + (\polyLowConst{\node{}})_j \le \texttt{h}_{i,j} \le \sum_{\texttt{x}_k \in \map{}_i}\! \bigl((\polyUpExp{\node{}})_{j,k} \cdot \texttt{x}_k\bigr) + (\polyUpConst{\node{}})_j$.
    Let $(\texttt{h}_{i, j})_{\le}$ and $(\texttt{h}_{i, j})_{\ge}$ be the minimum and maximum value of $\texttt{h}_{i, j}$ governed by $\perturba{\adj}$. We have $(\texttt{h}_{i, j})_{\le} \le \texttt{h}_{i,j} \le (\texttt{h}_{i, j})_{\ge}$. Notice that each entry of $\Tilde{A}$ is non-negative. Let $H' = \gc({H})$, $\sum_{v\in \neighbors{i}{1}}{\tilde{A}_{k, j}(\texttt{h}_{i, j})_{\le}}\le\sum_{v\in \neighbors{i}{1}}{\tilde{A}_{k, j}\cdot\texttt{h}_{k, j}}\le\sum_{v\in \neighbors{i}{1}}{\tilde{A}_{k, j}(\texttt{h}_{i, j})_{\ge}}$. At the same time, let $\texttt{h}'_{i,j}$ be the latent feature variable after applying the $\gc^{\#_P}$ operation, we have $\sum_{v\in \neighbors{i}{1}}{\tilde{A}_{k, j}(\texttt{h}_{i, j})_{\le}}\le \texttt{h}'_{i, j}\le\sum_{v\in \neighbors{i}{1}}{\tilde{A}_{k, j}(\texttt{h}_{i, j})_{\ge}}$. Thus, $(\texttt{h}'_{i, j})_{\le} \le \texttt{h}'_{i,j} \le (\texttt{h}'_{i, j})_{\ge}$ for any node $\node$ and feature $\feature$. And this proves the overapproximation of $\gc^{\#_P}$
\end{proof}

\begin{lemma}
    \label{lemma:poly_relu}
    The $\relu^{\#_P}$ abstract operation overapproximate the behavior $\relu$
\end{lemma}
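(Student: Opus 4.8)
The plan is to mirror the structure of the proofs of Lemmas~\ref{lemma:poly_linear} and~\ref{lemma:poly_gc}. Fix an arbitrary abstract element $\absele$, an arbitrary perturbation region $\perturba{X}$, and an arbitrary concretization $H \in \gamma(\absele)$ (witnessed by some $X' \in \perturba{X}$ realizing the bound expressions); I will show $\relu(H) \in \gamma(\relu{}^{\#_P}(\absele))$. Since $\relu$ and $\relu{}^{\#_P}$ act coordinate-wise and $\relu{}^{\#_P}$ introduces no new variables ($\map'_\node = \map_\node$), it suffices to fix one node $\node$ and feature $\feature$ and compare $\relu(H_{\node\feature})$ against the new lower- and upper-bound expressions evaluated at $X'$. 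Write $L_{\node\feature} \le H_{\node\feature} \le U_{\node\feature}$ for the old bound-expression values at $X'$ (which hold because $H \vDash \absele$), and recall the two elementary facts $\relu(x) \ge x$ and $\relu(x) \ge 0$ for every $x$.

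The one external input I would invoke is that the numerical bounds $lo_{\node\feature}, up_{\node\feature}$ used to select the case soundly enclose every concretized value, i.e.\ $lo_{\node\feature} \le H_{\node\feature} \le up_{\node\feature}$. These bounds come from interval \absint{}, which over-approximates the reachable latent values by Theorem~\ref{thm:ai_soundness} and is run from the same input abstraction $\absele_0$ in lockstep with the polyhedra analysis; maintaining this enclosure as an invariant is precisely what lets the secant relaxation, valid only on $[lo_{\node\feature}, up_{\node\feature}]$, be applied at $H_{\node\feature}$. I would also recall from the proof of Theorem~\ref{thm:relu_bounding} that when $lo_{\node\feature} < 0 < up_{\node\feature}$, convexity of $\relu$ gives $\relu(x) \le s x + t$ on $[lo_{\node\feature}, up_{\node\feature}]$, where $s = up_{\node\feature}/(up_{\node\feature} - lo_{\node\feature}) \ge 0$ and $t = -up_{\node\feature} lo_{\node\feature}/(up_{\node\feature} - lo_{\node\feature})$ are exactly the coefficients appearing in the definition of $\relu{}^{\#_P}$ for cases (iii) and (iv).

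Then I carry out the four-way case split of the $\relu{}^{\#_P}$ definition. In case (i) ($lo_{\node\feature} \ge 0$), $H_{\node\feature} \ge 0$ so $\relu(H_{\node\feature}) = H_{\node\feature}$, and since the bounds are kept unchanged the old inequalities transfer verbatim. In case (ii) ($up_{\node\feature} < 0$), $H_{\node\feature} < 0$ so $\relu(H_{\node\feature}) = 0$, which is exactly the new constant lower and upper bound. In cases (iii) and (iv), for the lower bound it suffices that $\relu(H_{\node\feature}) \ge H_{\node\feature} \ge L_{\node\feature}$ and $\relu(H_{\node\feature}) \ge 0$, so the new lower inequality holds whether it is the kept expression (iii) or the constant $0$ (iv) --- the comparison of $|up_{\node\feature}|$ against $|lo_{\node\feature}|$ only governs which choice is tighter, not soundness. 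For the upper bound in (iii)/(iv), the new expression evaluates to $s\,U_{\node\feature} + t$; applying the secant inequality at $H_{\node\feature}$ and then monotonicity in $s \ge 0$ together with $H_{\node\feature} \le U_{\node\feature}$ yields $\relu(H_{\node\feature}) \le s\,H_{\node\feature} + t \le s\,U_{\node\feature} + t$. Reassembling the coordinates gives $\relu(H) \in \gamma(\relu{}^{\#_P}(\absele))$.

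The per-coordinate case analysis is routine; the genuinely delicate point is the consistency assumption flagged above --- that every point of $\gamma(\absele)$ lies within the interval $[lo_{\node\feature}, up_{\node\feature}]$ used in the case split, since the secant upper bound is only valid there. I would address this by stating explicitly that the interval analysis is maintained alongside the polyhedra analysis from the common input abstraction, so its soundness (Theorem~\ref{thm:ai_soundness}) supplies the required enclosure. With that in place the lemma follows, and, combined with Lemmas~\ref{lemma:input_abstraction},~\ref{lemma:poly_linear}, and~\ref{lemma:poly_gc} and the over-approximation composition argument of Theorem~\ref{thm:ai_soundness}, it feeds directly into the proof of Theorem~\ref{thm:poly_sound}.
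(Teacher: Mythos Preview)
Your proposal is correct and follows essentially the same coordinate-wise case split on the interval bounds $lo_{\node\feature}, up_{\node\feature}$ as the paper's proof. If anything, you are more careful than the paper: you track the symbolic bound expressions $L_{\node\feature}, U_{\node\feature}$ explicitly (the paper's Case~3 argues only at the level of numerical endpoints), and you flag and justify the enclosure assumption $lo_{\node\feature} \le H_{\node\feature} \le up_{\node\feature}$ via soundness of the interval analysis, which the paper simply asserts.
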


\begin{proof} (\Cref{lemma:poly_relu})
    Since $\relu^{\#_P}$ applies to each variable independently, we show the proof that for any latent variable $\texttt{h}_{i, j}$ with numerical bound $lo_{i, j}$ and $up_{i, j}$. Let $H \in \gamma(\absele)$, we have $lo_{i, j} \le H_{i, j} \le up_{i, j}$. Let $\textit{h}'_{i,j}$ be the latent of the abstract element after $\relu^{\#_P}$. We show the prove with three cases:

    \textbf{Case 1:} if $lo_{i, j} \ge 0$, we have $H'_{i, j} = \relu(H_{i, j}) = H_{i, j}$ and $\textit{h}'_{i,j} = \textit{h}_{i,j}$. Thus $lo'_{i, j} \le H'_{i, j} \le up'_{i, j}$.
    \textbf{Case 2:} if $up_{i, j} < 0$, we have $H'_{i, j} = \relu(H_{i, j}) = 0$ and $\textit{h}'_{i,j} = 0$, thus $lo'_{i, j} \le H'_{i, j} \le up'_{i, j}$.
    \textbf{Case 3:} if $lo_{i, j} < 0 < up_{i, j}$. We have $0 \le H'_{i, j} \le up_{i, j}$. For $\textit{h}'_{i,j}$, we have one upper bound and two potential lower bounds. For the upper bound, we have $\textit{h}'_{i,j} \le \frac{up_{i, j}}{up_{i, j}-lo_{i, j}}\textit{h}_{i,j} - \frac{up_{i, j} \cdot lo_{i, j}}{up_{i, j}-lo_{i, j}} \le up_{i, j} = up'_{i, j}$. Thus, $H'_{i, j} \le up'_{i, j}$. For the first lower bound $lo'_{i, j} = 0 \ge \textit{h}'_{i,j}$ and for the second lower bound $lo'_{i, j} = lo_{i, j} \le \textit{h}_{i,j} \ge \textit{h}'_{i,j}$. In both cases, we have $lo'_{i, j} \le H'_{i, j}$.

    Overall, for any $lo_{i, j} \le H_{i, j} \le up_{i, j}$, we have $lo'_{i, j} \le H'_{i, j} \le up'_{i, j}$. And this proves the overapproximation of $\relu^{\#_P}$
\end{proof}

Then, by the overapproximation of abstract interpretation, the polyhedra interpretation with the defined input abstraction and abstract operators will provide a sound overapproximation of the GCN's behavior.

\begin{lemma}[Overapproximation of polyhedra \absint]
    \label{lemma:poly_overapproximate}
    Let $(\abstractDomain_P, \gamma, \operationSet^{\#_P})$ be a polyhedron abstract interpretation with $\operationSet^{\#_P} = \{\linear{\weight, \bias}^{\#_P}, \gc^{\#_P}, \relu^{\#_P}\}$ and the input abstraction, then it overapproximate the behaviour of the GCN classifier.
\end{lemma}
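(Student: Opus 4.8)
The plan is to obtain this lemma as an immediate corollary of the general over‑approximation result (\autoref{thm:ai_soundness}), after verifying the two conditions that theorem needs for a concrete instantiation. \autoref{thm:ai_soundness} states that \emph{any} abstract interpretation $(\abstractDomain, \gamma, \operationSet^{\#})$ for a GCN over‑approximates the classifier, provided (i) the input abstraction $\absele_0$ satisfies $\perturba{X} \subseteq \gamma(\absele_0)$, and (ii) every abstract operator $\operation^{\#} \in \operationSet^{\#}$ over‑approximates its concrete set‑valued counterpart, i.e.\ $\overline{\operation}(\gamma(\absele)) \subseteq \gamma(\operation^{\#}(\absele))$ for all $\absele \in \abstractDomain$. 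Hence the whole task reduces to checking that the polyhedra domain $\abstractDomain_P$, its input abstraction, and the operator set $\operationSet^{\#_P} = \{\linear{\weight,\bias}^{\#_P}, \gc^{\#_P}, \relu^{\#_P}\}$ meet these two requirements.

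First I would discharge (i): this is exactly \Cref{lemma:input_abstraction}, which shows that the identity‑matrix input abstraction $\absele_0$ concretizes to precisely the perturbation space, since $\gamma(\absele_0) = \{H \mid H \vDash \absele_0 \cup \perturba{X}^{\localp,\globalp}\}$ and every feature variable bounds only itself in $\absele_0$, so in particular $\perturba{X} \subseteq \gamma(\absele_0)$. Next I would discharge (ii): \Cref{lemma:poly_linear}, \Cref{lemma:poly_gc} and \Cref{lemma:poly_relu} establish the over‑approximation property for $\linear{\weight,\bias}^{\#_P}$, $\gc^{\#_P}$ and $\relu^{\#_P}$ respectively, and $\operationSet^{\#_P}$ consists of exactly these three operators, since every GCN layer is a composition of a graph convolution, a fully connected map and a ReLU. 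Feeding both facts into \autoref{thm:ai_soundness} then yields $\overline{\classifier}(\adj,\perturba{X}) \subseteq \gamma(\classifier^{\#_P}(\absele))$ for every $\absele$ with $\perturba{X} \subseteq \gamma(\absele)$, which is the assertion.

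The one subtle point is bridging the \emph{node‑level} phrasing of \Cref{lemma:poly_linear}--\Cref{lemma:poly_relu} with the \emph{whole‑element} hypothesis of \autoref{thm:ai_soundness}. The abstract operators act on each node's tuple $(\coef_i, \map_i)$ independently, and an abstract element $\absele = (\coef_i, \map_i)_{i=1}^{\numNodes}$ is the product of these node‑level pieces, so $\gamma(\absele)$ is the set of matrices whose $i$‑th row satisfies node $i$'s inequalities together with the shared perturbation constraints. I would therefore observe that concatenating the row‑wise containments supplied by the per‑node lemmas over all $i$ gives the required $\overline{\operation}(\gamma(\absele)) \subseteq \gamma(\operation^{\#}(\absele))$ for the full element; for $\gc^{\#_P}$ this additionally uses that variable sets are merged by union, so the shared input perturbation constraints of $\perturba{X}^{\localp,\globalp}$ are carried forward unchanged to every layer.

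Since all the analytical content already lives in the preceding lemmas, no genuinely hard step remains; the main obstacle is organizational — keeping the perturbation‑space constraints threaded consistently through the concretization at each layer (they only ever restrict the input variables in the $\map_i$), and reconciling the node‑independent definitions of the operators with the product structure of $\abstractDomain_P$.
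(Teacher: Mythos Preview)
Your proposal is correct and follows essentially the same route as the paper: invoke \Cref{lemma:input_abstraction} for the input abstraction, \Cref{lemma:poly_linear,lemma:poly_gc,lemma:poly_relu} for the three operators, and then apply \autoref{thm:ai_soundness}. Your additional paragraph on reconciling the per-node lemmas with the whole-element hypothesis is a helpful elaboration that the paper's proof leaves implicit.
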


\begin{proof} (\Cref{lemma:poly_overapproximate})
    By \Cref{lemma:poly_linear,lemma:poly_gc,lemma:poly_relu}, all three operations \(\overline{\operation}(\concreteFunc(\absele)) \subseteq \concreteFunc(\operation^\# (\absele))\) for all \(\absele \in \abstractDomain\). Let $\perturba{X} \in \concreteDomain$ be any perturbation region, and $\absele \in \abstractDomain$ be the abstract element after applying the input abstraction to $\perturba{X}$. By \Cref{lemma:input_abstraction}, we have \(\perturba{X} \subseteq \concreteFunc(\absele)\). These satisfy the precondition of \autoref{thm:ai_soundness}, and thus, the polyhedra abstract interpretation provides an over approximation. 
\end{proof}

\begin{proof} (\autoref{thm:poly_sound})
    To prove \autoref{thm:poly_sound}, we need to prove for any node $\node$, the estimated minimum difference between the original label and any other label $\delta^*_{\node, \lab, \lab'}$ from the polyhedra abstract interpretation approach is the lower bound of the actual minimum difference for all possible outputs in the perturbation region. If this is true, then by the definition of $\nodeRC{\node{}}$ and $\nodeR{\node{}}$, we will always have $\nodeRC{\node{}} \le \nodeR{\node{}}$ for any node $\node$. Since both values are binary, this property can imply $\nodeRC{\node{}} \implies \nodeR{\node{}}$, which is the soundness property. 

    Given $\graph = (\adj, \nodeF)$ as the input graph with perturbation region $\perturba{\adj}$, let $\concreteElement_0 \in \concreteDomain$ be the set of all possible input feature matrix in the perturbation region. Let $\absele_0 \in \abstractDomain_P$ be abstraction of $\concreteElement_0$ using he input abstraction. From \Cref{lemma:input_abstraction}, we have $\concreteElement_0 \subseteq \gamma(\absele_0)$. Furthermore, Given a GCN $\classifier$, its set version $\overline{\classifier}$, let $\classifier^{\#_P}$ be the abstract version of $\classifier$ using polyhera abstract operations. Let $\concreteElement_\numLayers = \overline{\classifier}(A, \concreteElement_0)$ and $\absele_\numLayers = \classifier^{\#_P}(A, \absele)$. From \Cref{lemma:poly_overapproximate}, we have $\concreteElement_\numLayers \subseteq \gamma(\absele_\numLayers)$

    For any node $\node$ with original label $\lab$ and another label $\lab' \ne \lab$, let $\concreteElement_{\numLayers, \node} = \{H_i|H\in\concreteElement_\numLayers\}$ be the set of all output score vectors of $\node$. Then the set of all possible differences between output score values in $\concreteElement_{\numLayers, \node}$ is $\concreteElement'_{\numLayers, \node, \lab'} = \overline{\linear{\Delta^{\lab, \lab'}, 0}}(\concreteElement_{\numLayers, \node})$. Let $(\coef{}_i, \map_i) \in \absele_\numLayers$ be the part of abstract element for node $\node$.  And the abstraction of all possible differences is given by $\coef'_{\node, \lab'} = \linear{\Delta^{\lab, \lab'}, 0}^{\#_P}(\coef_{i})$. Now, since we focus on node $\node$, let us define a a new abstract element for node $\node$ only: $\absele_i = (\coef'_{\node, \lab'}, \map'_i)$. Let $\hat{\delta}^*_{\node, \lab, \lab'} = min(\concreteElement'_{\numLayers, \node, \lab'})$ be the actual minimum difference and $\delta^*_{\node, \lab, \lab'}$ be the solution calculated using the steps in \autoref{sec:rbst-cert}. Notice that $\delta^*_{\node, \lab, \lab'} = min(\gamma(\absele'))$ since it is the solution to the minimization problem for the concretization of an abstract element. Since $\concreteElement'_{\numLayers, \node, \lab'}\subseteq \gamma(\absele')$, we can derive $\delta^*_{\node, \lab, \lab'} \le \hat{\delta}^*_{\node, \lab, \lab'}$

    Then, for any node $\node$, let $\hat{\delta}^*_{\node, \lab} = [\hat{\delta}^*_{\node, \lab, \lab'}]_{\lab' \ne \lab}$ and $\delta^*_{\node, \lab} = [\delta^*_{\node, \lab, \lab'}]_{\lab' \ne \lab}$, we can have $min(\delta^*_{\node, \lab}) \le min(\hat{\delta}^*_{\node, \lab})$. And this proves the soundness of the polyhedra abstract interpretation. 

\end{proof}

\subsection{Proof for \autoref{thm:poly_compelte}}
\label{append:proof_poly_complete}

\begin{apdthm} (Completeness of counterexamples)
\label{apdthm:poly_compelte}
$\certifier{}_{\ge}: (\adj{}, \perturba{\nodeF{}}, \classifier{}) \to \realDomain{}^\numNodes{}$ is complete and provides an upper bound for graph-level robustness.
\end{apdthm}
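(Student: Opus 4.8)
\section*{Proof proposal for \texorpdfstring{\autoref{thm:poly_compelte}}{the completeness theorem}}

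The plan is to prove completeness through its contrapositive. By \autoref{def:soundness}, $\certifier_{\ge}$ is complete iff $\nodeR{\node} \implies \nodeRC{\node}$ for every node $\node$; since $\nodeRC{\node}$ holds exactly when $r^{*\prime}_i > 0$, i.e.\ $r^{*\prime}_i = 1$, and both $\nodeR{\node}$ and $\nodeRC{\node}$ are Boolean, it suffices to show that $r^{*\prime}_i = 0$ implies that $\node$ is \emph{not} a robust node. By the definition of $\certifier_{\ge}$, $r^{*\prime}_i = 0$ means that the candidate $\nodeF^{\node c'}$ constructed in Step~2 of \autoref{sec:rbst-cert} is, for some label $c' \ne c$, a counterexample, i.e.\ $\argmax{\classifier(\adj, \nodeF)_\node} \ne \argmax{\classifier(\adj, \nodeF^{\node c'})_\node}$. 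So the only remaining obligation is to check that this counterexample is \emph{admissible}, namely $\nodeF^{\node c'} \in \perturb{\localp}{\globalp}{\nodeF}$.

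For that I would track the greedy selection of Step~2. Recall $\nodeF^{\node c'}$ differs from $\nodeF$ precisely on the entries indexed by $\set{X}^{min}_{i,c'}$. Step~2(v) retains, for each node $k \in \neighbors{\node}{\layer}$, at most $\localp$ variables of that node (via $\op{MinK}_{\localp}$, after clamping non-improving changes to $0$ with $\min(\cdot,0)$), so at most $\localp$ entries of row $k$ of $\nodeF^{\node c'}$ are flipped; hence $\sum_j (|\nodeF^{\node c'} - \nodeF|)_{k,j} \le \localp$ for all $k$. Step~2(vi) then keeps at most $\globalp$ of these variables overall (via $\op{MinK}_{\globalp}$), so $\sum_{k,j}(|\nodeF^{\node c'} - \nodeF|)_{k,j} \le \globalp$. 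As the flipped entries are flips of binary features, $\nodeF^{\node c'}$ remains a valid (binary) feature matrix, and both the local and global constraints defining $\perturb{\localp}{\globalp}{\nodeF}$ hold, so $\nodeF^{\node c'} \in \perturba{\nodeF}$. Combining this with the fact that $\nodeF^{\node c'}$ changes the prediction of $\node$ gives $R_{\classifier}(\graph, \nodeF^{\node c'}, \node) = 0$ with $\nodeF^{\node c'} \in \perturba{\nodeF}$; by $\nodeR{\node} := \forall \nodeF' \in \perturba{\nodeF}\colon R_{\classifier}(\graph, \nodeF', \node) > 0$ this witnesses $\neg\nodeR{\node}$. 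Taking the contrapositive yields $\nodeR{\node} \implies \nodeRC{\node}$, i.e.\ completeness.

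Finally, the upper-bound claim is immediate from the general observation following \autoref{def:soundness}: a complete certifier flags every robust node, and since $\graphRC$ and $\graphR$ are ratios of flagged/robust nodes over the same $\numNodes$ nodes, $\graphRC \ge \graphR$. The only real subtlety in the argument — there is no deep obstacle here, the hard design work already went into Step~2 — is the budget bookkeeping: one must verify that the $\op{MinK}_{\localp}$-then-$\op{MinK}_{\globalp}$ selection produces $\set{X}^{min}_{i,c'}$ respecting both the per-node limit $\localp$ and the global limit $\globalp$ simultaneously, including the degenerate cases where fewer than $\localp$ (resp.\ $\globalp$) features have a strictly negative change, in which case $\set{X}^{min}_{i,c'}$ is merely smaller and both constraints hold a fortiori. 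A minor point worth stating explicitly is that ``wasted'' flips (selected variables whose change is $0$) can only fail to produce a counterexample, in which case $r^{*\prime}_i = 1$ and no claim is made, so they never threaten completeness.
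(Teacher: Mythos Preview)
Your proposal is correct and follows essentially the same contrapositive/contradiction argument as the paper: if $r^{*\prime}_i = 0$ then a counterexample $\nodeF^{\node c'}$ changing the prediction of $\node$ has been found, which refutes $\nodeR{\node}$. You are in fact more careful than the paper's own proof, since you explicitly verify that the greedy $\op{MinK}_{\localp}$-then-$\op{MinK}_{\globalp}$ selection guarantees $\nodeF^{\node c'} \in \perturb{\localp}{\globalp}{\nodeF}$, a step the paper leaves implicit.
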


\begin{proof}
    Assume that $\certifier_\ge$ is not complete, let $\node$ be a node such that $\node$ is robust ($\nodeR{\node} = 1$) and  $\nodeRCV{\node}{\certifier_{\ge}}=0$, thus $r^{*\prime}_i=0$. From \autoref{equ:node_robustness_min}, node $\node$ is robust only if:
    \begin{equation*} 
    \forall \nodeF{}' \in \perturba{\nodeF{}}\colon \argmax{\classifier(\adj{}, \nodeF{})_\node{}} = \argmax{\classifier(\adj{}, \nodeF{}')_\node{}}.\\
    \end{equation*}
    However, by the definition of $\certifier_\ge$, a counter example is identified for $\node$ such that $argmax(\classifier(\adj,\nodeF)_\node) \ne argmax(\classifier(\adj,\nodeF^{\node\lab'})_\node)$. This contradicts with node $\node$ being robust. Thus, using proof by contradiction, we can conclude that $\certifier_\ge$ is complete. And a complete certifier produces an upper bound on the robustness. 
\end{proof}




\subsection{Implementation details}
The polyhedra abstract operations can be implemented using matrix operations since the transformation of each node feature is independent with a straightforward modification: the input abstraction does not give an identity matrix. Instead, it yields a matrix of shape $\numFeatures_0 \times (\numNodes \times \numFeatures_0)$ with the entry corresponding to feature $j$ of node $i$ being 1. This matrix may be memory intensive if $\numNodes$ is large. Nevertheless, we can mitigate this by (1) using a sparse matrix and (2) using batched computation such that $\numNodes$ is not the entire graph but the subgraph of $\layer$-hop neighbors from the nodes in the batch. 
With matrix operations, the certification can be run efficiently on GPUs. Furthermore, the certification process is differentiable such that the certifier can be combined with computation frameworks such as PyTorch or TensorFlow for robust training.

\subsection{Training configuration}
\label{sec:training_config}
We implement both \emph{\Ours{}} and \emph{\Interval{}} approaches in Pytorch. We randomly split the nodes into training and testing sets for five random nodes and trained with the same random nodes. 
During training, we use a learning rate of 0.005 and weight decay of 0.00001 with an Adam optimizer. We use a batch size of 8 and accumulate a gradient for every five batches, making a 40-node batch per gradient update. We use node sampling during training. We set the training step to 1500 for experiments with labeled nodes and 15000 for experiments with both labeled and unlabeled nodes for all datasets. Finally, there are some nodes in Pubmed causing the \Dual{} approach to crash; we exclude these nodes during training for both approaches. For the hinge loss, we use a margin of $log\frac{90}{10}$ for labelled nodes and $log\frac{60}{40}$ for unlabelled nodes, same as previous work \cite{zugner2019certifiable}. 


\end{document}